\DeclareMathOperator*{\argmin}{arg\,min}
\newtheorem{assumption}{Assumption}
\newtheorem{theorem}{Theorem}[section]
\newtheorem{proposition}[theorem]{Proposition}
\newtheorem{corollary}[theorem]{Corollary}
\newtheorem{remark}[theorem]{Remark}
\title{Optimal Implicit Bias in Linear Regression}
\author{
  K Nithin Varma, Babak Hassibi \\
  California Institute of Technology\\
  Pasadena\\
  \texttt{\{nkanumur, hassibi\}@caltech.edu} \\
}
\begin{document}
\maketitle

\begin{abstract}
Most modern learning problems are over-parameterized, where the number of learnable parameters is much greater than the number of training data points. In this over-parameterized regime, the training loss typically has infinitely many global optima that completely interpolate the data with varying generalization performance. The particular global optimum we converge to depends on the implicit bias of the optimization algorithm. The question we address in this paper is, ``What is the implicit bias that leads to the best generalization performance?". To find the optimal implicit bias, we provide a precise asymptotic analysis of the generalization performance of interpolators obtained from the minimization of convex functions/potentials for over-parameterized linear regression with non-isotropic Gaussian data. In particular, we obtain a tight lower bound on the best generalization error possible among this class of interpolators in terms of the over-parameterization ratio, the variance of the noise in the labels, the eigenspectrum of the data covariance, and the underlying distribution of the parameter to be estimated. Finally, we find the optimal convex implicit bias that achieves this lower bound under certain sufficient conditions involving the log-concavity of the distribution of a Gaussian convolved with the prior of the true underlying parameter.
\end{abstract}

\section{Introduction}

Classical statistical learning theory mainly studies problems in data-rich regimes where the number of data points is much greater than the number of unknown parameters to be learned/estimated. In contrast, most modern learning problems like deep learning are typically highly overparameterized, i.e., the problem dimension $n$ to be much greater than the number of training data points $m$. Due to this over-parameterization, these models possess the capacity to completely fit any set of training data (even possibly random) \cite{zhang2016understanding}. Despite this overfitting, these models surprisingly generalize well on unseen data, and this so-called \emph{double descent} phenomenon was observed, for example, in \cite{belkin2019reconciling,muthukumar2020harmless}.  In this so-called \emph{interpolating} regime \cite{ma2018power} of over-parameterized models, there generally exist (infinitely) many global optima of weights that interpolate the data with varying generalization properties. The particular convergent global optima depend on the choice of the optimization algorithm used, which, therefore, determines its generalization performance. This inherent tendency of the optimizer towards certain global optima is termed its \emph{implicit bias}, and the success of deep learning has been mainly attributed to the implicit bias properties of the commonly used optimization algorithms, like stochastic gradient descent (SGD) and its variants \cite{neyshabur2017geometry}.

The implicit bias of gradient descent (GD) and SGD was studied in \cite{soudry2018implicit,gunasekar2018characterizing,gunasekar2018implicit}. Mirror Descent (MD) \cite{nemirovski1983problem}, and its stochastic counterpart, stochastic mirror descent (SMD), generalize (S)GD to a family of gradient-based algorithms, where the gradient updates happen in the so-called \emph{mirrored} domain characterized by a differentiable convex potential function $\Psi$. In the special case, when the potential is the squared Euclidean norm, SMD recovers SGD. Similar to SGD, SMD enjoys implicit regularization properties, and it was shown in \cite{gunasekar2018characterizing,azizan2019stochastic} that SMD converges to the interpolating global optima that minimize its convex potential $\Psi$ for linear models under regression losses with appropriate conditions on initialization. To be precise, if we consider the data $\{\bm x_i,y_i\}_{i=1}^{m}$, where $\bm x_i \in \mathbb{R}^n$ and $y_i \in \mathbb{R}$ for $i\in [m]$ and a strictly convex differentiable potential $\Psi: \mathbb{R}^n \xrightarrow[]{} \mathbb{R}$, when $n>m$, it was shown that when SMD is initialized at $\bm{\beta}_0$ such that $\bm{\beta}_0 := \argmin_{\bm{\beta}} \Psi(\bm{\beta})$ or equivalently $\nabla \Psi(\bm{\beta}_0) = 0$, it converges to the solution $\bm{\hat \beta} \in \mathbb{R}^d$ defined as 
\begin{equation}
\label{eq: implicit bias}
 \bm{\hat \beta}:= \argmin_{\bm \beta \in \mathbb{R}^n} \Psi(\bm \beta) \quad \text{s.t} \quad y_i = \bm x_i^{T} \bm\beta \;\text{for $i \in [m]$}.  
\end{equation} 
When $\Psi$ is chosen as the Euclidean norm squared, we see that SGD converges to the minimum $\ell_2$ normed interpolating solution. These results were also shown to approximately translate to non-linear models (like deep learning), where SMD converges close to the minimum potential interpolating solution \cite{azizan2021stochastic}. In \cite{azizan2019stochastic}, it was empirically observed that the generalization ability of the ResNet-18 network (with 11 million weights) trained on the CIFAR-10 dataset varied across different potentials of SMD as shown in Figure \ref{fig: ge cifar}. It was seen that $\ell_{1}$-norm SMD, which gave a sparse solution, performed relatively worse compared to SGD, and surprisingly, the $\ell_{10}$-norm SMD performed the best and even outperformed SGD. Extensive numerical evaluations of SMD were also done in \cite{sun2022mirror,sun2023unified} across different models and datasets, and it was observed that different potentials of SMD had varying relative performance across different models and datasets. 
\begin{figure}[h]
    \label{fig: ge cifar}
    \centering
    \includegraphics[width=.5\textwidth]{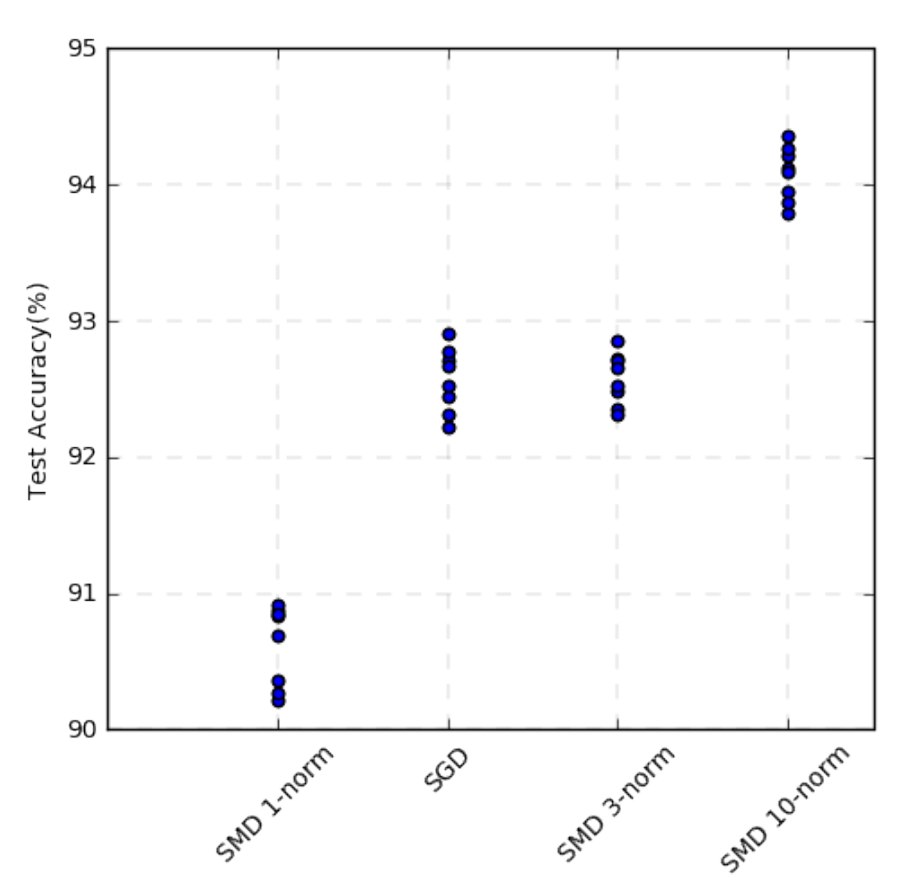}
    \caption{Generalization performance of ResNet-18 on the CIFAR-10 using different SMD algorithms. \cite{azizan2021stochastic}.}
\end{figure}
This dependence of the generalization performance on the choice of convex potential, the structure of the data, and the architecture of the model used remains an open problem, and little is understood, even theoretically, about the generalization performance for linear models for general convex potentials as implicit bias.

In this paper, we take a first step towards understanding the surprising phenomenon observed in Figure \ref{fig: ge cifar} by studying the problem of linear regression in the overparameterized regime. We characterize the precise asymptotic generalization performance of the solution in \eqref{eq: implicit bias} and provide answers to the following question:
$$
\text{\emph{What is the optimal choice of implicit bias that achieves the minimal generalization error?} }
$$
Previously, \cite{amari2020does} studied the role of preconditioner in preconditioned gradient descent (PGD) on the generalization error of interpolators obtained from linear regression. Later, \cite{oravkin2021optimal} made their arguments precise and characterized the optimal interpolator among the class of interpolators that can be obtained as a linear function labels/outputs $y$, which ends up being the implicit bias of PGD with an appropriately chosen preconditioner and initialization point. Since PGD is a special case of MD with quadratic convex potentials, their results on the optimal interpolator depend only on the second moments of the underlying unknown signal. In contrast, our work considers fundamental limits on a much broader class of interpolators that are obtained as minimizers of convex functions subject to interpolation. Due to this richness in the class of interpolators, as we'll see, our analysis and results depend more closely on the distribution of the underlying signal through quantities like Fisher information rather than just the second moments.

\subsection{Notation}
We use $[k]$ to denote the set $\{1,\cdots,k\}$. We use boldface lowercase letters $\bm{x},\bm{y},\bm{\mu},\cdots$ to denote vectors and boldface uppercase letters $\bm{X},\bm{Y},\bm{M},\cdots$ for matrices. $\bm{I_n}\in \mathbb{R}^{n\times n}$ denotes the identity matrix. For a random variable $B$ with a differentiable density $P_{B}(b)$, we define its \emph{Fisher Information} $\mathcal{I}(B):= \mathbb{E}[(P^{'}_{B}(B) /P_{B}(B))^2]$. We define the \emph{Moreau envelope} $\mathcal{M}_{\psi}(x; \alpha):= \min_{y \in \mathbb{R}} \psi(y) + \frac{1}{2\alpha}(x-y)^2$ and its corresponding \emph{proximal operator} $\text{prox}_{\psi}(x; \alpha):= \argmin_{y \in \mathbb{R}} \psi(y) + \frac{1}{2\alpha}(x-y)^2 $ for $\psi:\mathbb{R}\xrightarrow[]{} \mathbb{R}$ and $\alpha >0$. The first order derivative of the Moreau envelope with respect to $x$ is denoted as $\mathcal{M}^{'}_{\psi,1}(x; \alpha):= \frac{\partial \mathcal{M}_{\psi}(x; \alpha)}{\partial x}$. For a sequence of random variables $\mathcal{X}_n$ that converges in probability to some constant $c$ in the limit according to Assumption \ref{ass: HDA}, we drop the subscript and simply write $\mathcal{X}\xrightarrow[]{P}c$.

\section{Problem Setup}
\label{sec: PS}

We consider the problem of linear regression in the over-parameterized regime. We model the data $\{\bm x_i,y_i\}_{i=1}^{m}$ be generated from an additive noisy linear model:
\begin{equation}
\label{eq: linear regression}
y_i = \bm{x_i}^{T}\bm{\beta}^{*} + z_i \quad \text{for} \quad i=1,\cdots,m
\end{equation}
where labels/outputs $y_i \in \mathbb{R}$ are a linear function of the covariate vectors $\bm{x}_i \in \mathbb{R}^n$ perturbed by unknown noise $z_i \in \mathbb{R}$. Here, $\bm{\beta}^{*} \in \mathbb{R}^n$ is the true unknown model/weights to be estimated through learning. The goal of the learner is to come up with an estimate $\bm{\hat \beta}$ that minimizes the population risk/generalization error
\begin{equation}
\label{eq: ge}
    r(\bm{\hat \beta}):= \mathbb{E}_{\Tilde{y},\Tilde{\bm x}}[(\Tilde{y} - \Tilde{\bm x}^{T}\bm{\hat \beta})^2]
\end{equation}
or equivalently the excess risk $r(\bm{\hat \beta}) - r(\bm{\beta}^*)$. Here, the expectation is over an independent realization of the $(\Tilde{y},\Tilde{\bm x})$, which are related by \eqref{eq: linear regression}. In the overparameterized regime $n>m$, the minimizer obtained from linear regression on the dataset is typically not unique due to interpolation, and we generally have a subspace of global optima. The goal of this work is to study the dependence of the generalization error on the structure of the underlying signal $\bm\beta^*$, covariates, and the choice of global optima from the interpolating subspace. To this end, we assume the following in our analysis.
\begin{assumption}($\text{High-dimensional asymptotics}$) 
\label{ass: HDA}
Throughout this paper, we consider the asymptotic proportional limit where both $m,n \xrightarrow[]{} \infty$ at a fixed ratio $\delta = m/n$, where $0<\delta<1$.
\end{assumption}


\begin{assumption}($\text{Gaussian features and noise}$)
\label{ass: Gaussian feat and noise}
Given the feature covariance $\bm \Sigma$ the data/feature vectors $\bm x_i: = \bm \Sigma^{\frac{1}{2}}\bm g_i$ where $\bm g_i  \overset{i.i.d.}{\sim} \mathcal{N}(\bm 0, \frac{1}{n}\bm I_n)$, $i \in[m]$ and the noise $z_i \overset{i.i.d.}{\sim} \mathcal{N}(0,\sigma^2),\; i \in [m]$. 
\end{assumption}

\begin{assumption}($\text{True parameter and eigen-spectrum distribution}$)
\label{ass: true dist}
We consider a fixed diagonal covariance $\bm \Sigma$ and unknown signal parameter $\bm \beta^{*}$ such that the pair $\{\Sigma_{i, i},\beta^{*}_i\}$ of the eigen-values and the coordinates of the true underlying signal parameter are sampled i.i.d. from $\{\Lambda^2, B\}$ with distribution $P_{\Lambda, B}$. Additionally, we assume that the marginal distribution $P_B$ has a finite, non-zero second moment and $\Lambda$ is a strictly positive with bounded support.

\end{assumption}

The requirement of Gaussianity in the covariates and noise is a bit conservative and is more of a technical requirement. We believe our results only depend on the second-order statistics of the data and noise due to Gaussian universality for sub-Gaussian covariates and noise using arguments from \cite{bayati2015universality,oymak2018universality,han2023universality,ghane2024universality}. Under these assumptions, the generalization error in \eqref{eq: ge} simplifies to the following limit.
\begin{equation}
    r(\bm{\hat \beta}) = \lim_{n\rightarrow\infty} \frac{1}{n}(\bm{\hat \beta} - \bm{\beta}^*)^{T}\bm \Sigma(\bm{\hat \beta} - \bm{\beta}^*) + \sigma^2.
\end{equation}
We denote $\bm{X}:= (\bm{x}_1,\bm{x}_2,\cdots,\bm{x}_m)^{T} \in \mathbb{R}^{m\times n}$, label vector $\bm{y}:= (y_1,y_2,\cdots,y_m)^{T}\in \mathbb{R}^m$ and $\bm{z} := (z_1, z_2, \cdots, z_m)^{T} \in \mathbb{R}^m$. We assume that the data and noise satisfy the conditions in Assumption \ref{ass: Gaussian feat and noise} and that $\bm \Sigma$ is fixed. Performing empirical risk minimization (ERM) on the dataset in the overparameterized regime leads to interpolation, and we denote manifold of global optima as $\Tilde{\mathcal{B}}:= \{\bm \beta: \bm y = \bm X \bm \beta\}$. The global optimum we converge to is given by the implicit bias of the algorithm, and in this work, we consider separable convex potentials, i.e. $\Psi(\bm \beta):= \sum_{i=1}^{n}\psi_i(\beta_i)$ which satisfy $\lim_{\|\bm \beta\| \rightarrow \infty} \Psi(\bm \beta) = \infty$. Further, we consider the general case of $\psi_i(.) = \psi(.,\Sigma_{i,i})$ where the learner has access to the diagonal entries of $\bm \Sigma$. For cases where $\bm\Sigma$ is unknown to the learner, one can simply take $\Psi(\bm \beta):= \sum_{i=1}^{n}\psi(\beta_i)$. Our analysis is general enough to cover both these cases. Under these assumptions, \eqref{eq: implicit bias} can be reformulated as
\begin{equation}
\label{eq: sep implicit bias}
 \bm{\hat \beta}:= \argmin_{\bm \beta \in \mathbb{R}^n} \sum_{i=1}^{n}\psi(\beta_i,\Sigma_{i,i}) \quad \text{s.t} \quad \bm y = \bm X \bm \beta.  
\end{equation} 
This formulation includes the minimum $\ell_2$ norm interpolator as a special case when $\Psi(\bm \beta) = \|\bm \beta\|_2^2$. When $\bm \Sigma$ is diagonal, we can also pick $\Psi(\bm \beta) = (\bm{\beta} - \bm{\beta}^*)^{T}\bm \Sigma(\bm{\beta} - \bm{\beta}^*)$ which recovers the theoretical optimal interpolator shown in \cite{muthukumar2020harmless}, although this is not achievable since $\bm{\beta}^*$ is unknown to the learner.

\subsection{Contributions}
Our contributions are summarized as follows
\begin{itemize}
    \item \textbf{Sharp asymptotics.} We characterize the precise asymptotic limit of the generalization error of interpolators obtained through \eqref{eq: implicit bias} for separable $\Psi$; see Theorem \ref{thm: PA}. We show that the value of this limit is obtained as a solution of a system of two nonlinear equations with two unknowns. We provide sufficient conditions for the existence and uniqueness of these solutions and the class of convex potentials for which these conditions hold. Additionally, we also provide the distributional characterization of the weights of the interpolating solutions obtained. This is significant as it facilitates the analysis of generalization errors after post-training operations like pruning and quantization. We also provide a detailed analysis for the special cases of the minimum $\ell_1$,$\ell_2$,$\ell_3$ and $\ell_{\infty}$-norm interpolators in Appendix \ref{apx: PA}. The main analysis technique used involves Gaussian comparison lemmas, which also allows for non-asymptotic analysis, which we defer for future work.
     
    \item \textbf{Fundamental limits.} We establish a tight lower bound on the achievable generalization error for a broad class of interpolators obtained through \eqref{eq: implicit bias}, and computing this lower bound involves solving a scalar non-linear equation; see Theorem \ref{thm: lower bound}. We also provide a slightly weaker but simplified version of this lower bound for isotropic data, and we show that this bound is indeed tight when the true underlying parameter has a Gaussian density (Corollary \ref{cor: closed lower bound}). 
    
    \item \textbf{Optimal implicit bias.} Under certain conditions, we provide a construction of the optimal convex potential, whose asymptotic limit of the generalization error matches the lower bounds, indeed confirming that the lower bounds are tight (Theorem \ref{thm: optimal potential}). We also describe the special case when SGD or the minimum $\ell_2$-norm interpolator is optimal (Corollary \ref{cor: optimal for Gaussian}). 
    
    \item \textbf{Numerical simulations.} In section \ref{sec: numerical sim}, we numerically evaluate our results for some special cases of convex potentials when the distribution of the true underlying unknown parameter is a sparse Gaussian, Rademacher, and Gaussian.
\end{itemize}

\subsection{Related Work}

There has been extensive literature studying the precise asymptotics of different convex regularized estimators for linear models \cite{donoho2011noise, stojnic2009various, bayati2011lasso, chandrasekaran2012convex, amelunxen2013living, oymak2016sharp, abbasi2016general, stojnic2013framework, oymak2013squared, thrampoulidis2015regularized, karoui2013asymptotic, donoho2016high, el2018impact, thrampoulidis2018precise,oymak2018universality,dobriban2018high, lei2018asymptotics, miolane2021distribution, hastie2022surprises, wang2022does, celentano2022fundamental, hu2022slope, bu2019algorithmic, emami2020generalization, lolas2020regularization, chang2021provable}. The convex Gaussian minimax Theorem (CGMT) \cite{stojnic2013framework, thrampoulidis2015regularized} provides a framework to do this precise asymptotic analysis in many of these afore-mentioned works and will also be the primary tool for analysis in our work. Another popular approach used in precise asymptotic analysis is approximate message passing (AMP) \cite{donoho2009message,donoho2011noise}, and it is conceivable that results obtained in our work can also be derived using AMP analysis. In the context of interpolation, under the proportional regime of Assumption \ref{ass: HDA} \cite{hastie2022surprises,chang2021provable} do a precise asymptotic analysis of the minimum $\ell_2$-norm interpolators and \cite{li2021minimum} studied the precise asymptotics of the minimum $\ell_1$-norm interpolator for isotropic Gaussian data using AMP. Our results extend this analysis to general separable convex potentials on non-isotropic Gaussian data, which include $\ell_1$ and $\ell_2$ norms as special cases, and we recover the previous results.

There is a rich body of work on non-asymptotic analysis showing consistent rates of minimum-norm interpolators \cite{bartlett2020benign, zhou2020uniform, negrea2020defense, koehler2021uniform, donhauser2022fast, wang2022tight, chatterji2022foolish, kurminimum}. These works typically consider the highly overparameterized regime, i.e., $\delta \ll 1$, which is a necessary condition for consistency. In contrast, we consider the proportional regime, where $m,n \rightarrow \infty$ and $m/n \rightarrow \delta$, where consistency is not possible \cite{muthukumar2020harmless}, and therefore a sharp characterization of the asymptotic generalization error is of interest. The recent works of \cite{zhou2020uniform,negrea2020defense,koehler2021uniform,wang2022tight} also use CGMT in their analysis, where Gaussian comparison lemmas are used to obtain bounds on the risk using uniform convergence arguments. This approach is different from ours, where CGMT is used directly on the objective \eqref{eq: implicit bias}, which simplifies the objective to a scalar optimization in the asymptotic limit similar to \cite{chang2021provable}.

In terms of characterizing fundamental limits, \cite{bean2013optimal, donoho2016high, advani2016statistical} were the first works to derive lower bounds and optimal loss functions for high-dimensional linear regression problems in the absence of regularization, and therefore, they consider the under-parameterized regime with a unique global optimum. More recently, \cite{celentano2022fundamental} studied the fundamental limits of convex regularized linear regression, where they considered the square loss and derived lower bounds on the prediction error obtained from an optimally tuned convex regularizer. Similar results on the lower bounds of the prediction error were also studied for binary classification \cite{taheri2020sharp} and for ridge-regularized regression for linear and binary models in \cite{taheri2021fundamental}. None of these prior works consider the case of interpolation in over-parameterized models, and our analysis extends these ideas to derive fundamental lower bounds on the generalization error of interpolating solutions on linear models obtained from minimizing convex potentials, improving upon previous results of \cite{amari2020does,oravkin2021optimal}.

\section{Main Results}
In this section, we provide the main results of our work. We first characterize the precise asymptotic of problem \eqref{eq: sep implicit bias} in terms of the solution to a system of non-linear equations \eqref{eq: KKT}. Next, we leverage this system of equations to drive tight lower bounds on the generalization error and provide the construction of the optimal convex potential that achieves these bounds.

\subsection{System of Non-Linear Equations}
For a given overparameterization ratio $0<\delta<1$ and noise variance $\sigma^2$, we have the following system of non-linear equations in $\alpha,u$
\begin{subequations}
 \label{eq: KKT}
 \begin{align}
\mathbb{E}\left[\frac{H}{\Lambda} \mathcal{M}_{\psi,1}^{'}(B+\frac{\alpha}{\sqrt{\delta}\Lambda}H; \frac{\alpha}{u\sqrt{\delta}\Lambda^2})\right] &= u(1-\delta) \\
\mathbb{E}\left[\left(\frac{1}{\Lambda} \mathcal{M}_{\psi,1}^{'}(B+\frac{\alpha}{\sqrt{\delta}\Lambda}H; \frac{\alpha}{u\sqrt{\delta}\Lambda^2})\right)^2\right] &= u^2(1-\delta) - \frac{\delta \sigma^2 u^2}{\alpha^2}
 \end{align}
\end{subequations}
Here the expectation is over the random variables $B, H$ and $\Lambda$, where $\Lambda, B$ as defined in Assumption \ref{ass: true dist} denote the distribution of the eigenspectrum and the underlying signal respectively and $H$ is a standard Gaussian. Next, we state Theorem \ref{thm: PA}, which relates the generalization error of \eqref{eq: sep implicit bias} to the solution of the above system of equations.

\subsection{Precise Asymptotics}
\label{sec: PA}

\begin{theorem}(Generalization error)
\label{thm: PA}
Let Assumptions \ref{ass: HDA}, \ref{ass: Gaussian feat and noise}, and \ref{ass: true dist} hold and $\bm{\hat \beta}$ be the solution of \eqref{eq: sep implicit bias}. if we assume that $\alpha_{\psi},u_{\psi}$ are the unique solutions to \eqref{eq: KKT}, then in the asymptotic limit $n,m\rightarrow \infty, \frac{m}{n} \rightarrow \delta$, we have that
\begin{equation}
\label{eq: asym ge}
    r(\bm{\hat \beta}) \xrightarrow[]{P} \alpha_{\psi}^2.
\end{equation}
\end{theorem}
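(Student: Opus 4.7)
The plan is to reduce the high-dimensional optimization~\eqref{eq: sep implicit bias} to a deterministic scalar saddle-point problem via the Convex Gaussian Min-Max Theorem (CGMT), and to verify that the first-order conditions of this scalar problem are precisely~\eqref{eq: KKT}. First I would reparametrize with $\bm w = \bm{\hat\beta} - \bm\beta^*$ and $\bm v = \bm\Sigma^{1/2}\bm w$, so that the interpolation constraint $\bm X\bm{\hat\beta}=\bm y$ becomes $\bm G\bm v = \bm z$ with $\bm G\in\mathbb R^{m\times n}$ having i.i.d.\ $\mathcal N(0,1/n)$ entries; the objective then equals the separable sum $\sum_i\psi(\beta_i^*+v_i/\Lambda_i,\Lambda_i^2)$, and the quantity of interest is $\tfrac{1}{n}\|\bm v\|^2+\sigma^2$, which tends to $r(\bm{\hat\beta})$ by the simplification noted in Section~\ref{sec: PS}. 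Introducing a Lagrange multiplier $\bm\lambda\in\mathbb R^m$ for the constraint (confined to a large ball so that CGMT's compactness hypotheses hold) and applying CGMT, the bilinear term $\bm\lambda^T\bm G\bm v$ is replaced by $\tfrac{\|\bm v\|}{\sqrt n}\bm g^T\bm\lambda+\tfrac{\|\bm\lambda\|}{\sqrt n}\bm h^T\bm v$ (with any signs absorbed by Gaussian symmetry), where $\bm g\in\mathbb R^m$, $\bm h\in\mathbb R^n$ are independent standard normals.

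Next I scalarize. Introducing $\alpha=\|\bm v\|/\sqrt n$, maximizing over the direction of $\bm\lambda$ produces $\|\bm z+\alpha\bm g\|$; denoting the remaining dual scalar $u\propto\|\bm\lambda\|/\sqrt n$ and introducing a multiplier $\eta\ge 0$ for the constraint $\|\bm v\|^2\le n\alpha^2$ makes the inner minimization over $\bm v$ coordinatewise separable,
\begin{equation*}
\min_{v_i}\ \psi(\beta_i^*+v_i/\Lambda_i,\Lambda_i^2) + u\,h_i v_i + \eta v_i^2 .
\end{equation*}
The substitution $s=\beta_i^*+v_i/\Lambda_i$ followed by completion of the square rewrites this as a Moreau envelope of $\psi(\cdot,\Lambda_i^2)$ evaluated at $\beta_i^*-uh_i/(2\eta\Lambda_i)$ with parameter $1/(2\eta\Lambda_i^2)$, plus a deterministic correction $-u^2 h_i^2/(4\eta)$. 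Applying the LLN over the i.i.d.\ triples $(\Lambda_i,B_i,H_i)$ together with the concentration $\|\bm z+\alpha\bm g\|^2/m\to\sigma^2+\alpha^2$, the auxiliary objective converges to the scalar saddle problem
\begin{equation*}
\min_\alpha\max_{u,\eta}\ \mathbb E\!\left[\mathcal M_{\psi(\cdot,\Lambda^2)}\!\bigl(B-\tfrac{uH}{2\eta\Lambda};\,\tfrac{1}{2\eta\Lambda^2}\bigr)\right] - \tfrac{u^2}{4\eta} + u\sqrt\delta\sqrt{\sigma^2+\alpha^2} - \eta\alpha^2 .
\end{equation*}

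Writing the stationarity conditions of this scalar problem then closes the argument. Differentiating in $\alpha$ yields $2\eta=u\sqrt\delta/\sqrt{\sigma^2+\alpha^2}$; defining $\alpha_\psi:=\sqrt{\sigma^2+\alpha^2}$ (which is precisely the asymptotic value of $\sqrt{r(\bm{\hat\beta})}$) and substituting, the Moreau argument becomes $B-\alpha_\psi H/(\sqrt\delta\Lambda)$ and the parameter becomes $\alpha_\psi/(u\sqrt\delta\Lambda^2)$; invoking the symmetry $H\overset{d}{=}-H$ brings both quantities to the exact form used in~\eqref{eq: KKT}. The condition $\partial_u=0$, combined with the expression $2\eta\sqrt\delta\sqrt{\sigma^2+\alpha^2}=u\delta$ from $\partial_\alpha=0$, collapses to equation~(a); the condition $\partial_\eta=0$, using the envelope-theorem identity that the derivative of $\mathcal M_\psi(x;\alpha_M)$ in its second argument equals $-\tfrac12(\mathcal M'_{\psi,1})^2$, and substituting (a) back in, produces equation~(b), with the $-u^2\delta\sigma^2/\alpha_\psi^2$ term arising precisely from the $-\eta\alpha^2$ contribution. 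Under the assumed uniqueness of $(\alpha_\psi,u_\psi)$, the CGMT conclusion pins down $\|\bm v\|^2/n\xrightarrow[]{P}\alpha_\psi^2-\sigma^2$, hence $r(\bm{\hat\beta})\xrightarrow[]{P}\alpha_\psi^2$.

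The main obstacle is transferring CGMT's conclusion about the auxiliary optimal \emph{value} into concentration of the primal iterate $\bm v$ itself — specifically of $\|\bm v\|/\sqrt n$, which is what $r(\bm{\hat\beta})$ depends on. This requires the standard CGMT localization trick: one shows that the auxiliary objective, restricted to the sub-level set $\{\bm v:|\|\bm v\|/\sqrt n-\sqrt{\alpha_\psi^2-\sigma^2}|>\epsilon\}$, has a minimum strictly exceeding the unrestricted scalar minimum, so that the true optimizer avoids this set with high probability. The coercivity of $\Psi$ together with uniqueness of the scalar saddle point are what make this localization valid, and they also justify the compactification of $\bm\lambda$ needed for CGMT to apply in its basic form. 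A secondary technical step is the exchange of limits and expectations when passing from the empirical averages to $\mathbb E_{\Lambda,B,H}$ in the Moreau-envelope functional, which follows from uniform integrability guaranteed by the bounded support of $\Lambda$ and the finite second moment of $B$ in Assumption~\ref{ass: true dist}.
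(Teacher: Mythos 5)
Your proposal is correct and follows essentially the same route as the paper: apply CGMT to pass from the random-matrix program to a Gaussian auxiliary optimization, scalarize to a deterministic saddle problem, and read off the system~\eqref{eq: KKT} as its stationarity conditions, with the CGMT localization argument used to transfer convergence of the optimal value into concentration of $\|\bm v\|^2/n$. The one organizational difference is that you Lagrangianize the interpolation constraint first and invoke the basic bilinear form of CGMT, introducing an additional multiplier $\eta$ for $\|\bm v\|^2\le n\alpha^2$ which is then eliminated via $\partial_\alpha=0$; the paper instead invokes the constrained-CGMT lemma of Li--Wei, which absorbs the Lagrangianization and directly produces the constrained AO, and then only two scalars $(\alpha,u)$ appear after the square-root trick. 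Both parametrizations yield the identical system~\eqref{eq: KKT} once you substitute $2\eta=u\sqrt{\delta}/\alpha_\psi$ and use $H\overset{d}{=}-H$, and your identification $\alpha_\psi=\sqrt{\sigma^2+\alpha^2}$ agrees with the paper's $\alpha$ defined through the square-root trick (there, since $\|\bm g_e\|\to 1$, the optimizing $\alpha$ is precisely $\sqrt{\|\bm w\|^2+\sigma^2}$). I would only add that the compactification of the primal variable (not just $\bm\lambda$) also needs to be argued, as the paper notes, and that the uniform convergence of $D_n\to D$ needed to justify parameter convergence rests on pointwise convergence of convex--concave functions (Andersen--Gill / Newey--McFadden), which you gesture at via uniform integrability but is worth flagging explicitly.
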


The statements in Theorem \ref{thm: PA} hold for a general class of separable convex potentials in \eqref{eq: sep implicit bias}. In the special cases of $\ell_1$ and $\ell_2$ potentials, our analysis recovers many of the results from \cite{hastie2022surprises,chang2021provable,li2021minimum}. We also do an in-depth analysis of the special cases of the minimum $\ell_1$,$\ell_2$,$\ell_3$ and $\ell_{\infty}$ norm interpolators using a slightly different (but equivalent) approach, the details of which are deferred to the Appendix \ref{apx: PA}. We additionally note that although Theorem \ref{thm: PA} provides an asymptotic characterization, the application of CGMT, which is a comparison lemma, allows for non-asymptotic analysis of the generalization error, which is beyond the scope of this work.

\begin{remark}(Uniqueness of $\alpha_{\psi}$ and $u_{\psi}$)
The solution $\alpha_{\psi}$ to the system of nonlinear equations \eqref{eq: KKT}, which is the primary quantity of interest as always exists and is unique. To guarantee the existence of a unique $u_{\psi}$, it is sufficient to assume that the Moreau envelope
is strictly concave in $u$, which is true when $\psi$ is strictly concave.
\end{remark}


\begin{remark}(Distributional Convergence)
\label{re: DC}
Additionally, we have convergence of the joint empirical distribution of $\bm{\hat\beta}$ in the limit such that
\begin{equation}
    \frac{1}{n}\sum_{i=1}^{n}f(\bm{\hat \beta}_i,\bm \beta^*_i,\Sigma_{i,i}) \xrightarrow[]{P} \mathbb{E}[f( Z_{\delta,\sigma^2},B,\Lambda^2)],
\end{equation}
holds for every bounded Lipschitz function $f$, where $Z_{\delta,\sigma^2} := \text{prox}_{\psi}(B + \frac{\alpha_{\psi}}{\sqrt{\delta}\Lambda} H ; \frac{\alpha_{\psi}}{u_{\psi}\sqrt{\delta}\Lambda^2})$.  
\end{remark}
Remark \ref{re: DC} characterizes the asymptotic joint empirical distribution of the tuple $(\bm{\hat \beta}_i,\bm \beta^*_i,\bm \Sigma_{i,i})$. The distributional analysis is useful because it helps us study the structure of the weights and the performance after post-processing of the solution weights (like pruning and quantization).

\subsection{Fundamental Limits}

\label{sec: opt imp bias}

In this section, we study the fundamental limits on the generalization error of interpolators obtained through \eqref{eq: sep implicit bias}. To be precise, we consider the following class of convex potentials defined below
\begin{equation}
\label{eq: interpolation class}
\mathcal{C}_{\psi}:= \left\{ \Psi \mid \Psi(\bm \beta) = \sum_{i=1}^{n}\psi(\beta_i, \Sigma_{i,i}) \quad\text{  s.t   $\quad \psi(.,\Sigma_{i,i})$ is convex $\forall i \in [n]$} \right\}.\
\end{equation}
Therefore $\mathcal{C}_{\psi}$ contains a much broader class of separable convex functions that can have a dependence on the eigenspectrum of the data source, which is assumed to be known a priori. We provide a lower bound on the generalization error, which is valid for every potential in $\mathcal{C}_{\psi}$. Additionally, under certain conditions, we show that this lower bound is tight by constructing the optimal convex potential, which achieves this bound. 

\begin{theorem}(Lower bound on $\alpha_{\psi}^2$)
\label{thm: lower bound}
Let Assumptions \ref{ass: HDA},\ref{ass: Gaussian feat and noise}, and \ref{ass: true dist} hold. Define the random variable $V_{\alpha} = B + \frac{\alpha}{\sqrt{\delta}\Lambda}H$ where $H\sim \mathcal{N}(0,1)$ and $B,\Lambda$ as defined in Assumption \ref{ass: Gaussian feat and noise} and \ref{ass: true dist}. Let $\alpha_{*}$ be the unique solution of the following non-linear equation
\begin{equation}
\label{eq: lower bound}
 \alpha^2= \frac{\delta\sigma^2}{1-\delta}+\frac{\delta(1-\delta)}{\mathcal{I}_{\Lambda}(V_{\alpha})}
\end{equation}
where $\mathcal{I}_{\Lambda}(V_{\alpha})$ is the weighted Fisher information of $V_{\alpha}$ defined as
\begin{equation}
\label{eq: weighted fisher info}
    \mathcal{I}_{\Lambda}(V_{\alpha}) := \mathbb{E}\left[ \left(\frac{\xi_{V_{\alpha}}({V_{\alpha}|\Lambda)}}{\Lambda}\right)^2\right]
\end{equation}
where $\xi_{V_{\alpha}}(v|\Lambda):= \frac{p^{'}_{V_{\alpha}(v|\Lambda)}}{p_{V_{\alpha}(v|\Lambda)}}$ is the conditional score function of $V_{\alpha}$. Then for every $\Psi \in \mathcal{C}_{\psi}$, with $\alpha^2_{\psi}$ as the asymptotic limit of the generalization error as in \eqref{eq: asym ge}, we have $\alpha^2_{\psi} \geq \alpha_{*}^2$.
\end{theorem}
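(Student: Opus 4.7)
The plan is to extract the lower bound directly from the fixed-point system \eqref{eq: KKT} via Gaussian integration by parts followed by Cauchy--Schwarz, and then to close the argument by monotonicity of the Fisher information under additive Gaussian noise. This mirrors the Fisher-information lower-bound technique used by Celentano--Montanari for regularized regression, adapted here to the interpolation setting with a non-isotropic covariance. Throughout, let $(\alpha,u) := (\alpha_\psi, u_\psi)$ denote the unique solution of \eqref{eq: KKT}, and write $g(v,\lambda) := \mathcal{M}'_{\psi,1}\bigl(v;\tfrac{\alpha}{u\sqrt{\delta}\,\lambda^2}\bigr)$ for brevity.

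The core step is a Stein rewrite of the first equation of \eqref{eq: KKT}. Conditional on $\Lambda$, the random variable $V_\alpha = B + \tfrac{\alpha}{\sqrt{\delta}\,\Lambda}H$ is the convolution of $B$ with an $\mathcal{N}\bigl(0, \tfrac{\alpha^2}{\delta\Lambda^2}\bigr)$ Gaussian, so Tweedie's formula yields $\mathbb{E}[H \mid V_\alpha,\Lambda] = -\tfrac{\alpha}{\sqrt{\delta}\,\Lambda}\,\xi_{V_\alpha}(V_\alpha\mid \Lambda)$. Tower-conditioning the first KKT equation on $(V_\alpha,\Lambda)$ and pulling this Stein factor out gives
\begin{equation*}
\mathbb{E}\!\left[\frac{g(V_\alpha,\Lambda)}{\Lambda}\cdot \frac{\xi_{V_\alpha}(V_\alpha\mid\Lambda)}{\Lambda}\right] \;=\; -\,\frac{\sqrt{\delta}\,u(1-\delta)}{\alpha}.
\end{equation*}
Squaring and applying Cauchy--Schwarz, the right factor becomes exactly $\mathcal{I}_\Lambda(V_\alpha)$ from \eqref{eq: weighted fisher info}, while the left factor equals the LHS of the second equation of \eqref{eq: KKT}, namely $u^2(1-\delta)-\tfrac{\delta\sigma^2 u^2}{\alpha^2}$. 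Cancelling $u^2$ and rearranging delivers the self-referential inequality
\begin{equation*}
\alpha_\psi^2 \;\geq\; \frac{\delta\sigma^2}{1-\delta} \;+\; \frac{\delta(1-\delta)}{\mathcal{I}_\Lambda(V_{\alpha_\psi})}.
\end{equation*}

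To convert this into the claimed $\alpha_\psi^2 \geq \alpha_*^2$, I would invoke the fact that adding Gaussian noise of strictly greater variance strictly decreases Fisher information, so the map $F(t) := \tfrac{\delta\sigma^2}{1-\delta} + \tfrac{\delta(1-\delta)}{\mathcal{I}_\Lambda(V_{\sqrt{t}})}$ is monotonically increasing in $t=\alpha^2$, with unique fixed point $\alpha_*^2$ by hypothesis. Iterating $t_{k+1} := F(t_k)$ from $t_0 = \alpha_\psi^2$ then produces a monotonically decreasing sequence (since $t_0 \geq F(t_0)=t_1$ and $F$ is monotone), which is bounded below by $\tfrac{\delta\sigma^2}{1-\delta}$ and hence converges down to a fixed point of $F$; by uniqueness, the limit is $\alpha_*^2$, so $\alpha_\psi^2 \geq \alpha_*^2$. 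The main obstacles I anticipate are (i) justifying the Stein/Tweedie manipulation for $g=\mathcal{M}'_{\psi,1}$, which is $u\sqrt{\delta}\Lambda^2/\alpha$-Lipschitz in $v$ and hence has the linear growth compatible with Gaussian tails under Assumption \ref{ass: true dist}, and (ii) formalizing the monotone-convergence step, particularly the strict decrease of $\mathcal{I}_\Lambda(V_\alpha)$ in $\alpha^2$. A side benefit is that the equality case of Cauchy--Schwarz --- namely $g(V_\alpha,\Lambda) \propto \xi_{V_\alpha}(V_\alpha\mid\Lambda)$ --- points directly to the form of the optimal potential appearing in Theorem \ref{thm: optimal potential}.
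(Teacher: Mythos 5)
Your proposal reproduces the paper's argument in its essential structure: the key step in both is to convert $\mathbb{E}\bigl[\tfrac{H}{\Lambda}\,\mathcal{M}'_{\psi,1}(V_\alpha;\cdot)\bigr]$ into an expectation against the conditional score $\xi_{V_\alpha}(\cdot\mid\Lambda)$ via Gaussian integration by parts (the paper does the convolution integral by hand; your Tweedie/Stein rewrite $\mathbb{E}[H\mid V_\alpha,\Lambda] = -\tfrac{\alpha}{\sqrt{\delta}\Lambda}\,\xi_{V_\alpha}(V_\alpha\mid\Lambda)$ is exactly this identity), then apply Cauchy--Schwarz and substitute the second KKT equation to obtain $\alpha_\psi^2 \geq \tfrac{\delta\sigma^2}{1-\delta} + \tfrac{\delta(1-\delta)}{\mathcal{I}_\Lambda(V_{\alpha_\psi})}$, which matches the paper's inequality $1\ge h(\alpha_\psi)$. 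The only genuine variation is the endgame: the paper defines $h(\alpha)=\tfrac{\delta\sigma^2}{\alpha^2(1-\delta)}+\tfrac{\delta(1-\delta)}{\alpha^2\mathcal{I}_\Lambda(V_\alpha)}$ and argues $h$ is strictly decreasing using that $\alpha^2\mathcal{I}_\Lambda(V_\alpha)=\mathcal{I}_\Lambda(V_\alpha/\alpha)$ is increasing in $\alpha$ (a ``heat-flow'' type monotonicity), so $h(\alpha_\psi)\le 1=h(\alpha_*)$ directly gives $\alpha_\psi\ge\alpha_*$; you instead use the more elementary convolution-decrease monotonicity ($\mathcal{I}_\Lambda(V_\alpha)$ decreasing in $\alpha$, so $F(t)$ increasing in $t$) together with a monotone fixed-point iteration. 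Both close the argument; the paper's is a one-liner once monotonicity of $h$ is in hand, while yours buys you a weaker and more standard monotonicity fact at the cost of an extra limiting argument. One thing your proposal elides that the paper treats as part of the proof: existence and uniqueness of $\alpha_*$ (the paper establishes this via the intermediate value theorem using $\lim_{\alpha\to 0^+}h(\alpha)=\infty$, $\lim_{\alpha\to\infty}h(\alpha)=1-\delta<1$, and strict monotonicity of $h$); your iteration argument uses uniqueness as a hypothesis, but since the theorem asserts it, a complete proof should verify it. Finally, your observation that the equality case of Cauchy--Schwarz determines the optimal potential is a nice remark and is indeed how Theorem 3.4 is set up in the paper.
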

The proof of Theorem \ref{thm: lower bound} is deferred to Appendix \ref{apx: fund lim}, and it also involves showing the existence of a unique solution to the non-linear equation \eqref{eq: lower bound}. Solving \eqref{eq: lower bound} generally involves numerical computation and can be done through fixed-point iterations, and $\mathcal{I}_{\Lambda}(V_{\alpha})$ can be computed through Monte Carlo sampling or numerical integration. The weighted Fisher information \eqref{eq: weighted fisher info} can be computed more generally even when $P_B$ is not a differentiable potential since adding a Gaussian smoothens out the density. Next, we provide a slightly weaker lower bound which avoids solving a non-linear equation like \eqref{eq: lower bound} in the special case of isotropic data, i.e, $\bm \Sigma = \bm I_n$.

\begin{corollary}
\label{cor: closed lower bound}
Let Assumptions \ref{ass: HDA},\ref{ass: Gaussian feat and noise} and \ref{ass: true dist} hold and $\alpha_{*}$ be defined as the solution to \eqref{eq: lower bound}, if $\Lambda = 1$ almost surely then
\begin{equation}
 \label{eq: closed lower bound}
 \alpha_{*}^2 \geq \frac{\sigma^2}{1-\delta} + \frac{1-\delta}{\mathcal{I}(B)}
\end{equation}
whenever, the Fisher information $\mathcal{I}(B)$ is well defined. The inequality becomes an equality if and only if $B$ is a Gaussian.
\end{corollary}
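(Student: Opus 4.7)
The plan is to reduce Corollary \ref{cor: closed lower bound} to a direct application of the Stam (Fisher-information) inequality on the fixed-point equation (\ref{eq: lower bound}). When $\Lambda = 1$ almost surely, the weighted Fisher information $\mathcal{I}_{\Lambda}(V_\alpha)$ collapses to the ordinary Fisher information $\mathcal{I}(V_\alpha)$ of $V_\alpha = B + \tfrac{\alpha}{\sqrt{\delta}} H$, and (\ref{eq: lower bound}) becomes
\begin{equation*}
\alpha_{*}^2 = \frac{\delta \sigma^2}{1-\delta} + \frac{\delta(1-\delta)}{\mathcal{I}(V_{\alpha_*})}.
\end{equation*}
So the corollary is really a statement bounding $1/\mathcal{I}(V_{\alpha_*})$ from below in terms of $1/\mathcal{I}(B)$ and the Gaussian perturbation scale.

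The key tool is Stam's inequality (equivalently, the convolution inequality for Fisher information): for independent random variables $Y$ and $Z$ with finite Fisher information,
\begin{equation*}
\frac{1}{\mathcal{I}(Y+Z)} \geq \frac{1}{\mathcal{I}(Y)} + \frac{1}{\mathcal{I}(Z)},
\end{equation*}
with equality if and only if both $Y$ and $Z$ are Gaussian. Applying this to $Y = B$ and $Z = \tfrac{\alpha_*}{\sqrt{\delta}} H$, and using $\mathcal{I}(\tfrac{\alpha_*}{\sqrt{\delta}} H) = \delta/\alpha_*^2$, I obtain
\begin{equation*}
\frac{1}{\mathcal{I}(V_{\alpha_*})} \geq \frac{1}{\mathcal{I}(B)} + \frac{\alpha_*^2}{\delta}.
\end{equation*}

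Substituting this bound into the fixed-point equation above yields
\begin{equation*}
\alpha_*^2 \geq \frac{\delta \sigma^2}{1-\delta} + \delta(1-\delta)\left(\frac{1}{\mathcal{I}(B)} + \frac{\alpha_*^2}{\delta}\right) = \frac{\delta \sigma^2}{1-\delta} + \frac{\delta(1-\delta)}{\mathcal{I}(B)} + (1-\delta)\alpha_*^2.
\end{equation*}
Rearranging to move the $(1-\delta)\alpha_*^2$ term to the left and dividing both sides by $\delta$ gives exactly
\begin{equation*}
\alpha_*^2 \geq \frac{\sigma^2}{1-\delta} + \frac{1-\delta}{\mathcal{I}(B)},
\end{equation*}
as claimed. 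For the equality case, since $H$ is already Gaussian, the equality condition in Stam's inequality reduces to $B$ being Gaussian; in that case $V_{\alpha_*}$ is Gaussian with variance $\mathrm{Var}(B) + \alpha_*^2/\delta$, and a direct computation verifies the fixed-point equation is satisfied with equality.

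There is essentially no serious obstacle here — the corollary is a one-line consequence of Stam's inequality once the fixed-point equation is specialized to $\Lambda = 1$. The only points requiring mild care are (i) verifying that the Fisher information of $B$ is assumed finite so that Stam applies (this is implicit in the hypothesis ``$\mathcal{I}(B)$ is well defined''), and (ii) citing the sharp equality condition for Stam's inequality to conclude the ``if and only if'' for Gaussian $B$.
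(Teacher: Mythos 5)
Your proof is correct and follows essentially the same route as the paper: specialize the fixed-point equation to $\Lambda = 1$, apply Stam's inequality to $V_{\alpha_*} = B + \tfrac{\alpha_*}{\sqrt{\delta}}H$, substitute, and rearrange. The only cosmetic difference is that you state Stam's inequality in the reciprocal-additive form $\tfrac{1}{\mathcal{I}(Y+Z)} \geq \tfrac{1}{\mathcal{I}(Y)} + \tfrac{1}{\mathcal{I}(Z)}$ while the paper uses the equivalent product-over-sum form; your treatment of the equality case is, if anything, slightly cleaner.
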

The proof of corollary \ref{cor: closed lower bound} is found in Appendix \ref{apx: fund lim} and involves the application of Stam's inequality for Fisher information to $\mathcal{I}(V_{\alpha})$, which makes it possible to solve \eqref{eq: lower bound} analytically.

Looking at \eqref{eq: closed lower bound} closely, the first term $\frac{\sigma^2}{1-\delta}$ represents the theoretical lower bound on the generalization error for all possible interpolating solutions \cite{muthukumar2020harmless} and it shows that overfitting the noise can benign when $\delta \ll 1$ i.e in the highly over-parameterized regime. The second term $\frac{1-\delta}{\mathcal{I}(B)}$ ends up being equal to the Bayes risk when $B$ is Gaussian, and the variance of the noise is zero. Therefore \eqref{eq: closed lower bound} can be interpreted as the sum of the error from overfitting the noise and the error of the best possible estimator in the absence of noise. Contrary to the first term, $\frac{1-\delta}{\mathcal{I}(B)}$ is minimized when $\delta$ approaches $1$, i.e., we have an equal number of equations and unknowns to fully recover the underlying unknown parameter. So there is an inherent tension between the two terms, and increasing $\delta$, although recovers more of the signal, amplifies the noise due to overfitting.

Next, we will argue that these lower bounds obtained from Theorem \ref{thm: lower bound} and Corollary \ref{cor: closed lower bound} are indeed tight by constructing optimal convex potentials that achieve these lower bounds.

\subsection{Optimal implicit Bias}

\begin{theorem}(Optimal $\Psi$)
 \label{thm: optimal potential}  
Let Assumptions \ref{ass: HDA}, \ref{ass: Gaussian feat and noise}, and \ref{ass: true dist} hold and $\alpha_{*}$ be defined as the solution to \eqref{eq: lower bound}. Consider the following function $\psi_{*} : \mathbb{R}^2 \xrightarrow[]{} \mathbb{R}$
\begin{equation}
 \label{eq: optimal potential}
 \psi_{*}(v,\lambda) :=  - \mathcal{M}_{\log(P_{V_{\alpha_{*}}}(v|\lambda))}\left(v;\frac{\alpha^2_{*}(1-\delta) - \delta\sigma^2}{\delta(1-\delta)\lambda^2}\right),
\end{equation}
if $P_{V_{\alpha_{*}}}(v|\lambda)$ is log-concave in $v$ and we define $\Psi_{*}(\bm \beta) = \sum_{i=1}^{n}\psi_{*}(\beta_i, \Sigma_{i,i})$, then
\begin{enumerate}
\item $\Psi_{*}(\bm \beta) \in \mathcal{C}_{\psi}$
\item $\alpha_*$ is a solution to the system of equations \eqref{eq: KKT} obtained using $\psi^*(v,\lambda)$ and is therefore the optimal convex implicit bias.
\end{enumerate}
\end{theorem}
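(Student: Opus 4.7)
The plan splits into the two enumerated claims.

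\textbf{Convexity of $\psi_*$.} By the log-concavity hypothesis on $P_{V_{\alpha_*}}(\cdot|\lambda)$, the map $v \mapsto -\log P_{V_{\alpha_*}}(v|\lambda)$ is convex for each $\lambda$. The smoothing parameter $\tau_0(\lambda) := (\alpha_*^2(1-\delta)-\delta\sigma^2)/(\delta(1-\delta)\lambda^2)$ is non-negative since \eqref{eq: lower bound} forces $\alpha_*^2 \geq \delta\sigma^2/(1-\delta)$. Interpreting the statement as $\psi_*(v,\lambda) = \mathcal{M}_{-\log P_{V_{\alpha_*}}(\cdot|\lambda)}(v;\tau_0(\lambda))$, this is the Moreau envelope of a convex function, hence convex in $v$. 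Therefore $\Psi_* \in \mathcal{C}_\psi$.

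\textbf{KKT verification.} The core tool is the semigroup property of Moreau envelopes, $\mathcal{M}'_{\mathcal{M}_f(\cdot;s),1}(v;t) = \mathcal{M}'_{f,1}(v;s+t)$, which follows from viewing the envelope as inf-convolution with a quadratic. Applied with $f = -\log P_{V_{\alpha_*}}(\cdot|\lambda)$, this gives $\mathcal{M}'_{\psi_*,1}(v;\tau) = \mathcal{M}'_{-\log P_{V_{\alpha_*}},1}(v;\tau_0(\lambda)+\tau)$. I would then pick the candidate $u_* := \alpha_*(1-\delta)/(\sqrt{\delta}\,\sigma^2)$, which is the unique value making the combined smoothing parameter match the effective channel noise variance uniformly in $\lambda$:
\[
\tau_0(\lambda) + \frac{\alpha_*}{u_*\sqrt{\delta}\,\lambda^2} = \frac{\alpha_*^2}{\delta\lambda^2}.
\]
Substituting $(\alpha,u,\psi) = (\alpha_*, u_*, \psi_*)$ into \eqref{eq: KKT}, I would transform the Gaussian-weighted expectations via the identity $H/\Lambda = (\sqrt{\delta}/\alpha_*)(V_{\alpha_*} - B)$ combined with Tweedie's formula
\[
\mathbb{E}[B \mid V_{\alpha_*}, \Lambda] = V_{\alpha_*} + \frac{\alpha_*^2}{\delta\Lambda^2}\xi_{V_{\alpha_*}}(V_{\alpha_*}|\Lambda),
\]
turning the first KKT equation into an expectation pairing $\xi_{V_{\alpha_*}}/\Lambda^2$ with $\mathcal{M}'_{\psi_*,1}$, and Stein's lemma handling the second. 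At the matched scale $\tau^*(\lambda) = \alpha_*^2/(\delta\lambda^2)$ the weighted Fisher information $\mathcal{I}_\Lambda(V_{\alpha_*})$ emerges, and the two equations should collapse to the scalar relation \eqref{eq: lower bound} defining $\alpha_*$. By Theorem \ref{thm: PA} and the uniqueness remark, this forces $\alpha_{\psi_*} = \alpha_*$, matching the lower bound of Theorem \ref{thm: lower bound} and proving optimality.

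\textbf{Main obstacle.} The delicate step is the KKT verification itself. The Moreau envelope derivative $\mathcal{M}'_{-\log P_{V_{\alpha_*}},1}(v;\tau^*)$ equals $-\xi_{V_{\alpha_*}}$ evaluated at the proximal point $\mathrm{prox}_{-\log P_{V_{\alpha_*}}}(v;\tau^*)$, not at $v$, so the reduction to \eqref{eq: lower bound} is not purely algebraic. The specific smoothing scale $\tau_0+\tau = \alpha_*^2/(\delta\lambda^2)$ is chosen precisely so that this offset is absorbed once the expectation against the Gaussian $H$ (or equivalently against the score via Tweedie) is taken. Operationally, the construction \eqref{eq: optimal potential} is reverse-engineered to saturate the same Cauchy--Schwarz step that yields the lower bound in Theorem \ref{thm: lower bound}; carrying out this tightness argument cleanly, rather than merely reproducing the inequality, is the technical heart of the theorem.
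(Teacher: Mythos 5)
Your plan contains a genuine gap that traces to a single misreading: you interpret $-\mathcal{M}_{\log P_{V_{\alpha_*}}}(\,\cdot\,;\tau_0)$ as $\mathcal{M}_{-\log P_{V_{\alpha_*}}}(\,\cdot\,;\tau_0)$. These are not the same object. With the paper's definition $\mathcal{M}_\psi(x;\alpha)=\min_y\{\psi(y)+\tfrac{1}{2\alpha}(x-y)^2\}$, the quantity $-\mathcal{M}_{\log P}$ is a supremal deconvolution, $\max_y\{-\log P(y)-\tfrac{1}{2\alpha}(x-y)^2\}$, whereas $\mathcal{M}_{-\log P}$ is an inf-convolution; already for $\log P(y)=ay$ linear they differ by $\alpha a^2$. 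The correct reading of \eqref{eq: optimal potential} is via Proposition \ref{prop: ME inv}: $\psi_*$ is the \emph{preimage} of $-\log P_{V_{\alpha_*}}$ under the Moreau-envelope operator, i.e.\ $\mathcal{M}_{\psi_*}(\,\cdot\,;\tau_0)=-\log P_{V_{\alpha_*}}$. This is the inverse of smoothing, and unlike forward Moreau smoothing it does \emph{not} automatically preserve convexity. Your one-line convexity argument (``this is the Moreau envelope of a convex function, hence convex'') therefore proves nothing here; if it were valid, the log-concavity hypothesis would be the whole story and the theorem would be trivial. The paper instead uses Proposition \ref{prop: ME FC} to write $\psi_*$ via a convex conjugate, verifies that $\tfrac{v^2}{2}+\tau_0\log P_{V_{\alpha_*}}(v|\lambda)$ is convex by explicitly computing the Gaussian-convolution density (the quadratic parts partially cancel because $\tau_0\cdot\delta\lambda^2/\alpha_*^2=1-\delta\sigma^2/((1-\delta)\alpha_*^2)\le 1$), and then controls the second derivative of the conjugate by $(f^*)''=1/f''\circ(f')^{-1}$ combined with $(\log P)''\le 0$.

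The same misreading flips the direction of the semigroup identity in your KKT step and leads you to the wrong $u_*$. Since $\mathcal{M}_{\psi_*}(\,\cdot\,;\tau_0)=-\log P_{V_{\alpha_*}}$, the semigroup property applied to the convex function $\psi_*$ gives $\mathcal{M}_{\psi_*}(\,\cdot\,;\tau_0+s)=\mathcal{M}_{-\log P_{V_{\alpha_*}}}(\,\cdot\,;s)$, not $\mathcal{M}'_{\psi_*,1}(v;\tau)=\mathcal{M}'_{-\log P,1}(v;\tau_0+\tau)$ as you write; the latter would hold if $\psi_*$ were $\mathcal{M}_{-\log P}(\,\cdot\,;\tau_0)$, which it is not. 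Consequently the right choice is the one that makes the KKT smoothing parameter equal $\tau_0$ \emph{itself} (so that $s=0$ and $\mathcal{M}'_{\psi_*,1}(v;\tau_0)=-\xi_{V_{\alpha_*}}(v|\lambda)$ exactly), namely $u_*=\frac{\alpha_*\sqrt{\delta}(1-\delta)}{\alpha_*^2(1-\delta)-\delta\sigma^2}$. Your candidate $u_*=\alpha_*(1-\delta)/(\sqrt{\delta}\sigma^2)$, chosen so that $\tau_0+\alpha_*/(u_*\sqrt{\delta}\lambda^2)=\alpha_*^2/(\delta\lambda^2)$, does not match (for it to agree with the paper's $u_*$ one would need $\alpha_*^2(1-\delta)=2\delta\sigma^2$, which fails generically), and moreover the $s$ you would need in the corrected semigroup relation can be negative. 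With the paper's $u_*$ fixed, the remainder of your sketch --- using Gaussian integration by parts / Stein--Tweedie to turn the $H$-weighted expectation into a score-function pairing and saturate the Cauchy--Schwarz step that produced the lower bound --- is the right idea and is essentially what the paper does via identity \eqref{eq: GIP identity}. The obstacle you flag at the end (the proximal offset) is resolved precisely because at $\tau_{\mathrm{KKT}}=\tau_0$ the Moreau-derivative identity is exact, not approximate.
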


Theorem \ref{thm: optimal potential} provides a construction of the optimal potential \eqref{eq: optimal potential} that satisfies the system of equations \eqref{eq: KKT}. When $\Psi_{*}$ obtained using \eqref{eq: optimal potential} belongs to $\mathcal{C}_{\psi}$, then the $\alpha^2_{\psi_{*}}$ obtained from \eqref{eq: KKT} denotes the asymptotic limit of the generalization error of $\Psi_{*}$ and by Theorem \ref{thm: optimal potential}, we have that $\alpha^2_{\psi_{*}} = \alpha^2_{*}$, thereby achieving the lower bound. The proof of this deferred to Appendix \ref{apx: fund lim} and involves verifying that the construction in \eqref{eq: optimal potential} satisfies the system of equations \eqref{eq: KKT} and characterizing the sufficient conditions for the convexity of $\Psi_{*}$. 
\begin{remark}
The log-concavity of $P_{V_{\alpha_{*}}}(v|\lambda)$ must be verified on a case-by-case basis by first solving for $\alpha_{*}$. A sufficient condition that always ensures log-concavity of $P_{V_{\alpha_{*}}}(v|\lambda)$ is by letting $P_{B}$ be a log-concave density since convolving a Gaussian density with $P_{B}$ preserves its log-concavity. Even when $P_{B}$ is not log-concave, it is possible that if $\alpha_{*}$ obtained from solving \eqref{eq: lower bound} is greater than a certain threshold value, it smoothens out $P_B$ enough such that the resulting $P_{V_{\alpha_{*}}}(v|\lambda)$ is log-concave. This, in turn, characterizes a region of $\delta$ and $\sigma^2$, where the optimal convex potential is achievable and the lower bound is tight.
\end{remark}
We'll observe in numerically in Section \ref{sec: numerical sim}, where we consider the cases when $P_B$ is sparse-Gaussian and Rademacher both of which are not even differentiable but the optimal potential construction obtained from Theorem \ref{thm: optimal potential} is indeed convex for the values of $\delta$ and $\sigma^2$ chosen. Next, we look at the special case when the underlying parameter density $P_{B}$ is a Gaussian density. 

\begin{corollary}($\Psi_{*}$ for Gaussian $B$)
 \label{cor: optimal for Gaussian}  
 Let Assumptions \ref{ass: HDA}, \ref{ass: Gaussian feat and noise}, and \ref{ass: true dist} hold and $B\sim \mathcal{N}(0,1)$, then the optimal implicit bias is given as 
 \begin{equation}
  \label{eq: optimal for Gaussian}
  \Psi_*(\bm \beta) = \bm \beta^{T}\bm\Sigma^{1/2}\left(\frac{\sigma^2}{1-\delta} \bm I_{n} + \bm\Sigma\right)^{-1}\bm\Sigma^{1/2} \bm \beta.
 \end{equation}
\end{corollary}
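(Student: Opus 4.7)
The plan is to apply Theorem~\ref{thm: optimal potential} directly and show that, when $B\sim\mathcal{N}(0,1)$, the construction \eqref{eq: optimal potential} collapses to a separable quadratic that assembles into the stated matrix form. First I would compute the conditional density of $V_{\alpha_*} = B + \frac{\alpha_*}{\sqrt{\delta}\,\Lambda} H$ given $\Lambda=\lambda$. Since $B$ and $H$ are independent standard Gaussians, $V_{\alpha_*}\mid\Lambda=\lambda$ is Gaussian with mean $0$ and variance $s(\lambda):=1+\alpha_*^2/(\delta\lambda^2)$, so $P_{V_{\alpha_*}}(v\mid\lambda)$ is log-concave and the hypothesis of Theorem~\ref{thm: optimal potential} is automatically satisfied. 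Consequently
$$\log P_{V_{\alpha_*}}(v\mid\lambda) = -\frac{v^2}{2s(\lambda)} + c(\lambda),$$
where $c(\lambda)$ does not depend on $v$.

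Next I would evaluate the Moreau envelope of this log-quadratic at parameter $\tau(\lambda) := (\alpha_*^2(1-\delta)-\delta\sigma^2)/(\delta(1-\delta)\lambda^2)$. Writing the inner minimization as a scalar quadratic in $y$ and solving the first-order condition yields minimizer $y_* = v\,s(\lambda)/(s(\lambda)-\tau(\lambda))$; substituting back gives
$$\mathcal{M}_{\log P_{V_{\alpha_*}}(\,\cdot\,|\lambda)}\bigl(v;\tau(\lambda)\bigr) \;=\; c(\lambda) \;-\; \frac{v^2}{2\bigl(s(\lambda)-\tau(\lambda)\bigr)}.$$
A direct simplification, combining the $1+\alpha_*^2/(\delta\lambda^2)$ term from $s$ with the numerator of $\tau$, produces the key cancellation
$$s(\lambda)-\tau(\lambda) \;=\; 1+\frac{\sigma^2}{(1-\delta)\lambda^2} \;=\; \frac{\lambda^2+\sigma^2/(1-\delta)}{\lambda^2},$$
which is manifestly strictly positive, so the minimization is well-posed and the envelope is finite. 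Negating and discarding the additive $-c(\lambda)$ (which does not affect the argmin in \eqref{eq: sep implicit bias}) leaves
$$\psi_*(v,\lambda) \;=\; \frac{v^2\,\lambda^2}{2\bigl(\lambda^2+\sigma^2/(1-\delta)\bigr)}.$$

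Finally I would assemble the separable sum. Setting $v=\beta_i$ and $\lambda^2=\Sigma_{i,i}$ and using the diagonality of $\bm\Sigma$ (so that $\bm\Sigma^{1/2}$ and $(\frac{\sigma^2}{1-\delta}\bm I_n+\bm\Sigma)^{-1}$ commute) gives
$$\sum_{i=1}^{n}\psi_*(\beta_i,\Sigma_{i,i}) \;=\; \tfrac{1}{2}\,\bm\beta^{T}\bm\Sigma^{1/2}\Bigl(\tfrac{\sigma^{2}}{1-\delta}\bm I_n+\bm\Sigma\Bigr)^{-1}\bm\Sigma^{1/2}\bm\beta,$$
which matches the claimed $\Psi_*$ up to an overall positive factor of $1/2$ and an additive constant, both of which are inessential since scaling and translating the potential leaves \eqref{eq: sep implicit bias} invariant. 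The only step that could plausibly fail is well-posedness of the Moreau envelope of a concave function, which requires $\tau(\lambda)<s(\lambda)$; but as noted above this is confirmed by a one-line calculation, so there is no genuine obstacle and the corollary reduces to an explicit Gaussian Moreau computation.
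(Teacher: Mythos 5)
Your proposal is correct and follows essentially the same route as the paper's own proof: apply Theorem~\ref{thm: optimal potential}, observe that $P_{V_{\alpha_*}}(v\mid\lambda)$ is a centered Gaussian of variance $1+\alpha_*^2/(\delta\lambda^2)$ (hence log-concave), evaluate the Moreau envelope of the log-density explicitly, and assemble the separable quadratic into matrix form using diagonality of $\bm\Sigma$. Your intermediate computation $s(\lambda)-\tau(\lambda)=1+\sigma^2/((1-\delta)\lambda^2)$ is in fact more careful than the paper's, which writes the denominator as $\lambda^2+\delta\sigma^2/(1-\delta)$ in its intermediate display (apparently a stray $\delta$) before reverting to the correct $\sigma^2/(1-\delta)$ in the final boxed expression.
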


Corollary \ref{cor: optimal for Gaussian} comes from the direct application of Theorem \ref{thm: optimal potential} while ignoring the constant terms in the potential since they don't affect the outcome of the optimization. In fact, we observe that \eqref{eq: optimal for Gaussian} is quadratic, which is convex, and therefore, it achieves the lower bound $\alpha _ *^2$ on the generalization error. If $\frac{\sigma^2}{1-\delta} \gg \Lambda$, which happens either when the noise variance $\sigma^2$ is large or we are slightly over-parameterized, i.e., $\delta$ approaches 1, then the optimal potential $\Psi_{*}(\bm \beta) \approx \bm \beta^{T}{\bm \Sigma}\bm \beta$. In the other case, when $\frac{\sigma^2}{1-\delta} \ll \Lambda$, the optimal potential is $\Psi_{*}(\bm \beta) \approx \|\bm \beta\|^2_{2}$. In the special case when $\Sigma$ is isotropic or when the variance of the noise $\sigma^2=0$, the optimal convex potential is exactly the Euclidean norm squared, the implicit bias of SGD.
\begin{remark}(Comparison with \cite{oravkin2021optimal})
Although we consider a bigger class of interpolators in \eqref{eq: interpolation class}, the set of interpolators considered in \cite{oravkin2021optimal} is not completely inclusive in our class. The key difference is that our optimal convex potential construction doesn't depend on the observed labels data $X,y$. In contrast, the optimal linear response interpolator considered in \cite{oravkin2021optimal} involves a quadratic potential that can depend on $X,y$, and this dependence makes analysis quite difficult outside of the quadratic case. This seems to be a subtle difference, and we, in fact, see similar qualitative trends as discussed under Corollary \ref{cor: optimal for Gaussian}. 
\end{remark}

\begin{figure*}[h]
    \centering
    \begin{tabular}{ccc}
    \includegraphics[width=.3\textwidth]{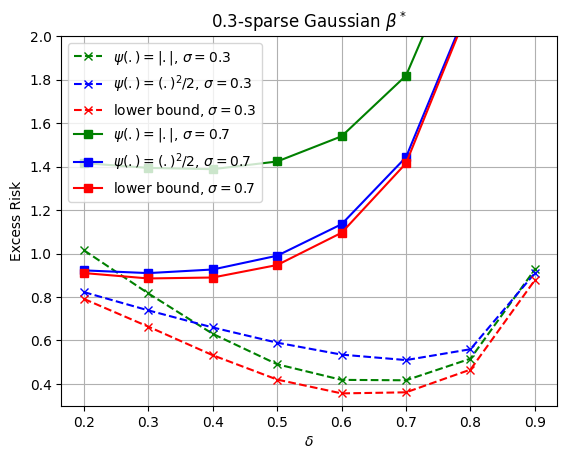}&
    \includegraphics[width=.3\textwidth]{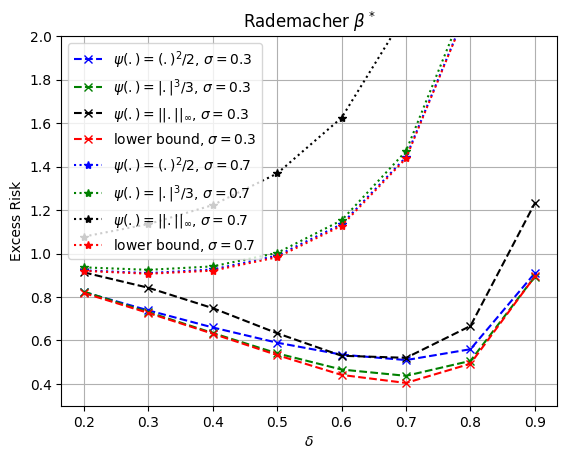}&
    \includegraphics[width=.3\textwidth]{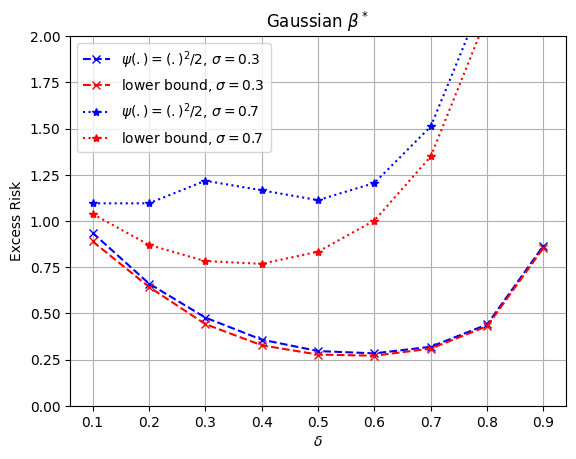}\\
    (a)&(b)&(c)
    \end{tabular}
    \caption{Comparison of excess risk of different interpolators (a) when $P_{B}$ has a 0.3-sparse Gaussian density and $\Lambda = 1$ a.s (b) when $P_{B}$ has a Rademacher density and $\Lambda = 1$ a.s (c) when $P_{B}$ has a Gaussian density and $\Lambda^2$ takes the value $4$ with probability $0.3$ and $0.1$ with probability $0.7$.}
     \label{fig: ER}
\end{figure*}
\section{Numerical Simulations}
\label{sec: numerical sim}

 In this section, we provide numerical simulations of the results derived in section \ref{sec: PA} and \ref{sec: opt imp bias} and provide insights on the implications of these results for certain special cases. In particular, we study the cases when the prior distribution of the underlying signal parameter $P_B$ is a sparse Gaussian (Figure \ref{fig: ER} (a)), Rademacher (Figure \ref{fig: ER} (b)) and Gaussian (Figure \ref{fig: ER} (c)). Figures \ref{fig: opt_spar} (a) and \ref{fig: opt_spar} (b) show the structure of the optimal convex potentials when the underlying signal parameter $P_B$ is a sparse Gaussian and Rademacher, respectively. We normalize the underlying prior signal such that $\mathbb{E}[B^2] = 1$ and define the signal-to-noise ratio (SNR) as $\frac{1}{\sigma^2}$ and consider the regimes of low SNR ($\sigma=0.7$) and high SNR ($\sigma=0.3$). All the above-mentioned plots evaluate the asymptotic theoretical limits of the results derived and are obtained from solving a system of nonlinear equations involving expectations of certain quantities. We use standard packages like Scipy to compute these expectations using numerical integrals and solve the system of non-linear equations. Additional plots demonstrating the concentration of non-asymptotics are deferred to the Appendix \ref{apx: PA}.
 
\begin{figure}
    
    \centering
    \begin{tabular}{cc}
    \includegraphics[width=.45\textwidth]{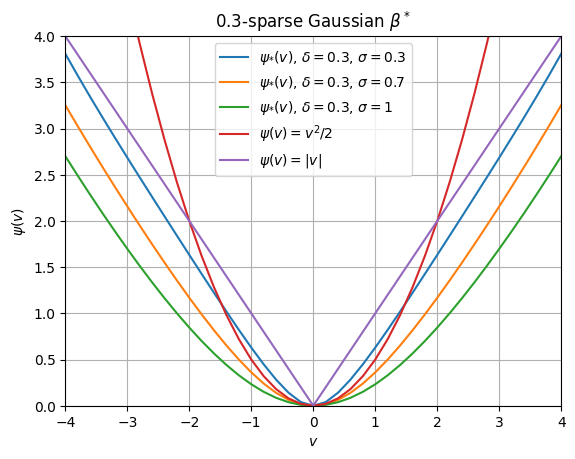}&
    \includegraphics[width=.45\textwidth]{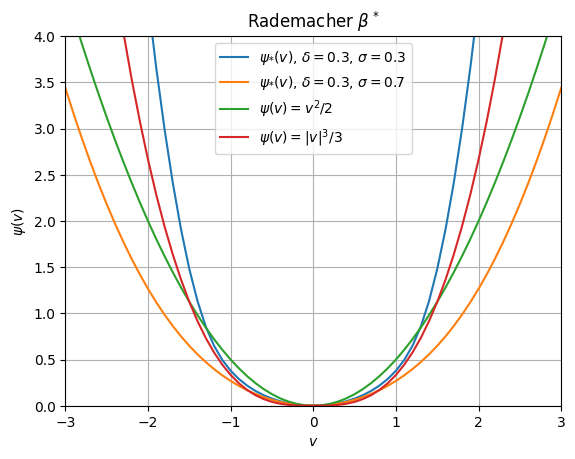}\\
    (a)&(b)
    \end{tabular}

    \caption{Comparison of the structure of the optimal convex potentials when (a) $P_{B}$ has a 0.3-sparse Gaussian density and $\Lambda = 1$ a.s (b) $P_{B}$ has a Rademacher density density and $\Lambda = 1$ a.s}
    \label{fig: opt_spar}
\end{figure}

    

 \textbf{Sparse Gaussian.} In Figure \ref{fig: ER} (a), we consider the case when the underlying unknown parameter $B$ is 0.3-sparse Gaussian, i.e., with probability $0.3$ behaves like a Gaussian and is zero otherwise. We consider isotropic data and $B$ is scaled appropriately such that its variance is one. The y-axis represents the excess risk of the interpolating solution obtained, and the x-axis sweeps across the overparameterization ratio $\delta$. In the absence of noise, it's well known that we get perfect recovery for the $\ell_1$ norm at approximately two times the sparsity of the signal \cite{chandrasekaran2012convex}, but in the presence of noise, interpolation can't recover the signal. In the high SNR regime ($\sigma = 0.3$), we observe that the minimum $\ell_1$ interpolator does, in fact, outperform the minimum $\ell_2$ interpolator for certain regions of $\delta$. But in the low SNR regime ($\sigma = 0.7$), observe that the $\ell_1$ interpolator performs significantly worse than $\ell_2$, which suggests that imposing structure while interpolating noisy labels can amplify the noise more than the recovery of the signal. We additionally observe that $\ell_2$ is, in fact, very close to the optimal performance characterized by the lower bound. Figure \ref{fig: opt_spar} (a) shows the structure of the optimal convex potentials that achieve these lower bounds at $\delta = 0.3$ at different SNRs. We see that the optimal potential behaves like a smoothened version of the $\ell_1$-norm, and as SNR decreases, the optimal potential approaches the Euclidean norm squared, supporting our previous observations that the $\ell_2$-norm interpolator is, in fact, close to optimal at low SNR.

\textbf{Rademacher.} In Figure \ref{fig: ER} (b), we consider Rademacher distributed $B$, where it take the values $\{\pm 1\}$ with equal probability with isotropic Gaussian data. The recovery threshold in the absence of noise for the $\ell_{\infty}$-norm was shown to be $\delta = 0.5$ \cite{chandrasekaran2012convex}. In the presence of noise, even in the high SNR regime ($\sigma=0.3$), we observe that $\ell_{\infty}$-norm is outperformed by both the minimum $\ell_2$ and $\ell_3$ norm interpolators. In the low SNR regime ($\sigma=0.7$), this gap in performance only grows wider as expected. Looking at the structure of the optimal convex potential in Figure \ref{fig: opt_spar} (b), we see that for high SNRs, the optimal potential is much flatter around the origin and increases steeply at around $1$ compared to the square and cubic potentials and as we move to the low SNR regime, the optimal potential smoothens out.

\textbf{Gaussian.} Finally, we consider the Gaussian prior in Figure \ref{fig: ER} (c). For isotropic data, we have established in Corollary \ref{cor: optimal for Gaussian} that the optimal potential is indeed the minimum $\ell_2$-norm interpolator. So now, we consider non-isotropic data with a bi-level eigen-spectrum where the diagonal entries of covariance $\bm \Sigma$ are set to $4$ with probability $0.3$ and $0.1$ with probability $0.7$. In the high SNR regime ($\sigma=0.3$), we observe that the minimum $\ell_2$-norm interpolator is quite close to the lower bound, which is achieved by the optimal potential, which has access to the elements of $\bm \Sigma$. This is not surprising since from \eqref{eq: optimal for Gaussian}, we observe that for high SNRs, the optimal potential approaches the Euclidean norm squared. In contrast, for the low SNR regime ($\sigma=0.7$), we see a significant gap in the performance of the minimum $\ell_2$ norm interpolator and the optimal achievable interpolator. This again can be explained from \eqref{eq: optimal for Gaussian}, where for low SNRs, we see that the optimal convex potential $\Psi_{*}(\bm \beta) \approx \bm \beta^{T}{\bm \Sigma}\bm \beta$. Therefore, having access to the eigen-spectrum of the data source makes the difference when the noise variance is large. 
\section{Deep Neural Networks}
\label{sec:DNN}
In this section, we discuss the applicability of our results to non-linear models, in particular deep networks. Linearity plays a key role throughout our analysis; therefore, our results presented may not directly translate to general non-linear settings, but in certain regimes, it can be shown that neural networks are well-approximated by linear models. The neural tangent kernel (NTK) \cite{jacot2018neural} framework has been one the main tools to theoretically understand the optimization of infinitely wide neural networks. In this infinite width limit, training using gradient descent becomes equivalent to optimizing linear functions in the infinite-dimensional Hilbert space defined by the NTK, which is in line with our problem setting of letting $n\rightarrow\infty$. Another key assumption we make is the Gaussianity of the data. Although it is conceivable that due to Gaussian universality, the analysis shown holds more generally for non-Gaussian data, this remains to be shown. In terms of a practical algorithm to arrive at these minimum convex interpolators, \cite{azizan2021stochastic} shows both experimentally and theoretically the validity of the implicit regularization property of SMD if the initialization is close enough to the manifold of global minima (something that comes for free in the highly overparameterized case). The optimal convex potentials derived in this work can be implemented directly using the SMD update rule if the derivative of the convex potential is invertible, and the separability makes this implementation efficient. Clearly, the choice of potential plays an important role in determining the generalization performance, as seen in Figure \ref{fig: ge cifar}; therefore, it should be treated as a hyper-parameter during optimization. An extensive survey of empirical experiments by varying the choice of potential on different model architectures and problem domains would be useful in guiding the choice of potential for a given learning problem.

\section{Conclusion}
This work provides a precise asymptotic analysis of the generalization performance interpolators for linear models obtained as a minimization of a convex potential, which is characterized by the implicit bias of the optimization algorithm. Additionally, we also derive the fundamental lower bounds on the achievable generalization error of interpolators obtained from the minimization of convex potentials and characterize the optimal convex potential that achieves these bounds. Extending these results to non-asymptotic settings and characterizing the optimal implicit bias in this setting are important future directions. It would also be interesting to generalize these results to non-linear models and even study the role of implicit bias in classification problems.

\bibliographystyle{apalike}  
\bibliography{ref}

\newpage 
\appendix


\section{Useful facts}
\label{apx: background}

\subsection{Properties of Moreau Envelope}
In this section, we provide some properties of Moreau Envelope, which will be used in our analysis later. These results are mainly borrowed from \cite{rockafellar2009variational} and also previously used in \cite{taheri2020sharp}
\begin{proposition}
\label{prop: ME}   
Let $\psi : \mathbb{R}\xrightarrow[]{}\mathbb{R}$ be a proper lower semi-continuous convex function. 
\begin{enumerate}
    \item  Then $\lim_{\alpha \xrightarrow[]{} 0_{+}}\mathcal{M}_{\psi}(x; \alpha) \xrightarrow[]{} \psi(x)$ and $\lim_{\alpha \xrightarrow[]{} \infty}\mathcal{M}_{\psi}(x; \alpha) \xrightarrow[]{} \min_{y\in \mathbb{R}}\psi(y)$.
    \item The derivatives of the Moreau Envelope satisfy the following
    \begin{equation}
    \mathcal{M}^{'}_{\psi,1}(x; \alpha):= \frac{\partial \mathcal{M}_{\psi}(x; \alpha)}{\partial x} = \frac{1}{\alpha}(x - \text{prox}_{\psi}(x; \alpha)),
    \end{equation}
    \begin{equation}
    \mathcal{M}^{'}_{\psi,2}(x; \alpha):= \frac{\partial \mathcal{M}_{\psi}(x; \alpha)}{\partial \alpha} = -\frac{1}{2\alpha^2}(x - \text{prox}_{\psi}(x; \alpha))^2 .
    \end{equation}
\end{enumerate}
\end{proposition}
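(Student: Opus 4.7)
The statement packages four standard facts about the Moreau envelope of a proper lower semi-continuous convex $\psi$. The plan is to handle the two limits in part 1 by sandwich arguments, and the two derivative identities in part 2 by an envelope-theorem argument applied to the jointly parameterized objective $F(y;x,\alpha):=\psi(y)+\frac{1}{2\alpha}(x-y)^2$, whose minimizer in $y$ is by definition $\text{prox}_{\psi}(x;\alpha)$.

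For the first limit in part 1, evaluating the infimum defining $\mathcal{M}_{\psi}(x;\alpha)$ at the feasible point $y=x$ yields the upper bound $\mathcal{M}_{\psi}(x;\alpha)\le\psi(x)$, hence $\limsup_{\alpha\to 0^+}\mathcal{M}_{\psi}(x;\alpha)\le\psi(x)$. For the matching $\liminf$, writing $y^{*}_\alpha:=\text{prox}_{\psi}(x;\alpha)$, I would use the bound $\frac{1}{2\alpha}(x-y^{*}_\alpha)^2\le\psi(x)-\psi(y^{*}_\alpha)\le\psi(x)-\inf_y\psi(y)$ to conclude $|x-y^{*}_\alpha|=O(\sqrt{\alpha})\to 0$; lower semi-continuity of $\psi$ then gives $\liminf_{\alpha\to 0^+}\psi(y^{*}_\alpha)\ge\psi(x)$ and hence $\liminf\mathcal{M}_{\psi}(x;\alpha)\ge\psi(x)$. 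For the second limit, the trivial bound $\mathcal{M}_{\psi}(x;\alpha)\ge\inf_y\psi(y)$ is immediate, while for any $\varepsilon>0$ picking a near-minimizer $y_\varepsilon$ with $\psi(y_\varepsilon)\le\inf\psi+\varepsilon$ and plugging it into the infimum yields $\mathcal{M}_{\psi}(x;\alpha)\le\psi(y_\varepsilon)+\frac{(x-y_\varepsilon)^2}{2\alpha}$, whose $\alpha\to\infty$ limit is $\psi(y_\varepsilon)\le\inf\psi+\varepsilon$; sending $\varepsilon\to 0$ closes the sandwich.

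For part 2, the key observation is that $F(\cdot;x,\alpha)$ is $1/\alpha$-strongly convex in $y$, so the minimizer $y^{*}(x,\alpha)=\text{prox}_{\psi}(x;\alpha)$ is unique and depends continuously on $(x,\alpha)$. The envelope (Danskin) theorem then says that when differentiating $\mathcal{M}_{\psi}(x;\alpha)=F(y^{*}(x,\alpha);x,\alpha)$, the contribution from the implicit dependence of $y^{*}$ on the parameter vanishes by first-order optimality, leaving only the explicit partials
\begin{align*}
\partial_x F(y^{*};x,\alpha) &= \tfrac{1}{\alpha}(x-y^{*}), \\
\partial_\alpha F(y^{*};x,\alpha) &= -\tfrac{1}{2\alpha^2}(x-y^{*})^2,
\end{align*}
which are exactly the identities claimed.

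The main obstacle is rigorously justifying the envelope theorem at this level of generality, since $\psi$ is only assumed lower semi-continuous and possibly non-smooth. A clean workaround is to establish the $x$-derivative identity directly: $\mathcal{M}_{\psi}(\cdot;\alpha)$ is itself convex, and one verifies by a two-sided inequality — using that $y^{*}(x,\alpha)$ is optimal for the infimum defining $\mathcal{M}_{\psi}(x;\alpha)$ while $y^{*}(x+h,\alpha)$ is merely feasible for $\mathcal{M}_{\psi}(x+h;\alpha)$ and vice versa — that $\frac{1}{\alpha}(x-y^{*}(x,\alpha))$ lies in the subdifferential at $x$; single-valuedness of $\text{prox}_{\psi}$ then upgrades the subgradient to a classical gradient. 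The $\alpha$-derivative identity follows by an analogous direct inequality comparing $\mathcal{M}_{\psi}(x;\alpha)$ and $\mathcal{M}_{\psi}(x;\alpha+h)$, or by combining the chain rule with the continuity of $y^{*}(x,\alpha)$ in $\alpha$ established en route to part 1.
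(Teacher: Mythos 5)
The paper does not actually prove Proposition~\ref{prop: ME}: these facts are imported without argument from \cite{rockafellar2009variational} (and were used in \cite{taheri2020sharp}), so there is no in-paper proof to compare against. Your self-contained argument is essentially the textbook route and is correct, with a few spots worth tightening. In part~1, the bound $\frac{1}{2\alpha}(x-y^{*}_\alpha)^2\le\psi(x)-\inf_y\psi(y)$ silently assumes $\inf\psi>-\infty$ (implicit in the statement via the $\min$ in the second limit); a bound that works unconditionally is to take any $g\in\partial\psi(x)$ and use $\psi(y^{*}_\alpha)\ge\psi(x)+g(y^{*}_\alpha-x)$, which yields the sharper $|x-y^{*}_\alpha|\le 2\alpha|g|$. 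For the $x$-derivative in part~2, you do not need the envelope theorem or continuity of $\text{prox}_\psi$ in $x$ at all: the one-sided feasibility inequality $\mathcal{M}_\psi(x+h;\alpha)\le\mathcal{M}_\psi(x;\alpha)+\frac{h}{\alpha}\bigl(x-\text{prox}_\psi(x;\alpha)\bigr)+\frac{h^2}{2\alpha}$, combined with convexity of $\mathcal{M}_\psi(\cdot;\alpha)$, forces any subgradient $g$ at $x$ to satisfy $gh\le\frac{h}{\alpha}(x-\text{prox}_\psi(x;\alpha))+\frac{h^2}{2\alpha}$ for all $h$, pinning $g$ down uniquely. For the $\alpha$-derivative, your appeal to ``the chain rule with continuity of $y^{*}$ in $\alpha$'' is imprecise (the chain rule needs differentiability, not just continuity, of $y^{*}$); a cleaner closure is to set $t=1/\alpha$, observe that $t\mapsto\mathcal{M}_\psi(x;1/t)$ is concave as an infimum of affine functions of $t$, and repeat the two-sided feasibility argument in $t$, where you do need continuity of $y^{*}$ in $t$ — a fact that should be proved (e.g.\ via strong convexity or Berge's theorem), since your part~1 argument only establishes the limit $y^{*}_\alpha\to x$ as $\alpha\to 0^{+}$, not continuity at interior $\alpha$.
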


\begin{proposition}
 \label{prop: ME FC}
 For $\alpha>0$ and a function $h$, we have that
 \begin{equation}
     \mathcal{M}_{\psi}(x; \alpha) = \frac{x^2}{2\alpha} - \frac{1}{\alpha}\left(\frac{x^2}{2} + \alpha \psi(x)\right)^{*}
 \end{equation}
 where $\left(\frac{x^2}{2} + \alpha \psi(x)\right)^{*}$ is the convex conjugate of $\frac{x^2}{2} + \alpha \psi(x)$.
\end{proposition}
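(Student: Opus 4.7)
The plan is to prove this by a direct algebraic manipulation of the defining infimum of the Moreau envelope, matching it term by term with the defining supremum of the Fenchel conjugate. No convex analysis machinery beyond the definitions of $\mathcal{M}_\psi$ and $(\cdot)^*$ should be needed, and the only regularity required is that $\frac{x^2}{2} + \alpha \psi(x)$ be proper (which holds since $\alpha > 0$ and $\psi$ is proper lsc convex), so that the conjugate is well defined.

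First, I would expand the square in the definition:
\begin{equation*}
\mathcal{M}_\psi(x;\alpha) = \min_{y \in \mathbb{R}} \left\{ \psi(y) + \frac{(x-y)^2}{2\alpha} \right\} = \frac{x^2}{2\alpha} + \min_{y \in \mathbb{R}} \left\{ \psi(y) + \frac{y^2}{2\alpha} - \frac{xy}{\alpha} \right\}.
\end{equation*}
Pulling the factor $1/\alpha$ out of the remaining $y$-optimization and converting the minimum into a maximum with a sign flip gives
\begin{equation*}
\mathcal{M}_\psi(x;\alpha) = \frac{x^2}{2\alpha} - \frac{1}{\alpha} \sup_{y \in \mathbb{R}} \left\{ xy - \left( \frac{y^2}{2} + \alpha \psi(y) \right) \right\}.
\end{equation*}

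Second, I would recognize the supremum on the right as the Fenchel conjugate of $y \mapsto \frac{y^2}{2} + \alpha \psi(y)$ evaluated at $x$, which is precisely $\left(\frac{x^2}{2} + \alpha \psi(x)\right)^*$ in the paper's notation. Substituting gives the claimed identity. The only step that might merit a brief justification is that the infimum in the Moreau envelope is attained (so the symbols $\min$ and $\sup$ are legitimate), which follows from strict convexity of the quadratic penalty and coercivity of the objective in $y$.

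There is really no main obstacle here; the statement is essentially the definitional duality between Moreau envelopes and Fenchel conjugates of quadratically perturbed functions, and the proof is one chain of equalities. If anything, the only subtlety is notational: the paper writes $\left(\frac{x^2}{2} + \alpha\psi(x)\right)^*$ to mean the conjugate of the function $y \mapsto \frac{y^2}{2} + \alpha\psi(y)$ evaluated at $x$, and I would be careful in the write-up to name this function (say $g(y) := \frac{y^2}{2} + \alpha\psi(y)$) before invoking its conjugate, so that the reader does not mistake the expression for the conjugate of a function of $x$ alone.
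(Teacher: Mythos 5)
Your proof is correct, and it is the standard derivation. The paper itself does not supply a proof for this proposition: it is stated without argument in the ``Useful facts'' appendix (citing Rockafellar and Wets), so there is no alternative route to compare against. The chain of equalities you give --- expand the square, pull out $1/\alpha$, convert the $\min$ to $-\sup$, and recognize the resulting supremum as the Fenchel conjugate of $y \mapsto \tfrac{y^2}{2} + \alpha\psi(y)$ evaluated at $x$ --- is exactly the computation that fills this gap, and your remark about naming $g(y) := \tfrac{y^2}{2} + \alpha\psi(y)$ explicitly is good practice given the paper's somewhat abusive notation. One small observation: the proposition as printed refers to ``a function $h$'' that never appears in the identity; this is evidently a typo for $\psi$, and your write-up correctly works with $\psi$ throughout.
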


\begin{proposition}{(Inverse of Moreau envelope)}{[\cite{advani2016statistical}, result 23 in appendix]}
\label{prop: ME inv}
    For $\alpha>0$ and $\psi$ a convex, lower semi-continuous function such that $g(x) = \mathcal{M}_{\psi}(x; \alpha)$, the Moreau envelope can be inverted so that $\psi(x)= - \mathcal{M}_{-g}(x; \alpha)$.
\end{proposition}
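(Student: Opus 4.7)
The plan is to derive the inverse Moreau envelope identity by combining the convex-conjugate representation from Proposition \ref{prop: ME FC} with the Fenchel--Moreau biconjugate theorem. First I would introduce the auxiliary function $h(x) := \tfrac{x^2}{2} + \alpha\psi(x)$. Since $\psi$ is assumed convex and lower semi-continuous (and taken to be proper), $h$ inherits these properties. Proposition \ref{prop: ME FC} then yields $g(x) = \mathcal{M}_\psi(x;\alpha) = \tfrac{x^2}{2\alpha} - \tfrac{1}{\alpha} h^*(x)$, where $h^*$ denotes the Fenchel conjugate of $h$.

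The second step is a direct manipulation: substitute this formula for $g$ into the definition $\mathcal{M}_{-g}(x;\alpha) = \min_y \bigl\{-g(y) + \tfrac{1}{2\alpha}(x-y)^2\bigr\}$, expand the quadratic $(x-y)^2 = x^2 - 2xy + y^2$, and observe that the $\pm \tfrac{y^2}{2\alpha}$ terms cancel. What remains can be rewritten as $\tfrac{x^2}{2\alpha} - \tfrac{1}{\alpha}\sup_y\{xy - h^*(y)\}$, which by the definition of the conjugate is exactly $\tfrac{x^2}{2\alpha} - \tfrac{1}{\alpha}(h^*)^*(x)$.

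The final step invokes the Fenchel--Moreau biconjugate theorem: because $h$ is proper, convex, and lower semi-continuous, $(h^*)^* = h$. Substituting this back into the previous display and cancelling the $\tfrac{x^2}{2\alpha}$ terms leaves $\mathcal{M}_{-g}(x;\alpha) = \tfrac{x^2}{2\alpha} - \tfrac{h(x)}{\alpha} = -\psi(x)$, equivalently $\psi(x) = -\mathcal{M}_{-g}(x;\alpha)$, which is the claim.

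The only subtle points I anticipate are verifying the hypotheses of the biconjugate theorem for the auxiliary $h$, and ensuring that the infimum in $\mathcal{M}_{-g}$ is attained (so that the min/sup reformulation is not vacuous and $\mathcal{M}_{-g}$ is actually finite-valued in the relevant region). Both reduce to standard consequences of properness, convexity and lower semi-continuity of $\psi$, so I do not expect any deep obstacle beyond careful bookkeeping; the heart of the argument is really just the chain $\mathcal{M}_\psi \leftrightarrow h^* \leftrightarrow h^{**}=h \leftrightarrow \psi$ provided by convex duality.
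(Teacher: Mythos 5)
The paper states this proposition without proof, citing it directly from an external reference (result 23 in the appendix of the Advani--Ganguli paper), so there is no in-paper argument to compare against. Your derivation is correct and is the natural self-contained route: expand $\mathcal{M}_{-g}$ using the representation $g(x)=\tfrac{x^2}{2\alpha}-\tfrac{1}{\alpha}h^*(x)$ from Proposition~\ref{prop: ME FC} with $h(x)=\tfrac{x^2}{2}+\alpha\psi(x)$, observe the quadratic terms in $y$ cancel so that the minimization becomes exactly $-\tfrac{1}{\alpha}(h^*)^*(x)$, and then invoke Fenchel--Moreau biconjugacy $(h^*)^*=h$ (valid because $h$ is proper, convex, and lsc as a sum of $\tfrac{x^2}{2}$ and $\alpha\psi$ with $\alpha>0$). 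Your closing caveat is also apt: the identity $\mathcal{M}_{-g}=\tfrac{x^2}{2\alpha}-\tfrac{1}{\alpha}h^{**}$ holds at the level of infima/suprema without needing the infimum to be attained, and the resulting inner problem is convex in $y$ since $h^*$ is convex and the remaining $y$-dependence is linear, so no further hypotheses beyond properness, convexity, and lower semicontinuity of $\psi$ are needed.
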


\subsection{Properties of Fisher Information}
We now state some standard properties of the Fisher information of location which are used in our analysis and the additional details of which can be found in \cite{blachman1965convolution}. 

\begin{proposition}
\label{prop: FI}
Let $X$ be a zero mean random variable with a differentiable probability density $P_{X}$ such that $P_{X}(x)>0, -\infty<x<\infty$ and the following integral is well-defined
\begin{equation}
    \mathcal{I}(X) := \int_{-\infty}^{\infty} \frac{(P^{'}_{X}(x))^2}{P_{X}(x)}dx.
\end{equation}
The Fisher information of location $\mathcal{I}(X)$ defined as above satisfies the following properties.
\begin{enumerate}[label=(\alph*)]
    \item For any $c\in \mathbb{R}$, $\mathcal{I}(cX) = \mathcal{I}(X)/c^2$
    \item (Cramer-Rao bound) $\mathcal{I}(cX)\leq \frac{1}{\mathbb{E}[X^2]}$, with equality if and only if $X$ has a Gaussian.
    \item Let $X_1,X_2$ be independent random variables with well-defined $\mathcal{I}(X_1),\mathcal{I}(X_2)$ and $\alpha \in [0,1]$. Then it holds that 
\begin{equation}
    \mathcal{I}(X_1+X_2) \leq \alpha^2\mathcal{I}(X_1) + (1-\alpha)^2\mathcal{I}(X_2)
\end{equation}
    \item (Stam's inequality)For the two independent random variables $X_1,X_2$ defined above, the following holds
    \begin{equation}
        \mathcal{I}(X_1+X_2) \leq \frac{\mathcal{I}(X_1)\cdot\mathcal{I}(X_2)}{\mathcal{I}(X_1)+\mathcal{I}(X_2)}
    \end{equation}
    The inequality is obtained from optimizing the upper bound over $\alpha$ in (c), and the inequality becomes equality if and only if $X_1 ,X_2$ are independent Gaussians.
\end{enumerate}
\end{proposition}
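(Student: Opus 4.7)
These are classical results due to Blachman and Stam, so my plan is to verify each part directly from the definition $\mathcal{I}(X)=\mathbb{E}[\rho_X(X)^2]$, where $\rho_X(x):=P_X'(x)/P_X(x)$ is the score function, using only the change-of-variables formula, the Cram\'er--Rao inequality, and the projection characterization of the score of a sum. Throughout I would lean on the identity $\mathbb{E}[\rho_X(X)]=0$, which follows from integration by parts under the tail decay implied by finiteness of the Fisher information.

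\emph{Parts (a) and (b).} For (a), set $Y=cX$; the change of variables gives $P_Y(y)=|c|^{-1}P_X(y/c)$ and therefore $\rho_Y(y)=c^{-1}\rho_X(y/c)$, so squaring and integrating yields $\mathcal{I}(cX)=\mathcal{I}(X)/c^2$ directly. For (b), I would view $X$ as an unbiased estimator of the location parameter $\theta$ in the shifted family $\{P_X(\cdot-\theta)\}$; the Cram\'er--Rao inequality then gives $\mathrm{Var}(X)\,\mathcal{I}(X)\geq 1$, with equality iff $\rho_X$ is affine in its argument, which characterizes a Gaussian.

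\emph{Parts (c) and (d).} The core identity is the projection formula: if $Y=X_1+X_2$ with $X_1,X_2$ independent, then $\rho_Y(Y)=\mathbb{E}[\rho_{X_1}(X_1)\mid Y]=\mathbb{E}[\rho_{X_2}(X_2)\mid Y]$. I would derive this by writing $P_Y$ as a convolution of $P_{X_1}$ and $P_{X_2}$ and differentiating under the integral. Given this, for any $\alpha\in[0,1]$ one may write $\rho_Y(Y)=\mathbb{E}[\alpha\rho_{X_1}(X_1)+(1-\alpha)\rho_{X_2}(X_2)\mid Y]$. Conditional Jensen then bounds $\mathcal{I}(Y)$ by $\mathbb{E}[(\alpha\rho_{X_1}(X_1)+(1-\alpha)\rho_{X_2}(X_2))^2]$, and independence of $X_1,X_2$ together with $\mathbb{E}[\rho_{X_i}(X_i)]=0$ kills the cross term, giving (c). Stam's inequality (d) is then the scalar calculus of minimizing the quadratic upper bound over $\alpha\in[0,1]$: the optimizer is $\alpha^*=\mathcal{I}(X_2)/(\mathcal{I}(X_1)+\mathcal{I}(X_2))$, and substituting produces the harmonic-mean bound.

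\textbf{Main obstacle.} The only delicate points are (i) justifying the projection identity for the score of a convolution, which requires enough regularity to differentiate under the integral and is implied by finiteness of the Fisher informations in the hypothesis, and (ii) pinning down the equality case in (d). For the latter, equality in conditional Jensen forces each $\rho_{X_i}(X_i)$ to be $\sigma(Y)$-measurable; combined with the independence of $X_1$ and $X_2$ (so the only functions of $X_i$ that are also functions of $Y=X_1+X_2$ are affine), this forces each score to be affine, and hence each $X_i$ to be Gaussian. Everything else is straightforward calculus with densities.
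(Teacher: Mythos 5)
Your plan reproduces exactly the classical Blachman--Stam argument, which is the reference the paper cites for this proposition without supplying its own proof: change of variables for (a), the location-family Cram\'er--Rao inequality for (b), the projection identity $\rho_Y(Y)=\mathbb{E}[\rho_{X_i}(X_i)\mid Y]$ together with conditional Jensen and vanishing of the cross-term for (c), and minimization over $\alpha$ for (d). So you are on the same route as the cited source. One small caveat worth flagging: as printed in the paper, part (b) reads ``$\mathcal{I}(cX)\leq 1/\mathbb{E}[X^2]$'', which has the inequality reversed and a stray $c$ --- the correct Cram\'er--Rao statement is $\mathcal{I}(X)\geq 1/\mathbb{E}[X^2]$ for zero-mean $X$, with equality iff $X$ is Gaussian; your line $\mathrm{Var}(X)\,\mathcal{I}(X)\geq 1$ gives precisely this, correctly. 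Also note that your remark on the equality case of (d) --- that equality in conditional Jensen forces each score to be $\sigma(Y)$-measurable and then affine --- is the right intuition but is stated loosely; equality in Jensen constrains the particular combination $\alpha\rho_{X_1}(X_1)+(1-\alpha)\rho_{X_2}(X_2)$ at the optimal $\alpha$, not each component separately, and extracting ``affine score'' from there (\`a la Carlen or the original Stam argument) takes a further, non-trivial step. You correctly flag this as the delicate part, so there is no error, but a rigorous write-up would need to fill it in rather than rely on the parenthetical reasoning as stated.
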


\begin{proposition}
 \label{prop: FI limits}
 Consider the random variables $H \sim \mathcal{N}(0,1)$ and $B$ such that $V_{\alpha}:= B + \alpha H$ satisfies the conditions in Proposition \ref{prop: FI} for $\alpha \in \mathbb{R}_{>0}$. Then we have that
 \begin{enumerate}[label=(\alph*)]
     \item $\lim_{\alpha\xrightarrow[]{}0_{+}} \alpha^2\mathcal{I}(V_{\alpha}) \xrightarrow[]{} 0$
     \item $\lim_{\alpha\xrightarrow[]{}\infty} \alpha^2\mathcal{I}(V_{\alpha}) \xrightarrow[]{} 1$
 \end{enumerate}
\end{proposition}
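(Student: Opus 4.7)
The plan is to reduce both limits to statements about $\mathcal{I}(B/\alpha+H)$ via the scaling property of Fisher information, and then sandwich this quantity using Stam's inequality on one side and the Cram\'er--Rao bound on the other.

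The preliminary step is to apply the scaling rule $\mathcal{I}(cX)=\mathcal{I}(X)/c^{2}$ from Proposition \ref{prop: FI}(a) with $c=1/\alpha$ to obtain
\begin{equation*}
\alpha^{2}\,\mathcal{I}(V_{\alpha}) \;=\; \mathcal{I}\!\left(\tfrac{V_{\alpha}}{\alpha}\right) \;=\; \mathcal{I}\!\left(\tfrac{B}{\alpha}+H\right).
\end{equation*}
This absorbs the prefactor $\alpha^{2}$ into a distributional rescaling of $B$, making both asymptotic regimes geometrically transparent: as $\alpha\to 0^{+}$ the rescaled signal $B/\alpha$ spreads out, and as $\alpha\to\infty$ it collapses to zero, so $B/\alpha+H$ degenerates to $H$.

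For part (a), I would apply Stam's inequality (Proposition \ref{prop: FI}(d)) to the independent pair $(B/\alpha,H)$, using $\mathcal{I}(B/\alpha)=\alpha^{2}\mathcal{I}(B)$ and $\mathcal{I}(H)=1$, to get
\begin{equation*}
0 \;\leq\; \alpha^{2}\,\mathcal{I}(V_{\alpha}) \;\leq\; \frac{\alpha^{2}\mathcal{I}(B)\cdot 1}{\alpha^{2}\mathcal{I}(B)+1} \;\xrightarrow[\alpha\to 0^{+}]{}\; 0,
\end{equation*}
which yields (a) by the squeeze theorem, provided $\mathcal{I}(B)<\infty$. For part (b), the same Stam bound tends to $1$ as $\alpha\to\infty$, giving $\limsup_{\alpha\to\infty}\alpha^{2}\mathcal{I}(V_{\alpha})\leq 1$. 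For the matching lower bound I would invoke the standard Cram\'er--Rao inequality $\mathcal{I}(X)\geq 1/\mathrm{Var}(X)$ applied to $V_{\alpha}/\alpha=B/\alpha+H$, whose variance equals $\mathrm{Var}(B)/\alpha^{2}+1$:
\begin{equation*}
\alpha^{2}\,\mathcal{I}(V_{\alpha}) \;=\; \mathcal{I}\!\left(\tfrac{B}{\alpha}+H\right) \;\geq\; \frac{1}{\mathrm{Var}(B)/\alpha^{2}+1} \;\xrightarrow[\alpha\to\infty]{}\; 1.
\end{equation*}
Sandwiching the upper and lower bounds yields (b).

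The only real subtlety lies in the regularity hypotheses on $B$: part (a) genuinely requires $\mathcal{I}(B)<\infty$, since otherwise the Stam bound becomes trivial and degenerate cases such as $B\equiv 0$ would in fact give $\alpha^{2}\mathcal{I}(V_{\alpha})\equiv 1$; part (b) requires only the finite second moment of $B$ already built into Assumption \ref{ass: true dist} in order for the Cram\'er--Rao side to be informative. Both conditions are natural in the context in which this proposition is invoked, so beyond Proposition \ref{prop: FI} and the classical Cram\'er--Rao bound no additional machinery is needed.
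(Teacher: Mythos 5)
Your argument is correct and takes essentially the same route as the one the paper sketches: the scaling identity $\alpha^2\mathcal{I}(V_\alpha)=\mathcal{I}(B/\alpha+H)$, a convolution upper bound, and a Cram\'er--Rao lower bound. Stam's inequality is precisely Proposition~\ref{prop: FI}(c) optimized over the mixing weight, so the paper's ``uses property~(c)'' and your use of property~(d) are interchangeable. The caveat you raise for part~(a) is a genuine issue, and it lies in the proposition's statement rather than in your proof: requiring $V_\alpha$ to satisfy the regularity conditions of Proposition~\ref{prop: FI} for every $\alpha>0$ does \emph{not} force $\mathcal{I}(B)<\infty$, and when $\mathcal{I}(B)=\infty$ the conclusion of~(a) can fail outright. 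Your degenerate example $B\equiv 0$ (excluded by Assumption~\ref{ass: true dist}) gives $\alpha^2\mathcal{I}(V_\alpha)\equiv 1$, and the same failure occurs for the Rademacher prior the paper itself considers: for $B\in\{\pm1\}$ the mixture $V_\alpha/\alpha = B/\alpha + H$ has two unit-variance Gaussian modes at $\pm 1/\alpha$, and a direct computation of $\int (p')^2/p$ shows $\mathcal{I}(V_\alpha/\alpha)\to 1$ as $\alpha\to 0^+$ as the modes separate, so $\alpha^2\mathcal{I}(V_\alpha)\to 1$, not $0$. Thus $\mathcal{I}(B)<\infty$ must be an explicit hypothesis for~(a). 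Part~(b), by contrast, needs no such assumption: taking the mixing weight to zero in Proposition~\ref{prop: FI}(c) gives $\mathcal{I}(B/\alpha+H)\leq\mathcal{I}(H)=1$ unconditionally, and the Cram\'er--Rao side $\mathcal{I}(B/\alpha+H)\geq 1/(1+\mathrm{Var}(B)/\alpha^2)\to 1$ uses only the finite second moment already built into Assumption~\ref{ass: true dist}; your sandwich is exactly right. Finally, note that the downstream use of this proposition in the proof of Theorem~\ref{thm: lower bound} invokes only the $\alpha\to\infty$ limit of part~(b): the $\alpha\to 0^+$ blowup of $h(\alpha)$ there is driven by the $\delta\sigma^2/(\alpha^2(1-\delta))$ term when $\sigma>0$, not by part~(a), so this caveat does not propagate to the paper's main results.
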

The proof of the above proposition involves using property (c) of proposition \ref{prop: FI}, and a similar result was also shown in \cite{taheri2021fundamental}.

\section{Precise Asymptotics}
\label{apx: PA}

In this section, we provide the proof for the precise asymptotics of the interpolating solutions obtained from \eqref{eq: sep implicit bias} and also study the special cases of the minimum $\ell_1$, $\ell_2$, $\ell_3$ and $\ell_{\infty}$-norm interpolator.

\subsection{Proof of Theorem \ref{thm: PA}}


\begin{proof}
Consider the following problem given in \eqref{eq: implicit bias}
\begin{equation}
    \min_{\bm \beta} \Psi(\bm \beta) \quad \text{subject to} \quad \bm y = \bm X \bm \beta
\end{equation}
Doing a change of variable $\bm w:= \frac{1}{\sqrt{n}}\Sigma^{\frac{1}{2}}(\bm \beta - \bm \beta^*)$, we the following constrained minimization problem
\begin{equation}
    \min_{\bm w} \Psi(\bm \beta^* + \sqrt{n}\Sigma^{-1/2}\bm w) \quad \text{subject to} \quad \bm G \bm w = \bm z
\end{equation}
\textbf{Boundedness of solution} Now, we assume that $\bm w$ can be restricted to a large enough bounded set $\bm w \in \mathcal{W}:=\{\bm w \text{ s.t } \|\bm w\|_2 \leq B_{+}\}$ without changing the optimization problem. This is more of a technicality required for the application of CGMT. In \cite{chang2021provable}, it was explicitly shown that for the minimum $\ell_2$-norm interpolator, this assumption is true. Since the value of $\ell_p$-norms is less than $\ell_2$-norm for $p>2$, it can be argued that this bounded set construction for $\ell_2$ is also valid for $\ell_p$-norms bigger than $\ell_2$. But more generally, for separable convex functions, this condition must be verified on a case-by-case basis. Alternatively, if we assume that $\alpha_{\psi}$ is bounded, then letting $B_+ = 2\alpha_{\psi}$ also obviates this issue. Taking this into consideration, we now have the following primary optimization (PO) problem.
\begin{equation}
\Phi(\bm G) = \min_{\bm w \in \mathcal{W}} \Psi(\bm \beta^* + \sqrt{n}\Sigma^{-1/2}\bm w) \quad \text{subject to} \quad \bm G \bm w = \bm z
\end{equation}
Using constrained CGMT formulation \cite{li2020exploring}, the Auxiliary optimization (AO) is given as 
\begin{equation}
\phi(\bm g, \bm h) = \min_{\bm w \in \mathcal{W}} \Psi(\bm \beta^* + \sqrt{n}\Sigma^{-1/2}\bm w) \quad \text{subject to} \quad \|\bm g \| \sqrt{\|\bm w\|^2 + \sigma^2} - \bm h^{T}\bm w - \sigma h \leq 0
\end{equation}
where $\bm g$ and $\bm h$ are random vector with iid standard Gaussian entries and $h$ is an iid standard Gaussian scalar. Bringing the constraint into the objective of the AO using Lagrange multiplier $u \geq 0$ and normalizing the constraint with $\sqrt{n}$, we get
\begin{equation}
( \hat{\bm w}^{AO}_n,u_n) := \text{arg} \min_{\bm w \in \mathcal{W}}\max_{u \geq 0} \psi(\bm \beta^* + \sqrt{n}\Sigma^{-1/2}\bm w) + u(\|\bm g_e\| \sqrt{\delta} \sqrt{\|\bm w\|^2 + \sigma^2} - \bm h_{e}^{T}\bm w - \frac{\sigma h}{\sqrt{n}})
\end{equation}
where $\bm h_e: = \frac{\bm h}{\sqrt{n}}$ and $\bm g_e: = \frac{\bm g}{\sqrt{m}}$. Interchange min max using \cite{fan1953minimax} since the objective is convex in $\bm w$ on a compact set $\mathcal{W}$ and concave in $u$. Next, using the square root trick on $\|\bm g_e\| \sqrt{\|\bm w\|^2 + \sigma^2}$, we have that
\begin{equation}
    \|\bm g_e\| \sqrt{\|\bm w\|^2 + \sigma^2} = \min_{\alpha \in \mathcal{A}}\frac{\alpha \|\bm g_e\|^2 }{2} + \frac{\|w\|^2 + \sigma^2}{2\alpha}.
\end{equation}
where $\mathcal{A}=[\sigma,\sqrt{\sigma^2 + B^2_{+}}]$. Plugging back into the AO, we get
\begin{multline}
( \hat{\bm w}^{AO}_n,u_n,\alpha_n) := \text{arg} \max_{u \geq 0} \min_{\bm w \in \mathcal{W},\alpha \in \mathcal{A}} \frac{u\alpha\sqrt{\delta}\|\bm { g}_e\|^2}{2} + \frac{u\sqrt{\delta}\sigma^2}{2\alpha}\\
- \frac{u\sigma h}{\sqrt{n}} + \frac{u\sqrt{\delta}}{2\alpha}\|\bm w\|^2  -u\bm h_{e}^{T}\bm w +\Psi(\bm \beta^* + \sqrt{n}\Sigma^{-1/2}\bm w)
\end{multline}
Using separability of $\Psi$ and appropriate re-scaling, we let $\Psi (\bm x) = \frac{1}{n} \sum_{i=1}^{n}\psi(x_i, \Sigma_{i,i})$, and moving the minimization over $\bm w$ inside the objective, we get 
\begin{multline}
 (\hat{\bm w}^{AO}_n,u_n,\alpha_n) := \text{arg} \max_{u \geq 0} \min_{\alpha \in \mathcal{A}} \frac{u\alpha\sqrt{\delta}}{2} + \frac{u\sqrt{\delta}\sigma^2}{2\alpha} - \frac{u\sigma h}{\sqrt{n}} \\
 + \min_{\bm w \in \mathcal{W}} \frac{1}{n}\sum_{i=1}^{n}\{ \frac{u\sqrt{\delta}n}{2\alpha}w_i^2  -u\sqrt{n} h_{i}w_i + \psi(\beta_i^* + \sqrt{n}\Sigma_{i,i}^{-1/2} w_i,\Sigma_{i,i})\}.
\end{multline}
Doing a change of variable back to the original weight vector $\bm \beta = \bm \beta^* + \sqrt{n}\Sigma^{-1/2}\bm w$, we get 
\begin{multline}
 (\hat{\bm \beta}^{AO}_n,u_n,\alpha_n) := \text{arg} \max_{u \geq 0} \min_{\alpha \in \mathcal{A}} \frac{u\alpha\sqrt{\delta}}{2} + \frac{u\sqrt{\delta}\sigma^2}{2\alpha} - \frac{u\sigma h}{\sqrt{n}} \\
 + \min_{\bm \beta \in \mathcal{B}_{\beta}} \frac{1}{n}\sum_{i=1}^{n}\{ \frac{u\sqrt{\delta}}{2\alpha}\Sigma_{i,i}(\beta_i - \beta_i^*)^2  -uh_{i}\Sigma_{i,i}^{1/2}(\beta_i - \beta_i^*) + \psi(\beta_i,\Sigma_{i,i})\}  
\end{multline}
where $\mathcal{B}_{\beta}:= \{\bm \beta \text{ s.t } \frac{\sqrt{\Sigma}}{\sqrt{n}}(\bm \beta - \bm \beta^*) \in \mathcal{W}\}$. Completing the squares for $\beta_i$ and identifying the Moreau Envelope gives us $D_n(\alpha,u)$ defined as the optimal value of the objective of the optimization defined below.
\begin{equation}
 (u_n,\alpha_n) := \text{arg} \max_{u \geq 0} \min_{\alpha \in \mathcal{A}} \frac{u\alpha\sqrt{\delta}}{2} + \frac{u\sqrt{\delta}\sigma^2}{2\alpha} - \frac{u\sigma h}{\sqrt{n}} + \frac{1}{n}\sum_{i=1}^{n}\{ \mathcal{M}_{\psi}(\beta_i^* + \frac{\alpha}{\sqrt{\delta}\Sigma_{i,i}^{1/2}}h_i ; \frac{\alpha}{u\sqrt{\delta}\Sigma_{i,i}}) - \frac{u\alpha}{2\sqrt{\delta}}h_i^2\}  
\end{equation}
Here $\hat{\bm \beta}^{AO}_{n,i} = \text{prox}_{\psi}(\beta_i^* + \frac{\alpha_n}{\sqrt{\delta}\Sigma_{i,i}^{1/2}}h_i; \frac{\alpha_n}{u_n\sqrt{\delta}\Sigma_{i,i}})$ is always unique given $\alpha_n, u_n$, since proximal is the solution to a strongly convex optimization. The above optimization is strictly convex in $\alpha$, so the saddle point solutions $ (u_n,\alpha_n)$ have unique $\alpha_n$. For $u_n$ to be unique, we need to assume $\frac{1}{n}\sum_{i=1}^{n}\{ \mathcal{M}_{\psi}(\beta_i^* + \frac{\alpha}{\sqrt{\delta}\Sigma_{i,i}^{1/2}}h_i ; \frac{\alpha}{u\sqrt{\delta}\Sigma_{i,i}})\} $ is strictly concave with probability approaching 1. 

\textbf{Asymptotic limits} We consider, the proportional asymptotic limit $n,m\rightarrow \infty, \frac{m}{n} \rightarrow \delta < 1$. In this limit, $\frac{u\sigma h}{\sqrt{n}} \xrightarrow[]{P} 0 $ and we also have that $\frac{1}{n}\sum_{i=1}^{n}\frac{u\alpha}{2\sqrt{\delta}}h_i^2 \xrightarrow[]{P} \frac{u\alpha}{2\sqrt{\delta}}$. Next, 
\begin{equation}
\frac{1}{n}\sum_{i=1}^{n}\{ \mathcal{M}_{\psi}(\beta_i^* + \frac{\alpha}{\sqrt{\delta}\Sigma_{i,i}^{1/2}}h_i ; \frac{\alpha}{u\sqrt{\delta}\Sigma_{i,i}}) \} \xrightarrow{P} \mathbb{E}[ \mathcal{M}_{\psi}(B + \frac{\alpha}{\sqrt{\delta}\Lambda}H ; \frac{\alpha}{u\sqrt{\delta}\Lambda^2})]
\end{equation}

As a consequence, we have point wise convergence of $D_{n}(\alpha,u) \xrightarrow{P} D(\alpha,u)$ which is the following scalar optimization problem
\begin{equation}
  \text{arg} \min_{\alpha \in \mathcal{A}} \max_{u \geq 0}  D(\alpha,u):= \text{arg} \min_{\alpha \in \mathcal{A}} \max_{u \geq 0} -\frac{u\alpha(1-\delta)}{2\sqrt{\delta}} + \frac{u\sqrt{\delta}\sigma^2}{2\alpha} + \mathbb{E}[ \mathcal{M}_{\psi}(B + \frac{\alpha}{\sqrt{\delta}\Lambda}H ; \frac{\alpha}{u\sqrt{\delta}\Lambda^2})]
\end{equation}
By \cite{andersen1982cox}, since both $D_n$ and $D$ are convex, concave in $\alpha,u$, the convergence is uniform, and we have that the objective of converges
\begin{equation}
    \phi(\bm g, \bm h) \xrightarrow[]{P}  \min_{\alpha \in \mathcal{A}} \max_{u \geq 0}  D(\alpha,u)
\end{equation}
If $\mathbb{E}[ \mathcal{M}_{\psi}(B + \frac{\alpha}{\sqrt{\delta}\Lambda}H ; \frac{\alpha}{u\sqrt{\delta}\Lambda^2})]$ is strictly concave in $u$, then $D(\alpha,u)$ is strictly convex and strictly concave, we have parameter convergence by \cite{newey1994large} (Lemma 7.75), therefore
\begin{equation}
     (\alpha_n,u_n)\xrightarrow{P}(\alpha^*,u^*):=\text{arg} \min_{\alpha \in \mathcal{A}} \min_{u \geq 0}  D(\alpha,u)
\end{equation}
In the absence of strong concavity of $u$, we only have the convergence of $\alpha_n \xrightarrow[]{P} \alpha^*$. Typically, distributional convergence requires strict concavity. Generalization error analysis doesn't.
\begin{equation}
 r(\bm{\hat \beta}^{AO}) = \frac{1}{n}(\bm{\hat \beta}^{AO} - \bm{\beta}^*)^{T}\bm \Sigma(\bm{\hat \beta}^{AO} - \bm{\beta}^*) + \sigma^2 = \| \bm{\hat w}^{AO} \|^2 + \sigma^2 = \|\bm{\bar g}\| \alpha^2_n \xrightarrow[]{P} \alpha_*
\end{equation}
Next, we derive the first-order optimality conditions for the scalar minimax problem.

\textbf{First-order optimality conditions}

\begin{multline}
\frac{\partial D(\alpha,u)}{\partial \alpha} = -\frac{u(1-\delta)}{2\sqrt{\delta}} -\frac{u \sqrt{\delta}\sigma^2}{2\alpha^2} + \frac{1}{\sqrt{\delta}}\mathbb{E}[\frac{H}{\Lambda} \mathcal{M}_{\psi,1}^{'}(B+\frac{\alpha}{\sqrt{\delta}\Lambda}H; \frac{\alpha}{u\sqrt{\delta}\Lambda^2})] \\+  \frac{1}{u\sqrt{\delta}}\mathbb{E}[\frac{1}{\Lambda^2} \mathcal{M}_{\psi,2}^{'}(B+\frac{\alpha}{\sqrt{\delta}\Lambda}H; \frac{\alpha}{u\sqrt{\delta}\Lambda^2})]   = 0
\end{multline}

\begin{equation}
 \frac{\partial D(\alpha,u)}{\partial u} = -\frac{\alpha(1-\delta)}{2\sqrt{\delta}} +\frac{\sqrt{\delta}\sigma^2}{2\alpha} -  \frac{\alpha}{u^2\sqrt{\delta}}\mathbb{E}[\frac{1}{\Lambda^2} \mathcal{M}_{\psi,2}^{'}(B+\frac{\alpha}{\sqrt{\delta}\Lambda}H; \frac{\alpha}{u\sqrt{\delta}\Lambda^2})]  = 0
\end{equation}
Using properties of Moreau Envelope (Proposition \ref{prop: ME}), we have
\begin{equation}
\mathbb{E}[\frac{1}{\Lambda^2} \mathcal{M}_{\psi,2}^{'}(B+\frac{\alpha}{\sqrt{\delta}\Lambda}H; \frac{\alpha}{u\sqrt{\delta}\Lambda^2})] = -\frac{1}{2}\mathbb{E}[\left(\frac{1}{\Lambda} \mathcal{M}_{\psi,1}^{'}(B+\frac{\alpha}{\sqrt{\delta}\Lambda}H; \frac{\alpha}{u\sqrt{\delta}\Lambda^2})\right)^2]
\end{equation}
Using the above inequality, we can derive the following optimality conditions
\begin{equation}
\mathbb{E}[\frac{H}{\Lambda} \mathcal{M}_{\psi,1}^{'}(B+\frac{\alpha}{\sqrt{\delta}\Lambda}H; \frac{\alpha}{u\sqrt{\delta}\Lambda^2})] = u(1-\delta) 
\end{equation}

\begin{equation}
\mathbb{E}[\left(\frac{1}{\Lambda} \mathcal{M}_{\psi,1}^{'}(B+\frac{\alpha}{\sqrt{\delta}\Lambda}H; \frac{\alpha}{u\sqrt{\delta}\Lambda^2})\right)^2] = u^2(1-\delta) - \frac{\delta \sigma^2 u^2}{\alpha^2} 
\end{equation}

\textbf{Distributional convergence}

Next, to show distributional convergence, we first assume weak convergence of the solution of the AO, i.e.
\begin{equation}
    \hat{\bm \beta}^{AO}_{n,i} = \text{prox}_{\psi}(\beta_i^* + \frac{\alpha_n}{\sqrt{\delta}\Sigma_{i,i}^{1/2}}h_i; \frac{\alpha_n}{u_n\sqrt{\delta}\Sigma_{i,i}}) \xrightarrow[]{D} \text{prox}_{\psi}(B + \frac{\alpha_{\psi}}{\sqrt{\delta}\Lambda} H ; \frac{\alpha_{\psi}}{u_{\psi}\sqrt{\delta}\Lambda^2})
\end{equation}
and we want to show that 

\begin{equation}
    \hat{\bm \beta}^{PO}_{n,i} \xrightarrow[]{D} \text{prox}_{\psi}(B + \frac{\alpha_{\psi}}{\sqrt{\delta}\Lambda} H ; \frac{\alpha_{\psi}}{u_{\psi}\sqrt{\delta}\Lambda^2})
\end{equation}

Define 
\begin{equation}
    F_n(\bm{\hat \beta}_n,\bm \beta^{*},\Sigma) := \frac{1}{n}\sum_{i=1}^{n}f(\hat \beta_{n,i},\beta^{*}_i,\Sigma_{i,i})
\end{equation}
where $f$ is any bounded Lipschitz function. Also, define the limit
\begin{equation}
  \kappa:= \mathbb{E}[f(X(B,\Lambda,H),B,\Lambda^2)]  
\end{equation}

For any fixed $\epsilon>0$, define the set
\begin{equation}
 \mathcal{S}_{\epsilon} = \mathcal{S}_{\epsilon}(\bm \beta^{*},\bm \Sigma) = \{\bm w = \frac{1}{\sqrt{n}}\Sigma^{\frac{1}{2}}(\bm \beta - \bm \beta^*) \in \mathcal{W} \text{ s.t } |F_n(\bm \beta, \bm \beta^*, \Sigma) - \kappa|> 2\epsilon\}   
\end{equation}

Consider the following perturbed PO and AO problems 
\begin{equation}
\Phi_{\mathcal{S}_{\epsilon}}(\bm G) = \min_{\bm w \in \mathcal{S}_{\epsilon}} \psi(\bm \beta^* + \sqrt{n}\Sigma^{-1/2}\bm w) \quad \text{subject to} \quad \bm G \bm w = \sigma \bm z
\end{equation}
and 
\begin{equation}
\phi_{\mathcal{S}_{\epsilon}}(\bm g, \bm h) = \min_{\bm w \in \mathcal{S}_{\epsilon}} \psi(\bm \beta^* + \sqrt{n}\Sigma^{-1/2}\bm w) \quad \text{subject to} \quad \|\bm g \| \sqrt{\|\bm w\|^2 + \sigma^2} - \bm h^{T}\bm w - \sigma h \leq 0
\end{equation}
Using \cite{thrampoulidis2018precise} Theorem 6.1(iii), it is sufficient to show existence of constants $\bar \phi,\bar \phi_{\mathcal{S}_{\epsilon}}$ and $\eta >0$ satisfying
\begin{enumerate}
    \item $\bar \phi_{\mathcal{S}_{\epsilon}} \geq \bar \phi + 3\eta$
    \item $\phi_(\bm g, \bm h) \leq \bar \phi +\eta$ with probability approaching 1.
    \item $\phi_{\mathcal{S}_{\epsilon}}(\bm g, \bm h) \geq \bar \phi_{\mathcal{S}_{\epsilon}}- \eta$ with probability approaching 1.
\end{enumerate}
to prove that $\bm {\hat w}_n \notin \mathcal{S}_{\epsilon}$ wpa 1.

\textbf{Condition 2} Choose $\bar \phi = D(\alpha_*,u_*)$, we have shown that $\phi(\bm g,\bm h) \xrightarrow[]{P} \bar\phi$. So for any $\eta>0$, we have that 
\begin{equation}
    \bar \phi + \eta \geq \phi(\bm g,\bm h) \geq \bar \phi - \eta
\end{equation}

\textbf{Condition 3} Let $c(\bm w) := \|\bm g \| \sqrt{\|\bm w\|^2 + \sigma^2} - \bm h^{T}\bm w - \sigma h$, clearly $c$ is strictly convex in $\bm w$. At the optimum $\bm {\hat w}_{n}^{AO}$, we have that 
\begin{equation}
    -\lambda\nabla_{\bm w}c({\hat w}_{n}^{AO}) \in \partial_{\bm w} \Psi(\bm \beta^* + \sqrt{n}\Sigma^{-1/2}\bm {\hat w}_{n}^{AO}) 
\end{equation}
for $\lambda\geq0$ and also by feasibility, we have $c(\bm {\hat w}_{n}^{AO})= 0$; otherwise, we can always move along the negative gradient of the objective to reduce the objective value of the objective assuming that $\partial_{\bm w} \Psi(\bm \beta^* + \sqrt{n}\Sigma^{-1/2}\bm {\hat w}_{n}^{AO}) \text{\textbackslash} \{\bm 0\}$ is non-empty, which is true when $\psi$ has an unique minimizer. Next, we argue that in the new constrained formulation $\mathcal{S}_{\epsilon} \cap \{ \bm w \text{ s.t } c(\bm w) \leq 0\}$,  if $\|\bm w - \bm{\hat w}_{n}^{AO}\| \geq \Tilde{\epsilon}$, then the value of objective increases. By convexity of objective and optimality of $\bm {\hat w}_{n}^{AO}$, we have that

\begin{equation}
\Psi(\bm \beta^* + \sqrt{n}\Sigma^{-1/2}\bm w) \geq \phi(\bm g,\bm h) - \lambda\nabla_{\bm w}c({\hat w}_{n}^{AO})^{T}(\bm w - \bm {\hat w}_{n}^{AO})
\end{equation}

\textbf{Case 1} If $\nabla_{\bm w}c({\hat w}_{n}^{AO})^{T}(\bm w - \bm {\hat w}_{n}^{AO}) < 0$, then objective increases.

\textbf{Case 2} If $\nabla_{\bm w}c({\hat w}_{n}^{AO})^{T}(\bm w - \bm {\hat w}_{n}^{AO}) \geq 0$, then
    \begin{equation}
        c(\bm w) > c(\bm {\hat w}_{n}^{AO}) + \nabla_{\bm w}c({\hat w}_{n}^{AO})^{T}(\bm w - \bm {\hat w}_{n}^{AO})
    \end{equation}
and the inequality is strict due to strict convexity if the constraint function and therefore $c(\bm w) > 0$, therefore its not feasible. So the feasible option is Case 1 and there exists a constant $\lambda\nabla_{\bm w}c({\hat w}_{n}^{AO})^{T}(\bm w - \bm {\hat w}_{n}^{AO}) > q(\Tilde{\epsilon}) > 0$ such that
\begin{equation}
    \Psi(\bm \beta^* + \sqrt{n}\Sigma^{-1/2}\bm w) > \phi(\bm g,\bm h) + q(\Tilde{\epsilon})
\end{equation}
which implies that with probability approaching 1, we have
\begin{equation}
\Psi(\bm \beta^* + \sqrt{n}\Sigma^{-1/2}\bm w) > \bar \phi + q(\Tilde{\epsilon}) - \eta 
\end{equation}
Next, we argue that for small enough $\eta$, we have \textbf{Condition 1}, which is equivalent to showing that
\begin{equation}
    \bar \phi + q(\Tilde{\epsilon}) - \eta  - ( \bar \phi + \eta) \geq \eta
\end{equation}
Choosing $\eta = \frac{q(\Tilde{\epsilon})}{4}$ and $\bar \phi_{\mathcal{S}_{\epsilon}} = \bar \phi + q(\Tilde{\epsilon})$, satisfies the above inequality.

Next, we show $\|\bm w - {\hat w}_{n}^{AO}\| \geq \Tilde{\epsilon}$. By definition, we have
\begin{equation}
    |F_n(\bm \beta, \bm \beta^*, \Sigma) - \kappa|> 2\epsilon
\end{equation}
We have already shown that
\begin{equation}
    |F_n(\bm {\hat \beta}_{n}^{AO}, \bm \beta^*, \Sigma) - \kappa| \leq \epsilon
\end{equation}
By Lipschitzness and Cauchy-Schwarz, we get
\begin{equation}
    |F_n(\bm {\hat \beta}_{n}^{AO}, \bm \beta^*, \Sigma) - F_n(\bm \beta, \bm \beta^*, \Sigma)| \leq C \|\bm w - {\hat w}_{n}^{AO}\|
\end{equation}
$$
2\epsilon \leq |F_n(\bm \beta, \bm \beta^*, \Sigma) - \kappa| + |F_n(\bm {\hat \beta}_{n}^{AO}, \bm \beta^*, \Sigma) - F_n(\bm \beta, \bm \beta^*, \Sigma)| 
$$
$$
\leq \epsilon + C \|\bm w - {\hat w}_{n}^{AO}\|
$$
which implies  $\|\bm w - {\hat w}_{n}^{AO}\| \geq \Tilde{\epsilon}$

\end{proof}

\subsection{Special Cases}

In this section, we will look at the special cases of the minimum $\ell_1$, $\ell_2$, $\ell_3$, and $\ell_{\infty}$-norm interpolators. One can study the cases of minimum $\ell_p$-norms as a direct application of Theorem \ref{thm: PA}, since they are separable. But in order to get analytical expressions for the optimality conditions \eqref{eq: KKT}, one needs to restrict themselves to $\ell_p$-norms with $p=1,2 \text{ and } 3$. Going beyond $p=3$ involves solving for the roots of higher-order polynomials that don't have closed-form expressions in general, although they can still be solved numerically.

We instead provide an alternative approach that is equivalent to Theorem \ref{thm: PA}. To apply CGMT to \eqref{eq: sep implicit bias}, we first do a change of variable $\bm w:= \frac{\bm \Sigma^{1/2}(\bm \beta - \bm \beta^*)}{\sqrt{n}}$ and define $X:= \frac{1}{\sqrt{n}}\bm{G \Sigma}^{1/2}$, where $\bm G\in \mathbb{R}^{m\times n}$ is random matrix with iid $\mathcal{N}(0,1)$ entries. Now, \eqref{eq: sep implicit bias} can be rewritten in the form below, which we call the PO
\begin{equation}
 \label{eq: PO}
 \Phi(\bm G):= \min_{\bm w} \Psi(\bm \beta^* + \sqrt{n}\Sigma^{-1/2}\bm w) \quad \text{s.t} \quad \bm G \bm w =  \bm z.
\end{equation}
Using the constrained formulation of CGMT, previously used in \cite{li2020exploring, chang2021provable}, the corresponding AO becomes
\begin{align}
 \label{eq: AO}
 \phi(\bm g,\bm h):= &\min_{\bm w} \Psi(\bm \beta^* + \sqrt{n}\Sigma^{-1/2}\bm w)\\ 
 &\text{s.t} \quad \|\bm g \| \sqrt{\|\bm w\|^2 + \sigma^2} - \bm h^{T}\bm w - \sigma h \leq 0\nonumber
\end{align}
Note that the PO \eqref{eq: PO} is a function of the random matrix $\bm G$, whereas the AO \eqref{eq: AO} depends on independent random vectors $\bm g$ and $\bm h$. We now solve the constrained AO \eqref{eq: AO} directly using a geometric approach and verifying KKT conditions. This approach also gives a system of non-linear equations analogous to \eqref{eq: KKT}. Additionally, we also analyze the $\ell_{\infty}$-norm using this approach, which is not separable, so Theorem \ref{thm: PA} can't be applied directly to it. Previous work \cite{varma2023asymptotic} also characterizes the asymptotic distributions of SMD weights but only for linear regression when interpolating pure noise. This work strictly generalizes our previous work in terms of the structure of the underlying unknown signal and the feature covariance of the data.

\textbf{Notation:} We use the un-normalized versions of the $\ell_p$ norms as our potentials. So, if $\bm{x} = (x_1,x_2,\cdots,x_n)$ then for 
\begin{itemize}
\item $p<\infty$
$$
\psi(\bm{x}) = \sum_{i=1}^{n} |x_i|^p 
$$
\item $p=\infty$
$$
\psi(\bm{x}) = \max_{i} |x_i|
$$
\end{itemize}
Also, we now denote $\lambda_i = \Sigma_{i,i}^{1/2}$ and $\lambda_i^2 = \Sigma_{i,i}$.

Recall the constrained AO problem
\begin{align}
 \phi(\bm g,\bm h):= &\min_{\bm w} \Psi(\bm \beta^* + \sqrt{n}\bm \Sigma^{-1/2}\bm w)\\ 
 &\text{s.t} \quad \|\bm g \| \sqrt{\|\bm w\|^2 + \sigma^2} - \bm h^{T}\bm w - \sigma h \leq 0\nonumber
\end{align}
Let the constraint function $c_n(\bm w):=  \|\bm g \| \sqrt{\|\bm w\|^2 + \sigma^2} - \bm h^{T}\bm w - \sigma h$, if we normalize by $\sqrt{n}$ and take the asymptotic limit then $\frac{c_n(\bm w)}{\sqrt{n}} \xrightarrow[]{P} \sqrt{\delta}\sqrt{\|\bm w\|^2 + \sigma^2} - \bm h_e^{T}\bm w$. If we denote $\bm{w} = \alpha \bm{h}_e + \bm{h}^{\perp}$, where $\alpha>0$, $\bm{h}_e := \frac{\bm{h}}{\sqrt{n}}$ and $\bm{h}^{\perp}$ be the remaining orthogonal component and let $\|\bm{h}^{\perp}\|^2 = \gamma^2$, then $\|\bm{w}\|^2_{2} = \alpha^2 + \gamma^2$. Under this new re-parameterization, the asymptotic limit of the constraint becomes
\begin{equation}
 \label{eq: AO constraint}
 \alpha^2 \geq \Tilde{\delta}(\gamma^2 + \sigma^2)
\end{equation}
where $\Tilde{\delta}:= \frac{\delta}{1-\delta}$. \eqref{eq: AO constraint} captures the interpolation constraint of the PO for the AO problem. Therefore, the modified AO problem is
\begin{equation}
\label{eq: mod AO}
\min_{\bm{w}} \psi(\bm{\beta}_0 + \Sigma^{-\frac{1}{2}}\bm{w})\quad\text{s.t}\;\bm{w}\in C 
\end{equation}
where 
\begin{equation}
 C:= \{\bm{w} = \alpha \bm{h}_e + \bm{h}^{\perp} \;|  \alpha^2 \geq \Tilde{\delta}(\|\bm{h}^{\perp}\|^2 + \sigma^2) \}   
\end{equation}

and recall $\frac{1}{2}\nabla C(\bm{w}_{*}) = \alpha \bm{h_e}  - \Tilde{\delta}\bm{h}^{\perp}$. The proof technique involves assuming the structure of the solution and proving that the solution satisfies Karush–Kuhn–Tucker (KKT) conditions. Therefore, with a slight abuse of notation denote$\bm \beta^{*}_{e} = \frac{\bm \beta}{\sqrt{n}}$ and $\bm{w}_{*} = \alpha \bm{h}_e + \bm{h}^{\perp}$ with $\|\bm{h}^{\perp}\|^2 = \gamma^2$, then we have the following KKT conditions on $\bm{w}_{*}$:

\noindent \textbf{1) Stationarity:} If $\partial \psi(\bm{\beta}^{*} + \sqrt{n}\Sigma^{-\frac{1}{2}}\bm{w}_{*})$ is the subdifferential at $\bm{w}_{*}$, then $\exists k > 0$ such that 
\begin{equation}
\label{eq:kkt1}
k\nabla C(\bm{w}_{*}) \in \partial \psi(\bm{\beta}^{*} + \Sigma^{-\frac{1}{2}}\bm{w}_{*}), 
\end{equation}
where $\frac{1}{2}\nabla C(\bm{w}_{*}) = \alpha \bm{h_e}  - \Tilde{\delta}\bm{h}^{\perp}$.

\noindent \textbf{2) Primal feasibility:} We require $\bm{w^*} \!\!\in\!C(\bm{h},\sigma,\delta)$ which implies\looseness=-1
\begin{equation}
\label{eq:kkt2}
\alpha^2 = \Tilde{\delta}(\gamma^2 + \sigma^2) 
\end{equation}
Notice in \eqref{eq:kkt2}, we have replaced inequality with equality. This means that the optimal solution lies on the boundary of the constraint, which is true when the gradient of the objective is non-zero over the constraint set. Next, we define the following asymptotic limit of summations, which we use later in our analysis
\begin{itemize}
\item  
\begin{align*}
L^{1}_{a,b,c}(\alpha,\theta) &= \frac{1}{n}\sum_{i\;:|\alpha h_i + \delta \lambda_i \beta^{*}_{i}|\leq \frac{\theta}{\lambda_i}}(\lambda_i  \beta^{*}_{i})^a h_{i}^b\text{sign}(\alpha h_i + \delta \lambda_i \beta^{*}_{i})^c\\
&= \int_{0}^{\infty} \int_{-\infty}^{\infty} \int_{\frac{-\theta/\lambda - \delta \lambda w}{\alpha}}^{\frac{\theta/\lambda - \delta \lambda w}{\alpha}} (\lambda w)^a h^b\text{sign}(\alpha h + \delta \lambda w)^c P_H(h)P_B(w)P_{\Lambda}(\lambda) dh dw d\lambda
\end{align*}

\item  
\begin{align*}
U^{1}_{a,b,c}(\alpha,\theta) &= \frac{1}{n}\sum_{i\;:|\alpha h_i + \delta \lambda_i \beta^{*}_{i}|> \frac{\theta}{\lambda}}(\lambda_i \beta^{*}_{i})^a h_{i}^b\text{sign}(\alpha h_i + \delta \lambda_i \beta^{*}_{i})^c\\
&= \int_{0}^{\infty} \int_{-\infty}^{\infty} \int_{\frac{\theta/\lambda - \delta \lambda w}{\alpha}}^{\infty} (\lambda w)^a h^b\text{sign}(\alpha h + \delta \lambda w)^c P_H(h)P_B(w)P_{\Lambda}(\lambda) dh dw d\lambda\\
&+ \int_{0}^{\infty} \int_{-\infty}^{\infty} \int_{-\infty}^{\frac{-\theta/\lambda - \delta \lambda w}{\alpha}} (\lambda w)^a h^b\text{sign}(\alpha h + \delta \lambda w)^c P_H(h)P_B(w)P_{\Lambda}(\lambda) dh dw d\lambda
\end{align*}

\end{itemize}

\begin{itemize}
\item  
\begin{align*}
L^{\infty}_{a,b,c}(\alpha,\theta) &= \frac{1}{n}\sum_{i\;:|\alpha h_i + \delta \lambda_i \beta^{*}_{i}| < \lambda_i \theta}(\lambda_i \bar w_{i})^a h_{i}^b(\lambda_i\text{sign}(\alpha h_i + \delta \lambda_i \beta^{*}_{i}))^c\\
&= \int_{0}^{\infty} \int_{-\infty}^{\infty} \int_{\frac{-\lambda\theta - \delta \lambda w}{\alpha}}^{\frac{\lambda\theta - \delta \lambda w}{\alpha}} (\lambda w)^a h^b(\lambda\text{sign}(\alpha h + \delta \lambda w))^c P_H(h)P_B(w)P_{\Lambda}(\lambda) dh dw d\lambda
\end{align*}

\item  
\begin{align*}
U^{\infty}_{a,b,c}(\alpha,\theta) &= \frac{1}{n}\sum_{i\;:|\alpha h_i + \delta \lambda_i \beta^{*}_{i}|\geq \lambda_i\theta}(\lambda_i \bar w_{i})^a h_{i}^b(\lambda_i\text{sign}(\alpha h_i + \delta \lambda_i \beta^{*}_{i}))^c\\
&= \int_{0}^{\infty} \int_{-\infty}^{\infty} \int_{\frac{\lambda \theta - \delta \lambda w}{\alpha}}^{\infty} (\lambda w)^a h^b(\lambda\text{sign}(\alpha h + \delta \lambda w))^c p_H(h)p_W(w)p_{\Lambda}(\lambda) dh dw d\lambda\\
&+ \int_{0}^{\infty} \int_{-\infty}^{\infty} \int_{-\infty}^{\frac{-\lambda\theta - \delta \lambda w}{\alpha}} (\lambda w)^a h^b(\lambda\text{sign}(\alpha h + \delta \lambda w))^c P_H(h)P_B(w)P_{\Lambda}(\lambda) dh dw d\lambda
\end{align*}

\end{itemize}

\subsubsection{\texorpdfstring{$\ell_2$-norm}{}}

\begin{theorem}
\label{th:l2}
For a given $\delta$ and $\sigma^2$, let $\bm{h}_e \sim \mathcal{N}(0,\frac{1}{n}\bm{I}_n)$ then the solution of the AO in \eqref{eq: mod AO} for the $\ell_2$ potential is given as
\begin{equation}
w_{*i} = \frac{\alpha k\lambda_i^2(1+\Tilde{\delta})h_{ei} - \lambda_i\beta^{*}_{ei}}{k\Tilde{\delta}\lambda_i^2 +1} \quad \quad  \text{for $i \in [n]$} 
\end{equation}
and the weights of the $\bm{\hat \beta}^{AO}$ has the following empirical distribution
\begin{equation}
{\hat \beta}_{i} \sim \frac{k\Tilde{\delta}\Lambda^2}{1 + k\Tilde{\delta}\Lambda^2}B + \frac{\alpha k(1 + \Tilde{\delta})\Lambda}{1 + k\Tilde{\delta}\Lambda^2}H \quad \quad  \text{for $i \in [n]$}  
\end{equation}

where $k$ satisfies
\begin{equation}
 \mathbb{E}_{\lambda}[\frac{k\lambda -1}{1+k\Tilde{\delta}\lambda}] =0,   
\end{equation}
and $\alpha$ satisfies
\begin{equation}
\alpha^2 = \delta(\|\bm{w}_{*}\|^2 +\sigma^2).
\end{equation}
\end{theorem}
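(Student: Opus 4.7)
The plan is to verify the KKT conditions for the (asymptotic) AO at the claimed point, using strict convexity of the $\ell_2^2$ objective on the convex feasible set $C$ to guarantee that any such stationary point is the unique minimizer. First I would compute the gradient of the constraint. Writing $\alpha = \bm h_e^T \bm w$ (so that $\gamma^2 = \|\bm w\|^2 - \alpha^2$), the active constraint $C(\bm w) = \tilde\delta(\|\bm w\|^2 - \alpha^2 + \sigma^2) - \alpha^2 = \tilde\delta \|\bm w\|^2 + \tilde\delta \sigma^2 - (1+\tilde\delta)(\bm h_e^T \bm w)^2$ gives $\nabla C(\bm w_*) = 2\tilde\delta \bm w_* - 2(1+\tilde\delta)\alpha \bm h_e = -2\alpha \bm h_e + 2\tilde\delta \bm h^\perp$, matching (up to an overall sign absorbed into the multiplier $k$) the form $\tfrac{1}{2}\nabla C = \alpha \bm h_e - \tilde\delta \bm h^\perp$ claimed in the KKT setup.

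Next I would solve stationarity component-wise, exploiting that $\partial_{w_i}\psi(\bm \beta^* + \sqrt{n}\Sigma^{-1/2}\bm w) = 2\sqrt{n}\lambda_i^{-1}(\beta_i^* + \sqrt{n}\lambda_i^{-1} w_i)$. After an $n$-scaling of the Lagrange multiplier, \eqref{eq:kkt1} reduces to the scalar linear equation
\begin{equation*}
k(\alpha h_{ei} - \tilde\delta h^\perp_i) \;=\; \lambda_i^{-1}\beta^*_{ei} + \lambda_i^{-2} w_i.
\end{equation*}
Substituting $h^\perp_i = w_i - \alpha h_{ei}$ and solving for $w_i$ immediately yields the claimed $w_{*i} = (\alpha k\lambda_i^2(1+\tilde\delta) h_{ei} - \lambda_i \beta_{ei}^*)/(k\tilde\delta \lambda_i^2 + 1)$. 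Plugging this into $\hat\beta_i = \beta_i^* + \sqrt{n}\lambda_i^{-1}w_{*i}$ and rearranging produces the weighted combination of $B$ and $H$ stated in the theorem, where the empirical distributional convergence $(h_i,\beta_i^*,\lambda_i) \Rightarrow (H,B,\Lambda)$ under Assumption \ref{ass: true dist} gives the asymptotic law.

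The two scalar equations that pin down $\alpha$ and $k$ then come from primal feasibility and the consistency requirement $\bm h_e^T \bm w_* = \alpha$. Feasibility \eqref{eq:kkt2} combined with $\tilde\delta = \delta/(1-\delta)$ and $\gamma^2 = \|\bm w_*\|^2 - \alpha^2$ directly gives $\alpha^2 = \delta(\|\bm w_*\|^2+\sigma^2)$. For the equation in $k$, I would compute $\bm h_e^T \bm w_* = \tfrac{1}{n}\sum_i (\lambda_i^2 k\alpha h_i^2(1+\tilde\delta) - \lambda_i \beta^*_i h_i)/(k\tilde\delta \lambda_i^2 + 1)$, then pass to the limit using the law of large numbers: the second term vanishes since $H$ is independent of $(\Lambda,B)$ and centered, while the first term reduces using $\mathbb{E}[H^2]=1$ to the requirement $k(1+\tilde\delta)\,\mathbb{E}[\Lambda^2/(1+k\tilde\delta \Lambda^2)] = 1$, which after algebra is exactly $\mathbb{E}[(k\Lambda^2 - 1)/(1+k\tilde\delta \Lambda^2)] = 0$ (reading the $\lambda$ in the theorem statement as $\Lambda^2$).

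The main obstacle is the asymptotic bookkeeping rather than the algebra: one must justify (i) replacing $\|\bm h_e\|^2$ and the finite-$n$ constraint by their almost-sure limits inside the minimization, which is standard once one restricts to a bounded feasible ball as in the proof of Theorem \ref{thm: PA}; (ii) the uniform convergence of the empirical sums above to their expected-value limits, which follows from $\Lambda$ having bounded support and $B$ having a finite second moment; and (iii) existence and uniqueness of a positive $k$ solving the nonlinear equation, which follows because $k \mapsto \mathbb{E}[(k\Lambda^2 - 1)/(1 + k\tilde\delta \Lambda^2)]$ is strictly increasing in $k$, negative at $k=0$, and has the right sign as $k\to\infty$ under Assumption \ref{ass: true dist}. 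Once these are in place, strict convexity of the quadratic objective on the convex feasible set ensures that the KKT point produced above is the (unique) optimum of the AO.
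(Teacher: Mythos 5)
Your proposal is correct and follows essentially the same approach as the paper's proof: verify the KKT conditions for the constrained AO, solve stationarity componentwise to obtain $w_{*i}$, use orthogonality of $\bm h^\perp$ to $\bm h_e$ (equivalently, your consistency condition $\bm h_e^T \bm w_* = \alpha$ in the limit $\|\bm h_e\|^2 \to 1$) to derive the scalar equation for $k$, and use primal feasibility for $\alpha$. Your additional remarks — correctly reading the theorem's $\lambda$ as $\Lambda^2$, and establishing existence and uniqueness of $k$ via monotonicity of $k \mapsto \mathbb{E}[(k\Lambda^2-1)/(1+k\tilde\delta\Lambda^2)]$ — slightly strengthen the paper's exposition but do not alter the argument.
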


\begin{proof}

From \eqref{eq:kkt1}, we have $k\nabla C(\bm{w}_{*})_i = \nabla \psi(\bm{\beta}^{*} + \sqrt{n}\Sigma^{-\frac{1}{2}}\bm{w}_{*})_i$, which gives us the following constraint
$$
k(\alpha h_{ei} - \Tilde{\delta}h^{\perp}_i) = \frac{\beta^{*}_{ei}}{\lambda_i} + \frac{\alpha h_{ei}+h^{\perp}_i}{\lambda_i^2}
$$
which implies
$$
h^{\perp}_i = \frac{\alpha(k\lambda_i^2 - 1)h_{ei} - \lambda_i \beta^{*}_{ei}}{k\Tilde{\delta}\lambda_i^2 +1}.
$$
Plugging back $\bm{h}^{\perp}$, we obtain
\begin{enumerate}
\item  $w_{*i} = \frac{\alpha k\lambda_i^2(1+\Tilde{\delta})h_{ei} - \lambda_i\beta^{*}_{ei}}{k\Tilde{\delta}\lambda_i^2 +1}$
\item  $\hat \beta_i = \frac{k\Tilde{\delta}\lambda_i^2}{1 + k\Tilde{\delta}\lambda_i^2}\beta^{*}_{ei} + \frac{\alpha k(1 + \Tilde{\delta})\lambda_i}{1 + k\Tilde{\delta}\lambda_i^2}h_{i}$
\end{enumerate}

\textbf{Orthogonality of $\bm{h}^{\perp}$:}
$$
0 = \sum_{i}h_{ei}h^{\perp}_i = \alpha\sum_{i} h_{ei}^2\frac{k\lambda_i^2 - 1}{1+k\Tilde{\delta}\lambda_i^2} + \sum_{i} \frac{h_{ei}\lambda_i \beta^{*}_{ei}}{k\Tilde{\delta}\lambda_i^2 + 1}
$$
Due to independence of $\bm{h}$, $\bm{\beta}^{*}$ and the eigen spectrum $\lambda_i$ the above expression in the limit simplifies to
\begin{equation}
\mathbb{E}_{\Lambda}[\frac{k\Lambda^2 -1}{1+k\Tilde{\delta}\Lambda^2}] =0
\end{equation}
which can be solved by obtaining the value of k. Next, we look at the remaining Primal feasibility constraint to solve for $\alpha$.

\textbf{Primal Feasibility:} \eqref{eq:kkt2} imposes the following constraint
$$
\alpha^2 = \delta(\|\bm{w}_{*}\|^2 + \sigma^2) \leftrightarrow \alpha^2 = \Tilde{\delta}(\gamma^2 + \sigma^2)
$$
where $\gamma^2$ is computed as 
$$
\gamma^2 = \sum_{i} h^{\perp2}_i
$$
which simplifies to
\begin{equation}
\gamma^2 = \alpha^2 \mathbb{E}_{\Lambda}[\frac{(k\Lambda^2 -1)^2}{(k\Tilde{\delta}\Lambda^2 +1)^2}] + \|\bm{w}_{0}\|_2^2\cdot\mathbb{E}_{\lambda}[\frac{\Lambda^2}{(k\Tilde{\delta}\Lambda^2 +1)^2}]   
\end{equation}
Solving gives us $\alpha$.

\end{proof}

\subsubsection{\texorpdfstring{$\ell_1$-norm}{}}

\begin{theorem}
\label{th:l1}
For a given $\delta$ and $\sigma^2$, let $\bm{h}_e \sim \mathcal{N}(0,\frac{1}{n}\bm{I}_n)$ and if $\alpha,\theta_e \geq 0$ are unique solutions to (\ref{l1: orth},\ref{l1: cvx})), then the solution of the AO for the $\ell_1$ potential is given as
\begin{equation}
{w_{*i}}= \begin{cases} -\lambda_i\beta^{*}_{ei}, & |\alpha h_{ei} + \delta \lambda_i \beta^{*}_{ei}|\leq \frac{\theta_e}{\lambda_i} \\\frac{\alpha h_{ei}}{\delta} - \frac{\theta_e}{\delta}\text{sign}(\alpha h_{ei} + \delta \lambda_i  \beta^{*}_{ei}), & |\alpha h_{ei} + \delta \lambda_i  \beta^{*}_{ei}|> \frac{\theta_e}{\lambda_i}
\end{cases}
\end{equation}
and the weights of the $\bm{\hat \beta}^{AO}$ for the $\ell_1$ potential has the following empirical distribution 
\begin{equation}
\label{eq: l1 dist}
{\hat \beta}^{AO}_{i} \sim \begin{cases} 0, & |\alpha H + \delta \Lambda B|\leq \frac{\theta}{\Lambda} \\B + \frac{\alpha H}{\delta\Lambda} - \frac{\theta\text{sign}(\alpha H + \delta \Lambda B)}{\delta\Lambda^2}, & |\alpha H + \delta \Lambda  B|> \frac{\theta}{\Lambda}
\end{cases}
\end{equation}
\end{theorem}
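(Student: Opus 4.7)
The plan is to adapt the KKT-based geometric reduction used for the $\ell_2$ case in Theorem \ref{th:l2} to the $\ell_1$ potential. The only structural new feature is that the subdifferential $\partial|\cdot|$ is set-valued at the origin, which forces coordinates to split into an \emph{active} set where $\hat\beta_i\ne 0$ and an \emph{inactive} set where $\hat\beta_i=0$. This split is exactly the soft-thresholding structure that appears in the theorem, with the scalar $\theta_e$ (and its rescaled asymptotic version $\theta$) playing the role of the threshold.

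First I would write the coordinate-wise stationarity condition from \eqref{eq:kkt1}. The constraint-gradient side is $\tfrac{1}{2}\nabla C(\bm w_*)_i=\alpha h_{ei}-\Tilde{\delta} h_i^\perp$, while the chain rule applied to $|\beta_i^{*}+\sqrt{n}\lambda_i^{-1}w_i|$ on the objective side produces a scalar inclusion of the form $c\,(\alpha h_{ei}-\Tilde{\delta} h_i^\perp)\in \partial|\hat\beta_i|$ for a positive constant depending on $k$, $\lambda_i$ and the normalization. Absorbing these constants into a single parameter $\theta_e$ splits the coordinates into two regimes. On the active set, the subgradient equals $\text{sign}(\hat\beta_i)$, and the inclusion becomes a linear equation in $h_i^{\perp}$ whose solution is the soft-thresholded form $w_{*i}=\alpha h_{ei}/\delta-(\theta_e/\delta)\text{sign}(\alpha h_{ei}+\delta\lambda_i\beta^*_{ei})$; the required self-consistency $\text{sign}(\hat\beta_i)=\text{sign}(\alpha h_{ei}+\delta\lambda_i\beta^*_{ei})$ then follows by direct substitution of this formula. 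On the inactive set, setting $w_{*i}=-\lambda_i\beta^*_{ei}$ kills $\hat\beta_i$ exactly, and the remaining requirement that the subgradient lie in $[-1,1]$ becomes the threshold condition $|\alpha h_{ei}+\delta\lambda_i\beta^*_{ei}|\le\theta_e/\lambda_i$.

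Next I would pin down $(\alpha,\theta_e)$ by two scalar equations. The first is the orthogonality $\bm h_e^T\bm h^\perp=0$, which expresses that the AO has no free direction along $\bm h_e$; the second is the primal feasibility $\alpha^2=\Tilde{\delta}(\gamma^2+\sigma^2)$ with $\gamma^2=\|\bm h^\perp\|^2$ evaluated case by case from the two expressions for $w_{*i}$. Summing the coordinate-wise contributions and invoking the proportional limit of Assumption \ref{ass: HDA}, these sums converge to expectations against $(H,B,\Lambda)$ expressed through the integrals $L^{1}_{a,b,c}$ and $U^{1}_{a,b,c}$ already introduced; the resulting two nonlinear equations are those referenced in the theorem as (\ref{l1: orth},\ref{l1: cvx}). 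The distributional statement \eqref{eq: l1 dist} then follows from $\hat\beta_i=\beta_i^{*}+\sqrt{n}\,\lambda_i^{-1}w_{*i}$ by substituting the two cases and passing to the joint weak limit of $(\sqrt{n}h_{ei},\beta_i^{*},\lambda_i)$, via the same bounded-Lipschitz convergence argument used in the distributional part of Theorem \ref{thm: PA}.

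The principal obstacle is showing that the KKT candidate is genuinely the AO minimizer and that the scalar system for $(\alpha,\theta_e)$ admits a unique solution. Since both $\|\bm\beta\|_1$ and the reduced constraint set are convex, KKT is sufficient for global optimality once feasibility holds on the boundary, which I would justify by the same strict-convexity-of-the-constraint-function argument used in the proof of Theorem \ref{thm: PA}. Uniqueness of $(\alpha,\theta_e)$ needs a monotonicity argument on the $L^{1}$ and $U^{1}$ integrals as functions of $(\alpha,\theta_e)$, analogous to the convex/concave scalar reduction in Theorem \ref{thm: PA}. A minor technical subtlety is that the indicator $\{|\alpha h_{ei}+\delta\lambda_i\beta^*_{ei}|\le\theta_e/\lambda_i\}$ depends on the $n$-dependent solutions $(\alpha_n,\theta_{e,n})$; this is handled by dominated convergence using the strict positivity and boundedness of $\Lambda$ from Assumption \ref{ass: true dist} together with the continuity of the law of $\alpha H+\delta\Lambda B$.
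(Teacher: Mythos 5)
Your proposal follows essentially the same route as the paper's proof: you split coordinates into the active and inactive sets dictated by the set-valued subdifferential of $|\cdot|$, verify the coordinate-wise stationarity inclusion $k\nabla C(\bm w_*)_i\in\partial\psi(\cdot)_i$ with the $1/\lambda_i$ factor coming from the chain rule through $\bm\Sigma^{-1/2}$, confirm the sign consistency on the active set, and then close the system with the orthogonality of $\bm h^\perp$ and primal feasibility, passing to expectations via the $L^1_{a,b,c}$ and $U^1_{a,b,c}$ integrals. Your remarks on KKT sufficiency, uniqueness of $(\alpha,\theta_e)$, and dominated convergence for the thresholding indicator go slightly beyond what the paper spells out but do not change the argument.
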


\begin{proof}

For $\ell_1$-norm $\partial \psi(x)$ is defined as
$$
\partial\psi(x)_i= \begin{cases} 1, & x_i>0 \\ -1, & x_i<0 \\ [-1,1], & x_i =0 \end{cases}
$$
We consider the following cases
\begin{itemize}
\item \textbf{case 1}: If  $i \in \{j\;:|\alpha h_j + \delta \lambda_j \beta^{*}_i|\leq \frac{\theta}{\lambda_i}\}$ then $h_{i}^{\perp} = -\lambda_i\beta^{*}_{ei} - \alpha h_{ei}$ , plugging back we have 
\begin{enumerate}
\item  $w_{*i} = -\lambda_i\beta^{*}_{ei}$
\item  $\hat \beta_i = 0$
\item $k\nabla C_i = \frac{2k}{1-\delta}(\alpha h_i + \delta \lambda_i \beta^{*}_{i})$
\end{enumerate}
To satisfy \eqref{eq:kkt1}, we set $\frac{2k\theta_e}{1-\delta} = 1$ which implies
$$
k = \frac{1-\delta}{2\theta_e} 
$$
where $\theta_e := \frac{\theta}{\sqrt{n}}$ is the normalised version of $\theta$.

\item \textbf{case 2}: If  $i \in \{j\;:|\alpha h_j + \delta \lambda_j \beta^{*}_i| > \frac{\theta}{\lambda_i}\}$ then $h_{i}^{\perp} = \frac{\alpha h_{ei}(1-\delta)}{\delta} - \frac{\theta_e}{\delta\lambda_i}\text{sign}(\alpha h_i + \delta \lambda_i \beta^{*}_{i})$ , plugging back we have 
\begin{enumerate}
\item  $w_{*i} = \frac{\alpha h_{ei}}{\delta} - \frac{\theta_e}{\delta \lambda_i}\text{sign}(\alpha h_i + \delta \lambda_i \beta^{*}_{i})$
\item  $\hat \beta_i = \beta^{*}_{i} + \frac{\alpha h_{i}}{\delta\lambda_i} - \frac{\theta\text{sign}(\alpha h_i + \delta \lambda_i \beta^{*}_{i})}{\delta\lambda_i^2}$
\item $k\nabla C_i = \frac{2k\theta_e}{(1-\delta)\lambda_i}\text{sign}(\alpha h_i + \delta \lambda_i \beta^{*}_{i})$
\end{enumerate}
Since we set $\frac{2k\theta_e}{1-\delta} = 1$, we have $|k\nabla C_i|=\frac{1}{\lambda_i}$ as desired.
\end{itemize}

Next, we verify the existence of constants $\theta, \alpha \geq 0$ such that KKT conditions hold for the above-claimed solution. Notice, that our solution structure of $\bm{h}^{\perp}$ satisfies stationarity \eqref{eq:kkt1} by its definition, but we now have an additional constraint to show $\bm{h}^{\perp}$ is orthogonal to $\bm{h}$.

\textbf{Orthogonality of $\bm{h}^{\perp}$:}
$$
0 = \sum_{i}h_{ei}h^{\perp}_i = \sum_{i\;:|\alpha h_i + \delta \lambda_i \beta^{*}_{i}|\leq \frac{\theta}{\lambda_i}} h_{ei}h_i^{\perp} + \sum_{i\;:|\alpha h_i + \delta \lambda_i \beta^{*}_{i}| > \frac{\theta}{\lambda_i}} h_{ei}h_i^{\perp}
$$
This simplifies to 
\begin{equation}
\label{l1: orth}
    \alpha U^1_{0,2,0}(\alpha,\theta) = \alpha\delta + \delta L^1_{1,1,0}(\alpha,\theta) + \theta U^1_{0,1,1}(\alpha,\theta)
\end{equation}
\textbf{Primal Feasibility:} \eqref{eq:kkt2} imposes the following constraint
$$
\alpha^2 = \delta(\|\bm{w}_{*}\|^2 + \sigma^2),
$$
where $\|\bm{w}_{*}\|^2$ is computed as 
$$
\|\bm{w}_{*}\|^2 = \sum_{i\;:|\alpha h_i + \delta \lambda_i \beta^{*}_{i}|\leq \frac{\theta}{\lambda_i}} w_{*i}^2 + \sum_{i\;:|\alpha h_i + \delta \lambda_i \beta^{*}_{i}| > \frac{\theta}{\lambda_i}} w_{*i}^2
$$
which simplifies to
\begin{equation}
\label{l1: cvx}
\|\bm{w}_{*}\|^2 = L^1_{2,0,0}(\alpha,\theta) + \frac{\alpha^2}{\delta^2}U^1_{0,2,0} + \frac{\theta^2}{\delta^2}U^1_{0,0,2} - \frac{2\alpha \theta}{\delta^2}U^1_{0,1,1}(\alpha,\theta)    
\end{equation}
If one can numerically solve for $\alpha,\theta$, we have verified the correctness of the solution, finishing the proof.

\end{proof}

\subsubsection{\texorpdfstring{$\ell_3$-norm}{}}

\begin{theorem}
\label{th:l3}
For a given $\delta$ and $\sigma^2$, let $\bm{h}_e \sim \mathcal{N}(0,\frac{1}{n}\bm{I}_n)$ and $\alpha,k \geq 0$ be the unique solutions of the non-linear equations (\ref{l3: orth},\ref{l3: cvx}), the solution of the AO for the $\ell_3$ potential is given as
\begin{equation}
{w_{*i}}= \begin{cases} \sqrt{k_e^2\lambda_i^6\Tilde{\delta}^2 + \frac{2k_e\lambda_i^3}{1-\delta}(\alpha h_{ei} + \delta \lambda_i \beta^{*}_{ei})} - \lambda_i \beta^{*}_{ei} - k_e\lambda_i^3\Tilde{\delta}, & \alpha h_{ei} + \delta \lambda_i \beta^{*}_{ei} \geq 0 \\-\sqrt{k_e^2\lambda_i^6\Tilde{\delta}^2 - \frac{2k_e\lambda_i^3}{1-\delta}(\alpha h_{ei} + \delta \lambda_i \beta^{*}_{ei})} - \lambda_i \beta^{*}_{ei} + k_e\lambda_i^3\Tilde{\delta}, & \alpha h_{ei} + \delta \lambda_{i}  \beta^{*}_{ei} < 0
\end{cases}
\end{equation}
and the weights of the $\bm{\hat \beta}^{AO}$ for the $\ell_3$ potential has the following empirical distribution 
\begin{equation}
{\hat \beta}^{AO}_{i} \sim \begin{cases} \sqrt{k^2\Lambda^4\Tilde{\delta}^2 + \frac{2k\Lambda}{1-\delta}(\alpha H + \delta \Lambda B)} - k\Lambda^2\Tilde{\delta}, & \alpha H + \delta \Lambda  B \geq 0 \\-\sqrt{k^2\Lambda^4\Tilde{\delta}^2 - \frac{2k \Lambda}{1-\delta}(\alpha H + \delta \Lambda B)} + k\Lambda^2\Tilde{\delta}, & \alpha B + \delta \Lambda  B < 0
\end{cases}
\end{equation}
\end{theorem}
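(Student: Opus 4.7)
The plan is to follow the same framework used above for the $\ell_1$ and $\ell_2$ proofs, working with the modified AO \eqref{eq: mod AO}, parameterizing $\bm w = \alpha \bm h_e + \bm h^\perp$ with $\bm h^\perp \perp \bm h$, and verifying that a candidate primal solution satisfies the KKT conditions \eqref{eq:kkt1}--\eqref{eq:kkt2}. Because $\psi(\bm x) = \sum_i |x_i|^3$ is separable and strictly convex, the AO admits a unique minimizer, so once a KKT point is exhibited the matching solution is pinned down.

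First, I would write out componentwise stationarity. Since $\partial_{x_i}|x_i|^3 = 3 x_i |x_i|$ and $\hat\beta_i$ is an affine function of $w_i$ through $\bm\beta^* + \sqrt{n}\bm\Sigma^{-1/2}\bm w$, the stationarity equation \eqref{eq:kkt1} componentwise (after absorbing factors into the rescaled Lagrange multiplier $k_e$ exactly as in the $\ell_2$ derivation) takes the form
\begin{equation*}
k_e \lambda_i^2 (\alpha h_{ei} - \tilde\delta h_i^\perp) \;=\; \tfrac{3}{1-\delta}\,\bigl(\lambda_i\beta^*_{ei} + w_i\bigr)^2\,\mathrm{sign}\bigl(\lambda_i\beta^*_{ei} + w_i\bigr),
\end{equation*}
with $w_i = \alpha h_{ei} + h_i^\perp$. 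This is a quadratic equation in $h_i^\perp$; completing the square and using the fact that for $h_i^\perp=0$ the sign of $\lambda_i\beta^*_{ei}+w_i$ agrees with the sign of $\alpha h_{ei} + \delta \lambda_i \beta^*_{ei}$ selects the correct branch and yields the claimed two-piece closed form for $w_{*i}$, with discriminant $k_e^2\lambda_i^6\tilde\delta^2 \pm \frac{2k_e\lambda_i^3}{1-\delta}(\alpha h_{ei} + \delta \lambda_i \beta^*_{ei})$. Substituting $\hat\beta_i = \beta^*_{ei} + w_{*i}/\lambda_i$ gives the stated empirical distribution after converting to the asymptotic triple $(H,B,\Lambda)$ from Assumption \ref{ass: true dist}.

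Second, I would impose the two remaining scalar KKT conditions: orthogonality $\tfrac{1}{n}\sum_i h_{ei} h^\perp_i = 0$ and primal feasibility \eqref{eq:kkt2}, i.e.\ $\alpha^2 = \tilde\delta(\|\bm h^\perp\|^2 + \sigma^2)$. Plugging the closed form for $h_i^\perp$ into each sum and taking the $n\to\infty$ limit replaces the empirical averages by expectations over the independent triple $(H, B, \Lambda)$, using the summation-limit templates $L^1_{a,b,c}$ and $U^1_{a,b,c}$ introduced before. This produces exactly the two nonlinear equations \eqref{l3: orth} and \eqref{l3: cvx} in the unknowns $(\alpha, k)$, establishing the characterization.

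The main obstacle relative to the $\ell_1$ and $\ell_2$ cases is the quadratic nature of the stationarity condition: one must identify the correct branch of the square root and rule out spurious critical points. The clean resolution is that strict convexity of $|\cdot|^3$ makes its derivative strictly monotone, so the sign of $\lambda_i\beta^*_{ei} + w_i$ is uniquely determined by the sign of $\alpha h_{ei} + \delta \lambda_i \beta^*_{ei}$, which fixes the branch. Once the branch is fixed, existence and uniqueness of $(\alpha,k)$ follow from the same coercivity and strict-convex/strict-concave saddle structure invoked in the $\ell_2$ argument, now applied to the scalar system \eqref{l3: orth}--\eqref{l3: cvx}; verifying the requisite monotonicity of these scalar maps in $(\alpha,k)$ will be the only calculation that needs real care.
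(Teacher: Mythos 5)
Your proposal takes essentially the same route as the paper: parameterize $\bm w = \alpha\bm h_e + \bm h^\perp$, write componentwise stationarity for the $\ell_3$ penalty, observe that it is a quadratic in $h^\perp_i$, solve it, and close with orthogonality and primal feasibility. Two details need repair. First, your branch-selection justification — that at $h^\perp_i = 0$ the sign of $\lambda_i\beta^*_{ei} + w_i$ agrees with that of $\alpha h_{ei} + \delta\lambda_i\beta^*_{ei}$ — is not true: at $h^\perp_i = 0$ one has $w_i = \alpha h_{ei}$, so $\lambda_i\beta^*_{ei} + w_i = \alpha h_{ei} + \lambda_i\beta^*_{ei}$, which need not share the sign of $\alpha h_{ei} + \delta\lambda_i\beta^*_{ei}$ when $\delta<1$. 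The clean argument (which the paper effectively uses: assume the sign relation, derive, check a posteriori) is to set $X := \lambda_i\beta^*_{ei} + w_i$ and eliminate $h^\perp_i$; stationarity then reads $X|X| + c_1 X = c_2(\alpha h_{ei} + \delta\lambda_i\beta^*_{ei})$ with $c_1,c_2>0$, and since the left side is strictly increasing and vanishes at $X=0$, $\mathrm{sign}(X) = \mathrm{sign}(\alpha h_{ei} + \delta\lambda_i\beta^*_{ei})$, which pins down the branch. Second, your stationarity equation carries a $\lambda_i^2$ on the left where, tracking $\nabla\psi(\bm\beta^*+\sqrt{n}\bm\Sigma^{-1/2}\bm w)$ and factoring out $\lambda_i$, the correct power is $\lambda_i^3$; this is needed to reproduce the $\lambda_i^6$ inside the square root and the $\lambda_i^3$ elsewhere in the stated formula for $w_{*i}$.
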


\begin{proof}
    
At solution point there exists $k_e = \frac{k}{\sqrt{n}}$ such that $3k \nabla C(\bm{w}^*) = \nabla\psi(\bm{\beta}^{*} + \sqrt{n}\bm\Sigma^{-\frac{1}{2}}\bm{w}_{*}) $ which enforces
$$
2k(\alpha h_{ei} - \Tilde{\delta}h^{\perp}_i)=(\beta^{*}_{ei} + \frac{\alpha h_{ei} + h^{\perp}_i}{\lambda_i})^2 \frac{\text{sign}(\hat w_i)}{\lambda_i}.
$$
Notice that we have a quadratic equation for $h^{\perp}_i$. If we assume $\text{sign}(\alpha h_i + \delta \lambda_i \beta^{*}_{i}) = \text{sign}(\hat \beta_i)$ and  solve for the roots, we arrive at the following two cases

\begin{itemize}
\item \textbf{case 1}: If $i \in \{j\;:\alpha h_j + \delta \lambda_j \beta^{*}_i \geq 0\}$ then 
$$
h_{i}^{\perp} = \sqrt{k_e^2\lambda_i^6\Tilde{\delta}^2 + \frac{2k_e\lambda_i^3}{1-\delta}(\alpha h_{ei} + \delta \lambda_i \beta^{*}_{ei})} - \alpha h_{ei} - \lambda_i \beta^{*}_{ei} - k_e\lambda_i^3\Tilde{\delta},
$$
Plugging back, we have 
\begin{enumerate}
\item  $w_{*i} = \sqrt{k_e^2\lambda_i^6\Tilde{\delta}^2 + \frac{2k_e\lambda_i^3}{1-\delta}(\alpha h_{ei} + \delta \lambda_i \beta^{*}_{ei})} - \lambda_i \beta^{*}_{ei} - k_e\lambda_i^3\Tilde{\delta}$
\item  $\hat \beta_i = \sqrt{k^2\lambda_i^4\Tilde{\delta}^2 + \frac{2k\lambda_i}{1-\delta}(\alpha h_{i} + \delta \lambda_i \beta^{*}_{i})} - k\lambda_i^2\Tilde{\delta}$ 
\end{enumerate}

\item \textbf{case 2}: If $i \in \{j\;:\alpha h_j + \delta \lambda_j \beta^{*}_i < 0\}$ then 
$$
h_{i}^{\perp} = -\sqrt{k_e^2\lambda_i^6\Tilde{\delta}^2 - \frac{2k_e\lambda_i^3}{1-\delta}(\alpha h_{ei} + \delta \lambda_i \beta^{*}_{ei})} - \alpha h_{ei} - \lambda_i \beta^{*}_{ei} + k_e\lambda_i^3\Tilde{\delta},
$$
Plugging back, we have 
\begin{enumerate}
\item  $w_{*i} = -\sqrt{k_e^2\lambda_i^6\Tilde{\delta}^2 - \frac{2k_e\lambda_i^3}{1-\delta}(\alpha h_{ei} + \delta \lambda_i \beta^{*}_{ei})} - \lambda_i \beta^{*}_{ei} + k_e\lambda_i^3\Tilde{\delta}$
\item  $\hat \beta_i = -\sqrt{k^2\lambda_i^4\Tilde{\delta}^2 - \frac{2k \lambda_i}{1-\delta}(\alpha h_{i} + \delta \lambda_i \beta^{*}_{i})} + k\lambda_i^2\Tilde{\delta}$ 
\end{enumerate}

The above solution structure indeed satisfies $\text{sign}(\alpha h_i + \delta \lambda_i \beta^{*}_{i}) = \text{sign}(\hat \beta_i)$. Next, we verify the orthogonality of $\bm{h}^{\perp}$ and the remaining KKT condition to compute $\alpha$ and $k$ and finish the proof.

\textbf{Orthogonality of $\bm{h}^{\perp}$:}
Due to symmetry around zero
$$
\sum_{i=1}^{n} h_{ei}h_i^{\perp} = 0 = \sum_{\alpha h_j + \delta \lambda_j \beta^{*}_i >0} h_{ei}h_i^{\perp},
$$
which implies
\begin{equation*}
\sum_{\alpha h_j + \delta \lambda_j \beta^{*}_i >0}  h_{ei}\sqrt{k_e^2\lambda_i^6\Tilde{\delta}^2 + \frac{2k_e\lambda_i^3}{1-\delta}(\alpha h_{ei} + \delta \lambda_i \beta^{*}_{ei})}   
- h_{ei}(\alpha h_{ei} + \lambda_i \beta^{*}_{ei} + k_e\lambda_i^3\Tilde{\delta}) =0.
\end{equation*}
In the asymptotic limit, we have the following integral condition:
\begin{multline}
\label{l3: orth}
0 = \int_{0}^{\infty}\left(\int_{-\infty}^{\infty}\left(\int_{-\frac{\delta\lambda w}{\alpha}}^{\infty} h\lambda\sqrt{k^2\lambda^4\Tilde{\delta}^2 + \frac{2k \lambda}{1-\delta}(\alpha h + \delta \lambda w)} p_{H}(h)dh \right)p_{W}(w)dw\right)p_{\Lambda}(\lambda)d\lambda \\ 
- \int_{0}^{\infty}\left(\int_{-\infty}^{\infty}\left(\int_{-\frac{\delta\lambda w}{\alpha}}^{\infty} h(\alpha h + \lambda w + k\lambda^3\Tilde{\delta}) p_{H}(h)dh \right)p_{W}(w)dw\right)p_{\Lambda}(\lambda)d\lambda
\end{multline}

\textbf{Primal Feasibility:} \eqref{eq:kkt2} imposes the following constraint
$$
\alpha^2 = \delta(\|\bm{w}_{*}\|^2 + \sigma^2) 
$$
where $\|\bm{w}_{*}\|^2$ is computed as 
$$
\|\bm{w}_{*}\|^2 = \sum_{i} w_{*i}^2
$$
which in the limit is
\begin{equation}
\label{l3: cvx}
\|\bm{w}_{*}\|^2 =    2\int_{0}^{\infty}\int_{-\infty}^{\infty}\int_{-\frac{\delta\lambda w}{\alpha}}^{\infty} \left(\lambda\sqrt{k^2\lambda^4\Tilde{\delta}^2 + \frac{2k \lambda}{1-\delta}(\alpha h + \delta \lambda w)} - \lambda w - k\lambda^3\Tilde{\delta}\right)^2 p(h,w,\lambda)d(h,w,\lambda)
\end{equation}

\end{itemize}

\end{proof}

\subsubsection{\texorpdfstring{$\ell_{\infty}$-norm}{}}

\begin{theorem}
\label{th:linfinity}
For a given $\delta$ and $\sigma^2$, let $\bm{h}_e \sim \mathcal{N}(0,\frac{1}{n}\bm{I}_n)$ and $\alpha,\theta \geq 0$ be the unique minimizers of (\ref{linf: orth},\ref{linf: cvx}) then the solution of the AO for the $\ell_{\infty}$ potential is given as
\begin{equation}
\bm{w_{*i}}= \begin{cases} \frac{\alpha h_{ei}}{\delta}, & |\alpha h_{ei} + \delta \lambda_i \beta^{*}_{ei}|< \lambda_i \theta_e \\\frac{\lambda_i\theta_e}{\delta}\text{sign}(\alpha h_{ei} + \delta \lambda_i  \beta^*_{ei}) - \lambda_i\beta^{*}_{ei}, & |\alpha h_{ei} + \delta \lambda_i \beta^{*}_{ei}|\geq \lambda_i \theta_e
\end{cases}
\end{equation}
where $\theta_{e}=\frac{\theta}{\sqrt{n}}$ and the weights of the $\bm{\hat \beta}^{AO}$ for the $\ell_{\infty}$ potential has the following empirical distribution 
\begin{equation}
\label{eq: linf dist}
{\hat \beta}^{AO}_{i} \sim \begin{cases} B + \frac{\alpha H}{\delta\Lambda}, & |\alpha H + \delta \Lambda  B|< \Lambda \theta \\ \frac{\theta}{\delta}\text{sign}(\alpha H + \delta \Lambda B), & |\alpha H + \delta \Lambda B| \geq \Lambda \theta
\end{cases}
\end{equation}
\end{theorem}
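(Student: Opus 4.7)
The proof will follow the same KKT-verification strategy used for the $\ell_1$, $\ell_2$, and $\ell_3$ cases above, adapted to the non-smoothness and non-separability of the $\ell_\infty$ norm. The plan is to take the candidate solution announced in the theorem statement, interpret $\theta/\delta$ as the $\ell_\infty$ norm $\|\hat{\bm\beta}^{AO}\|_\infty$ of the claimed minimizer, and verify the KKT conditions for the modified AO \eqref{eq: mod AO} coordinate-by-coordinate on the active and inactive sets, using $\tfrac{1}{2}\nabla C(\bm w_*) = \alpha\bm h_e - \tilde\delta \bm h^\perp$.

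First, I will recall that the subdifferential of $\|\cdot\|_\infty$ at a nonzero point $\bm\beta$ is the convex hull of $\{\text{sign}(\beta_i)\bm e_i : i \in \mathcal{A}\}$, where $\mathcal{A}:=\{i : |\beta_i|=\|\bm\beta\|_\infty\}$. Combining this with the chain rule through $\bm\beta = \bm\beta^* + \sqrt{n}\bm\Sigma^{-1/2}\bm w$, the stationarity condition $k\nabla C(\bm w_*) \in \partial \Psi(\bm\beta^*+\sqrt{n}\bm\Sigma^{-1/2}\bm w_*)$ forces $\alpha h_{ei} - \tilde\delta h^\perp_i = 0$ at inactive indices, which yields $h^\perp_i = \alpha h_{ei}(1-\delta)/\delta$ and hence $w_{*i} = \alpha h_{ei}/\delta$. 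The inactive indices are identified with $\{i : |\alpha h_{ei} + \delta\lambda_i\beta^*_{ei}| < \lambda_i\theta_e\}$, which matches the case split in the statement. For active indices I impose $\hat\beta_i = \text{sign}(\alpha h_{ei} + \delta\lambda_i\beta^*_{ei})\theta/\delta$ directly, which determines $w_{*i}$ and $h^\perp_i$ as stated.

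The consistency checks are threefold. First, $\text{sign}(\hat\beta_i)=\text{sign}(\alpha h_{ei}+\delta\lambda_i\beta^*_{ei})$ on the active set, which is where the inequality $|\alpha h_{ei}+\delta\lambda_i\beta^*_{ei}| \geq \lambda_i\theta_e$ becomes essential. Second, computing $v_i:= \sqrt{n}\lambda_i^{-1} k\nabla C(\bm w_*)_i$ at active indices should yield a vector with the right signs and with total $\ell_1$ mass equal to one; in the asymptotic limit, this fixes the Lagrange multiplier $k$ (scaling like $n^{-1/2}$) in terms of a tail integral of the type $U^\infty_{a,b,c}$ and $L^\infty_{a,b,c}$ defined earlier. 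Third, the two remaining unknowns $(\alpha,\theta_e)$ are pinned down by orthogonality $\sum_i h_{ei} h^\perp_i = 0$ and by primal feasibility $\alpha^2 = \delta(\|\bm w_*\|^2 + \sigma^2)$, which in the asymptotic proportional limit become the nonlinear equations \eqref{linf: orth} and \eqref{linf: cvx}.

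The main obstacle compared with the separable $\ell_p$ cases is the coupling of active coordinates through the constraint $\sum_{i\in\mathcal{A}}|v_i|=1$ on the $\ell_\infty$ subgradient: one cannot treat coordinates independently when computing the subgradient. I expect this to be handled by the fact that, in the proportional regime of Assumption \ref{ass: HDA}, the empirical measure of the triples $(h_{ei},\beta^*_{ei},\lambda_i)$ concentrates and the cardinality of $\mathcal{A}$ scales linearly with $n$; this lets the coupling constraint be replaced by a deterministic integral that implicitly determines $k$, while $\alpha$ and $\theta$ decouple from $k$ and are characterized purely by \eqref{linf: orth} and \eqref{linf: cvx}. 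Existence and uniqueness of $(\alpha,\theta)$ should follow from the strict convexity/concavity structure of the AO objective in the scalar dual variables, paralleling the argument used for the $\ell_1$ case, after verifying the boundedness-of-solution assumption so that CGMT applies to the non-separable potential.
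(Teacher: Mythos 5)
Your approach matches the paper's essentially step for step: KKT verification with a case split on the active/inactive coordinates of the $\ell_\infty$ subdifferential, using the normalization $\|\nu\|_1 = 1$ on the active set to fix the multiplier $k$, and then pinning down $(\alpha,\theta)$ via orthogonality of $\bm h^\perp$ and primal feasibility to obtain \eqref{linf: orth} and \eqref{linf: cvx}. One small correction: with $\bm\beta = \bm\beta^* + \sqrt{n}\bm\Sigma^{-1/2}\bm w$, the chain rule gives the $\bm\beta$-subgradient as $\nu_i = \frac{\lambda_i}{\sqrt{n}}\,k\nabla C(\bm w_*)_i$, not $\frac{\sqrt{n}}{\lambda_i}\,k\nabla C(\bm w_*)_i$ as you wrote, though this does not affect the structure of the argument; your added emphasis on the non-separability of $\ell_\infty$ (the coupling $\sum_{i\in\mathcal{A}}|\nu_i|=1$ handled by concentration with $|\mathcal{A}|\propto n$) and on verifying boundedness for CGMT is a useful elaboration that the paper leaves implicit.
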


\begin{proof}

For $\ell_{\infty}$-norm $\partial \psi(x)$ is defined as
$$
\partial\psi(x)_i= \begin{cases} 0, & |x_i|<\max_{j}|x_j| \\ [-1,0], & -x_i=\max_{j}|x_j| \\ [0,1], & x_i=\max_{j}|x_j| \end{cases}
$$
and additionally if $\nu \in \partial\psi(x)$ then $\|\nu\|_1 = 1$. Next, we consider the following cases
\begin{itemize}
\item \textbf{case 1}: If $i \in \{j\;:|\alpha h_j + \delta \lambda_j \beta^{*}_i|< \lambda_i\theta\}$ then $h_{i}^{\perp} = \frac{\alpha h_{ei}(1-\delta)}{\delta}$, plugging back we have 
\begin{enumerate}
\item  $w_{*i} = \frac{\alpha h_{ei}}{\delta}$
\item  $\hat \beta_i = \beta^{*}_{i} + \frac{\alpha h_{i}}{\delta\lambda_i}$ 
\item $k\nabla C_i = 0$
\end{enumerate}
Notice that $|\hat \beta_i| = |\beta^{*}_{i} + \frac{\alpha h_{i}}{\delta\lambda_i}| = \frac{1}{\delta\lambda_i}|\alpha h_{i} + \delta\lambda_i\beta^{*}_{i}| < \frac{\theta}{\delta}$. We'll see next that the $\ell_{\infty}$ potential indeed uniformly bounds the absolute values of the solution weights by $\frac{\theta}{\delta}$.

\item \textbf{case 2}: If  $i \in \{j\;:|\alpha h_j + \delta \lambda_j \beta^{*}_i| \geq \lambda_i\theta\}$ then $h_{i}^{\perp} = \frac{\lambda_i\theta_e}{\delta}\text{sign}(\alpha h_i + \delta \lambda_i \beta^{*}_{i}) - \lambda_i\beta^{*}_{ei} -\alpha h_{ei}$  , plugging back we have 
\begin{enumerate}
\item  $w_{*i} =\frac{\lambda_i\theta_e}{\delta}\text{sign}(\alpha h_i + \delta \lambda_i \beta^{*}_{i}) - \lambda_i\beta^{*}_{ei}$
\item  $\hat \beta_i =\frac{\theta}{\delta}\text{sign}(\alpha h_i + \delta \lambda_i \beta^{*}_{i})$
\item $k\nabla C_i = \frac{2k}{1-\delta} \left( \alpha h_{ei} + \delta \lambda_i \bar \beta^{*}_{ei} - \lambda_i\theta_e\text{sign}(\alpha h_i + \delta \lambda_i \beta^{*}_{i})\right)$
\end{enumerate}
Since $\|k\nabla C\|_1 = 1$, we set $k = \frac{1}{\|\nabla C\|_1}$.
\end{itemize}

Next, we verify the existence of constants $\theta, \alpha \geq 0$ such that KKT conditions hold for the above-claimed solution. Notice that our solution structure of $\bm{h}^{\perp}$ satisfies stationarity \eqref{eq:kkt1} by its definition, but we now have an additional constraint to show $\bm{h}^{\perp}$ is orthogonal to $\bm{h}$.

\textbf{Orthogonality of $\bm{h}^{\perp}$:}
$$
0 = \sum_{i}h_{ei}h^{\perp}_i = \sum_{i\;:|\alpha h_i + \delta \lambda_i \beta^{*}_{i}|< \lambda_i\theta} h_{ei}h_i^{\perp} + \sum_{i\;:|\alpha h_i + \delta \lambda_i \beta^{*}_{i}| \geq \lambda_i\theta} h_{ei}h_i^{\perp}
$$
This simplifies to 
\begin{equation}
\label{linf: orth}
    \alpha L^{\infty}_{0,2,0}(\alpha,\theta) + \theta U^{\infty}_{0,1,1}(\alpha,\theta)= \alpha\delta + \delta U^{\infty}_{1,1,0}(\alpha,\theta) 
\end{equation}
\textbf{Primal Feasibility:} \eqref{eq:kkt2} imposes the following constraint
$$
\alpha^2 = \delta(\|\bm{w}_{*}\|^2 + \sigma^2),
$$
where $\|\bm{w}_{*}\|^2$ is computed as 
$$
\|\bm{w}_{*}\|^2 = \sum_{i\;:|\alpha h_i + \delta \lambda_i \beta^{*}_{i}|< \lambda_i\theta} w_{*i}^2 + \sum_{i\;:|\alpha h_i + \delta \lambda_i \beta^{*}_{i}| \geq \lambda_i \theta} w_{*i}^2
$$
which simplifies to
\begin{equation}
\label{linf: cvx}
\|\bm{w}_{*}\|^2 = \frac{\alpha^2}{\delta^2}L^{\infty}_{0,2,0}(\alpha,\theta) + \frac{\theta^2}{\delta^2}U^{\infty}_{0,0,2} + U^{\infty}_{2,0,0} - \frac{2\theta}{\delta}U^{\infty}_{1,0,1}(\alpha,\theta)    
\end{equation}
If one can numerically solve for $\alpha,\theta$, we have verified the correctness of the solution, finishing the proof.

\end{proof}

Theorems \ref{th:l1}, \ref{th:l2}, \ref{th:l3}, and \ref{th:linfinity} provide exact characterizations of the solutions of the AO, and the generalization error can be found by computing the Euclidean norm square. Next, we numerically verify our results by comparing the theoretical and experimental distribution of the weights for these different norms. Figure \ref{fig: hist potentials} provides this comparison of the theoretical results derived with the experimental weights obtained by running SMD for $n=3000$, $\delta=0.3$, $\sigma=0.3$ and Gaussian $\bm{\beta}^*$. $\ell_{1.05}$ and $\ell_{20}$ SMD are used as a proxy for $\ell_{1}$ and $\ell_{\infty}$ respectively since they are not practically tractable. Finally, Figure \ref{fig: emp vs the ER} shows a comparison of the excess risk with different convex potentials and underlying signal structures, and shows the concentration of its non-asymptotic values to the theoretical limit when $d=2000$.

\begin{figure}[h]
\centering
\begin{tabular}{cccc}
    \includegraphics[width=.24\textwidth]{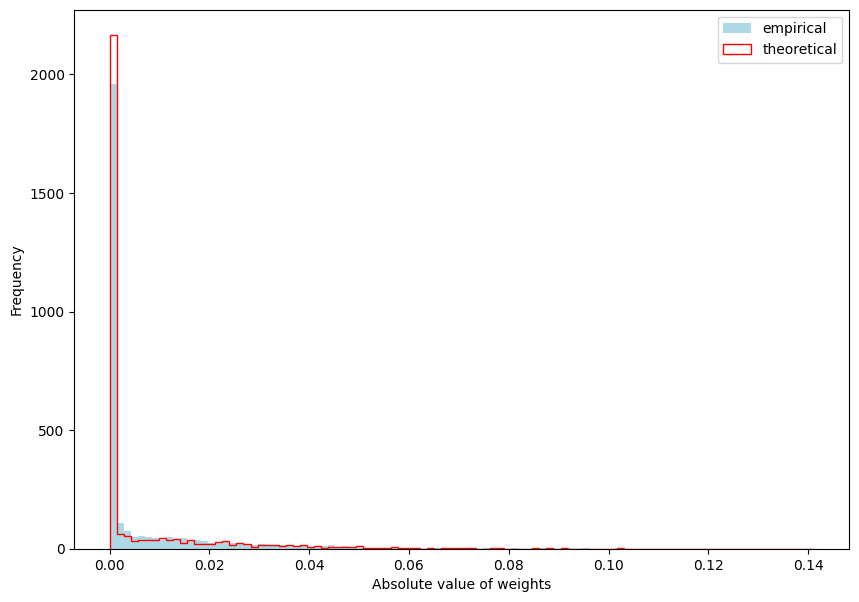} & \includegraphics[width=.24\textwidth]{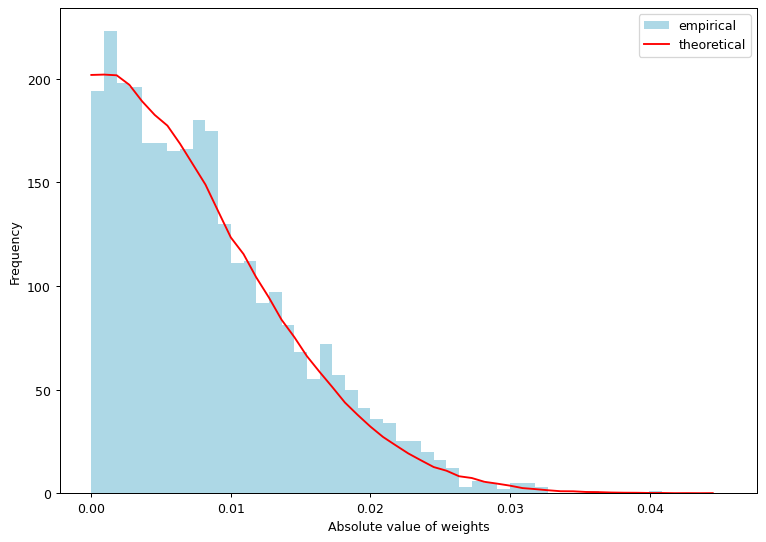}& \includegraphics[width=.24\textwidth]{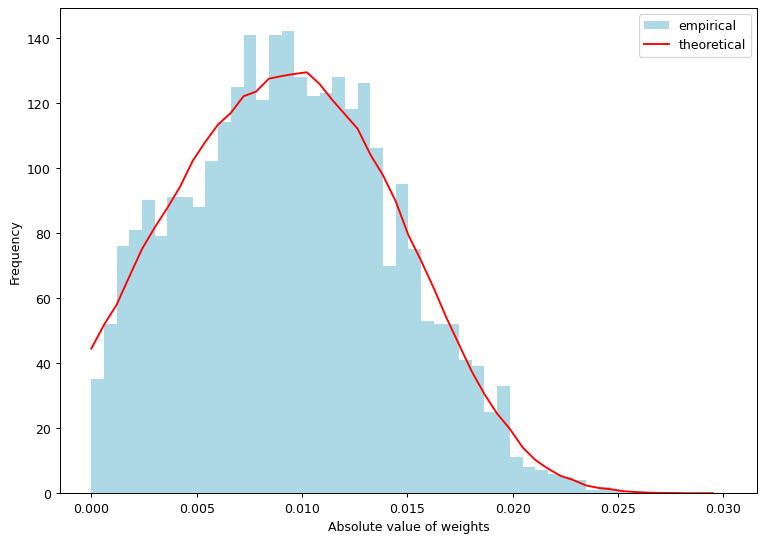}& \includegraphics[width=.24\textwidth]{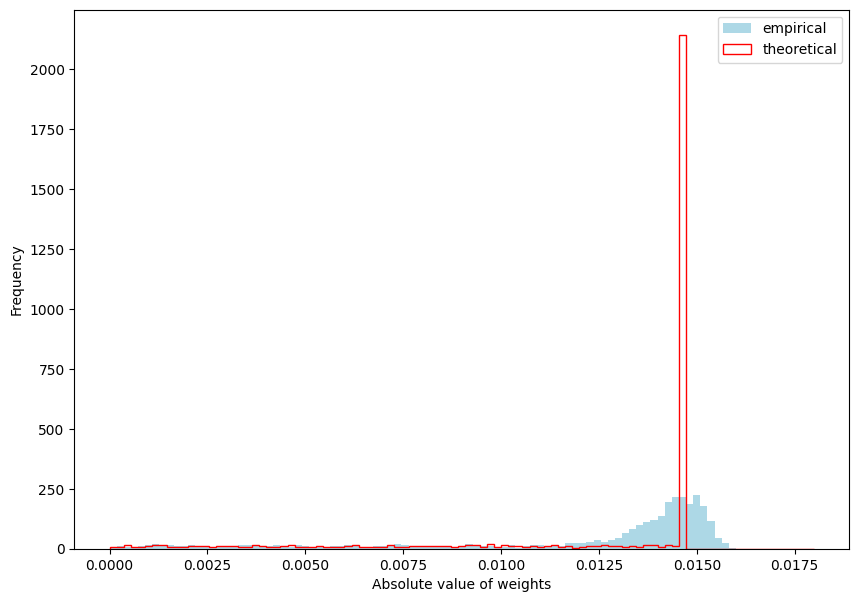} \\
     (a)&(b)&(c)&(d) 
\end{tabular}

    \caption{ Empirical and theoretical distribution of the solution weights for the SMD for $n=3000$, $\delta=0.3$, $\sigma=0.3$ and Gaussian $\bm{\beta}^*$ with (a) $\ell_{1}$, (b) $\ell_2$, (c) $\ell_{3}$-norm and (d) $\ell_{\infty}$ potentials.}
    \label{fig: hist potentials}
\end{figure}

\begin{figure}[h]
\centering
\begin{tabular}{ccc}
     \includegraphics[width=.29\textwidth]{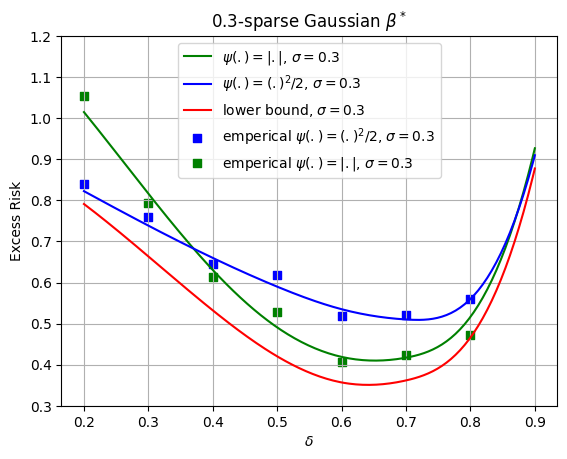}& \includegraphics[width=.29\textwidth]{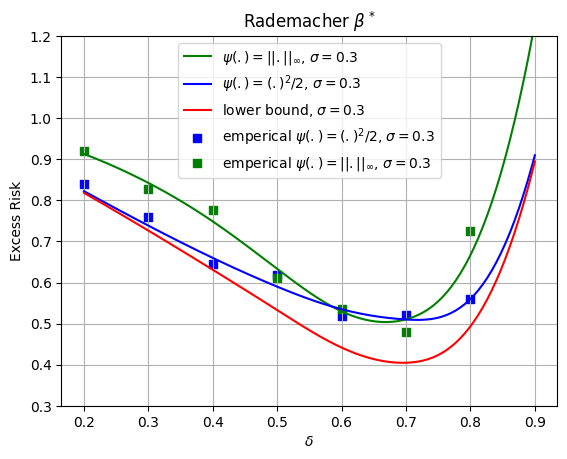}&
     \includegraphics[width=.29\textwidth]{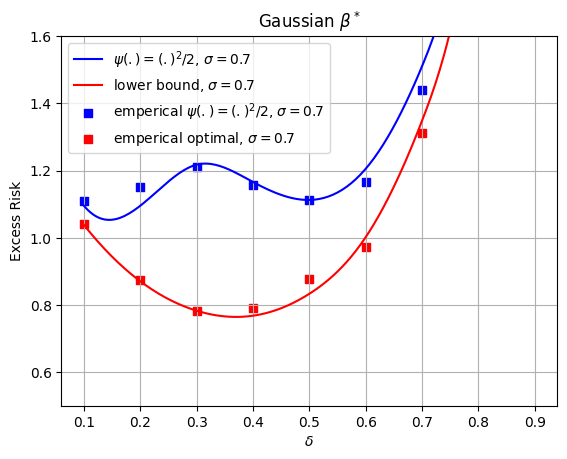}\\
     (a)&(b)&(c) 
\end{tabular}

    \caption{ Empirical and theoretical Excess risk with $d=3000$ (a) when $P_{B}$ has a 0.3-sparse Gaussian density and $\Lambda = 1$ a.s (b) when $P_{B}$ has a Rademacher density and $\Lambda = 1$ a.s (c) when $P_{B}$ has a Gaussian density and $\Lambda^2$ takes the value $4$ with probability $0.3$ and $0.1$ with probability $0.7$.}
    \label{fig: emp vs the ER}
\end{figure}

\section{Fundamental limits}
\label{apx: fund lim}

\subsection{Proof of Theorem \ref{thm: lower bound}}

\begin{theorem}(Lower bound on $\alpha_{\psi}^2$)
Let Assumptions \ref{ass: HDA},\ref{ass: Gaussian feat and noise}, and \ref{ass: true dist} hold. Define the random variable $V_{\alpha} = B + \frac{\alpha}{\sqrt{\delta}\Lambda}H$ where $H\sim \mathcal{N}(0,1)$ and $B,\Lambda$ as defined in assumption \ref{ass: Gaussian feat and noise} and \ref{ass: true dist}. Let $\alpha_{*}$ be the unique solution of the following non-linear equation
\begin{equation}
\label{eq: lower bound apx}
 \alpha^2= \frac{\delta\sigma^2}{1-\delta}+\frac{\delta(1-\delta)}{\mathcal{I}_{\Lambda}(V_{\alpha})}
\end{equation}
where $\mathcal{I}_{\Lambda}(V_{\alpha})$ is the weighted fisher information of $V_{\alpha}$ defined as
\begin{equation}
\label{eq: weighted fisher info apx}
    \mathcal{I}_{\Lambda}(V_{\alpha}) := \mathbb{E}\left[ \left(\frac{\xi_{V_{\alpha}}({V_{\alpha}|\Lambda)}}{\Lambda}\right)^2\right]
\end{equation}
where $\xi_{V_{\alpha}}(v|\Lambda):= \frac{p^{'}_{V_{\alpha}(v|\Lambda)}}{p_{V_{\alpha}(v|\Lambda)}}$ is the conditional score function of $V_{\alpha}$. Then for every $\Psi \in \mathcal{C}_{\psi}$, with $\alpha^2_{\psi}$ as the asymptotic limit of the generalization error as in \eqref{eq: asym ge}, we have $\alpha^2_{\psi} \geq \alpha_{*}^2$.
\end{theorem}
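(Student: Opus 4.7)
The plan is to start from the KKT system \eqref{eq: KKT} characterizing the pair $(\alpha_\psi,u_\psi)$ for an arbitrary $\Psi\in\mathcal{C}_\psi$, convert one equation into a score-function identity via Tweedie's formula, and close with Cauchy--Schwarz. Writing $g(v,\lambda):=\mathcal{M}_{\psi,1}'(v;\alpha/(u\sqrt{\delta}\lambda^2))$, the two equations in \eqref{eq: KKT} read
\begin{equation*}
\mathbb{E}\!\left[\tfrac{H}{\Lambda}g(V_\alpha,\Lambda)\right]=u(1-\delta),\qquad
\mathbb{E}\!\left[\tfrac{g(V_\alpha,\Lambda)^2}{\Lambda^2}\right]=u^2(1-\delta)-\tfrac{\delta\sigma^2u^2}{\alpha^2}.
\end{equation*}
Conditional on $(\Lambda,B)$, $V_\alpha\sim\mathcal{N}(B,\alpha^2/(\delta\Lambda^2))$, so Tweedie's formula gives $\mathbb{E}[B\mid V_\alpha,\Lambda]=V_\alpha+(\alpha^2/(\delta\Lambda^2))\xi_{V_\alpha}(V_\alpha|\Lambda)$; combined with the identity $H=(\sqrt{\delta}\Lambda/\alpha)(V_\alpha-B)$, this yields $\mathbb{E}[H\mid V_\alpha,\Lambda]=-(\alpha/(\sqrt{\delta}\Lambda))\xi_{V_\alpha}(V_\alpha|\Lambda)$.

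Applying the tower property to the first KKT equation converts it into $u(1-\delta)=-(\alpha/\sqrt{\delta})\,\mathbb{E}[g(V_\alpha,\Lambda)\xi_{V_\alpha}(V_\alpha|\Lambda)/\Lambda^2]$. Regarding the right-hand side as the inner product of $g(V_\alpha,\Lambda)/\Lambda$ with $\xi_{V_\alpha}(V_\alpha|\Lambda)/\Lambda$, Cauchy--Schwarz yields $u^2(1-\delta)^2\le(\alpha^2/\delta)\,\mathbb{E}[g^2/\Lambda^2]\,\mathcal{I}_\Lambda(V_\alpha)$. Substituting the second KKT equation for $\mathbb{E}[g^2/\Lambda^2]$, cancelling the common $u^2$, and rearranging produces the self-referential lower bound
\begin{equation*}
\alpha_\psi^2\;\ge\;\frac{\delta\sigma^2}{1-\delta}+\frac{\delta(1-\delta)}{\mathcal{I}_\Lambda(V_{\alpha_\psi})}\;=:\;G(\alpha_\psi).
\end{equation*}

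It remains to upgrade $\alpha_\psi^2\ge G(\alpha_\psi)$ to $\alpha_\psi\ge\alpha_*$. Define $F(\alpha):=\alpha^2-G(\alpha)$. Applying Proposition \ref{prop: FI limits} conditionally on $\Lambda$ shows $\alpha^2\mathcal{I}_\Lambda(V_\alpha)\to\delta$ as $\alpha\to\infty$, hence $F(\alpha)\sim\delta\alpha^2\to+\infty$; on the other hand $F(0^+)\le-\delta\sigma^2/(1-\delta)<0$. Continuity of $F$ together with the intermediate value theorem supplies existence of $\alpha_*$. For uniqueness I would use the conditional Stam's inequality (Proposition \ref{prop: FI}(d)) to show $\alpha\mapsto\mathcal{I}_\Lambda(V_\alpha)$ is strictly decreasing, combine this with the above asymptotics to verify $F$ is strictly increasing at any root, and so conclude that the zero of $F$ is unique. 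The desired $\alpha_\psi\ge\alpha_*$ then follows from $F(\alpha_\psi)\ge0=F(\alpha_*)$ and strict monotonicity of $F$.

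The main obstacle will be the uniqueness step, because $F$ is the difference of two functions that are both increasing in $\alpha$, so a direct derivative sign is inconclusive. A clean route is to reparametrize by $t=\alpha^2$ and exploit the fact that $\mathcal{I}_\Lambda(V_{\sqrt{t}})$ is a convex decreasing function of $t$ (via the I-MMSE identity applied conditionally on $\Lambda$), which suffices to rule out multiple crossings of $F$. A secondary technicality is that $P_B$ may fail to have a density (e.g.\ Rademacher or sparse priors), but for any $\alpha>0$ convolution with Gaussian noise renders $P_{V_\alpha|\Lambda}$ smooth, so $\xi_{V_\alpha}(\cdot|\Lambda)$ and $\mathcal{I}_\Lambda(V_\alpha)$ are well defined throughout the argument.
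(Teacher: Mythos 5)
Your main chain of reasoning is the same as the paper's, just dressed in slightly different language. The paper derives the key identity
\begin{equation*}
u(1-\delta) \;=\; -\frac{\alpha}{\sqrt{\delta}}\,
\mathbb{E}\!\left[\frac{1}{\Lambda^2}\,\mathcal{M}'_{\psi,1}(V_\alpha;\tfrac{\alpha}{u\sqrt{\delta}\Lambda^2})\,\xi_{V_\alpha}(V_\alpha|\Lambda)\right]
\end{equation*}
by explicit Gaussian integration by parts applied to the density $p_G$ and a convolution identity $\int p'_G(g|\lambda)p_B(v-g)\,dg = p'_V(v|\lambda)$; you arrive at the identical identity by Tweedie's formula plus the tower property. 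These are the same computation, and both immediately feed into Cauchy--Schwarz, elimination of $u^2>0$, and the self-referential bound $\alpha_\psi^2 \ge G(\alpha_\psi)$ with $G(\alpha) = \frac{\delta\sigma^2}{1-\delta} + \frac{\delta(1-\delta)}{\mathcal{I}_\Lambda(V_\alpha)}$. Your algebra here is correct and matches the paper.

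Where you diverge, and where your proposal is genuinely weaker, is the final monotonicity step. You correctly flag that $F(\alpha) := \alpha^2 - G(\alpha)$ is a difference of two increasing functions so its derivative sign is inconclusive, and you propose rescuing this with the I-MMSE identity and a convexity claim for $t\mapsto \mathcal{I}_\Lambda(V_{\sqrt{t}})$. This is heavier machinery than needed, and it is not obvious that convexity of $\mathcal{I}_\Lambda(V_{\sqrt t})$ alone rules out multiple crossings of $F$, because $1/\mathcal{I}_\Lambda(V_{\sqrt t})$ may fail to be concave. The paper instead normalizes by $\alpha^2$: set $h(\alpha) := G(\alpha)/\alpha^2 = \frac{\delta\sigma^2}{\alpha^2(1-\delta)} + \frac{\delta(1-\delta)}{\alpha^2\mathcal{I}_\Lambda(V_\alpha)}$. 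The first term is trivially decreasing, and for the second one uses the scaling law $\alpha^2\mathcal{I}_\Lambda(V_\alpha)=\mathcal{I}_\Lambda(V_\alpha/\alpha)$ together with Proposition~\ref{prop: FI}(c) to see that $\mathcal{I}_\Lambda(V_\alpha/\alpha)=\mathcal{I}_\Lambda(B/\alpha + \tfrac{1}{\sqrt{\delta}\Lambda}H)$ is increasing in $\alpha$. Hence $h$ is strictly decreasing, $h(0^+)=\infty$, $h(\infty)=1-\delta<1$ (Proposition~\ref{prop: FI limits}), so $h(\alpha_*)=1$ has a unique root, and $\alpha_\psi^2\ge G(\alpha_\psi)$, i.e.\ $h(\alpha_\psi)\le 1$, forces $\alpha_\psi\ge\alpha_*$. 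You should replace your $F$-based uniqueness argument with this normalization trick; it closes the gap cleanly without needing I-MMSE or any convexity claim. Your observation that $P_B$ may lack a density but $P_{V_\alpha|\Lambda}$ is always smooth is correct and matches the paper's implicit treatment.
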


\begin{proof}
Recall the system of non-linear equations and let $(\alpha_{\Psi},u_{\Psi})$ be a solution
\begin{subequations}
 \label{eq: KKT apx}
 \begin{align}
\mathbb{E}\left[\frac{H}{\Lambda} \mathcal{M}_{\psi,1}^{'}(B+\frac{\alpha}{\sqrt{\delta}\Lambda}H; \frac{\alpha}{u\sqrt{\delta}\Lambda^2})\right] &= u(1-\delta) \\
\mathbb{E}\left[\left(\frac{1}{\Lambda} \mathcal{M}_{\psi,1}^{'}(B+\frac{\alpha}{\sqrt{\delta}\Lambda}H; \frac{\alpha}{u\sqrt{\delta}\Lambda^2})\right)^2\right] &= u^2(1-\delta) - \frac{\delta \sigma^2 u^2}{\alpha^2}
 \end{align}
\end{subequations}
and let $\alpha_{*}$ be the solution of \eqref{eq: lower bound apx}. First, we argue that the solution $\alpha_{*}$ always exists when $\sigma>0$. Consider the function 
$$
h(\alpha) = \frac{\delta\sigma^2}{\alpha^2(1-\delta)} + \frac{\delta(1-\delta)}{\alpha^2\mathcal{I}_{\Lambda}(V_{\alpha})}
$$
Therefore, at $\alpha_{*}$, we have $h(\alpha_{*})=1$. Note that $h(\alpha)$ is continuous on $\mathbb{R}_{>0}$ and when $\sigma>0$, we have that $\lim_{\alpha \xrightarrow[]{} 0^{+}} h(\alpha) = \infty$ and  $\lim_{\alpha \xrightarrow[]{} \infty} h(\alpha) = 1-\delta <1$ using the fact that $\lim_{\alpha \xrightarrow[]{} \infty} \alpha^2\mathcal{I}_{\Lambda}(V_{\alpha}) = \delta$ using properties from Proposition \ref{prop: FI limits}. Therefore, by the mean value theorem, we can argue that the existence of $\alpha_{*}$. To show the uniqueness of $\alpha_{*}$, we need to show that $h(\alpha)$ is monotonically decreasing. Using properties of Fisher information $\alpha^2\mathcal{I}_{\Lambda}(V_{\alpha}) = \mathcal{I}_{\Lambda}(\frac{V_{\alpha}}{\alpha})$ and one can verify that $\mathcal{I}_{\Lambda}(\frac{V_{\alpha}}{\alpha})$ is monotonically increasing using Proposition \ref{prop: FI} (c), as a consequence $h(\alpha)$ is monotonically decreasing. Now that we have established the existence and uniqueness of $\alpha_{*}$, we next show that $\alpha_{*}$ is a lower on $\alpha_{\Psi}$.

Consider the following integral
\begin{multline}
 \frac{\sqrt{\delta}}{\alpha}\mathbb{E}[\frac{\alpha H}{\sqrt{\delta}\Lambda} \mathcal{M}_{\psi,1}^{'}(B+\frac{\alpha}{\sqrt{\delta}\Lambda}H; \frac{\alpha}{u\sqrt{\delta}\Lambda^2})] =  \\
 \frac{\sqrt{\delta}}{\alpha}\iiint g \mathcal{M}_{\psi,1}^{'}(b + g; \frac{\alpha}{u\sqrt{\delta}\lambda^2})p_{B}(b)p_{G}(g|\lambda)p_{\Lambda}(\lambda)\text{d}b \text{d}g \text{d}\lambda 
\end{multline}
where $G$ is a conditionally Gaussian random variable $p_{G}(.|\lambda) \sim \mathcal{N}(0,\frac{\alpha^2}{\delta\lambda^2})$. Next, we use the following using the property of Gaussian density
\begin{equation}
    p_{G}^{'}(g|\lambda) = - g\frac{\delta\lambda^2}{\alpha^2}p_{G}(g|\lambda)
\end{equation}
Plugging back, we get
\begin{equation}
 -\frac{\alpha}{\sqrt{\delta}} \iiint \frac{1}{\lambda^2} \mathcal{M}_{\psi,1}^{'}(b + g; \frac{\alpha}{u\sqrt{\delta}\lambda^2})p^{'}_{G}(g|\lambda)p_{B}(b)p_{\Lambda}(\lambda)\text{d}b \text{d}g \text{d}\lambda    
\end{equation}
If consider, the following change of variable $v = b + g$, then $\text{d}v = \text{d}b$ and
\begin{equation}
 -\frac{\alpha}{\sqrt{\delta}} \iiint \frac{1}{\lambda^2} \mathcal{M}_{\psi,1}^{'}(v; \frac{\alpha}{u\sqrt{\delta}\lambda^2})p^{'}_{G}(g|\lambda)p_{B}(v-g)p_{\Lambda}(\lambda)\text{d}v \text{d}g \text{d}\lambda    
\end{equation}
Integrating $g$ out using Gaussian Integration of parts, one can verify that
\begin{equation}
    \int_{-\infty}^{\infty}p^{'}_{G}(g|\lambda)p_{B}(v-g) = p_{V}^{'}(v|\lambda)
\end{equation}
Plugging the back, we have 
\begin{equation}
-\frac{\alpha}{\sqrt{\delta}} \iint \frac{1}{\lambda^2} \mathcal{M}_{\psi,1}^{'}(v; \frac{\alpha}{u\sqrt{\delta}\lambda^2})p^{'}_{V}(v|\lambda)p_{\Lambda}(\lambda)\text{d}v \text{d}g \text{d}\lambda = -\frac{\alpha}{\sqrt{\delta}} \mathbb{E}[\frac{1}{\lambda}\mathcal{M}_{\psi,1}^{'}(v; \frac{\alpha}{u\sqrt{\delta}\lambda^2}) \cdot \frac{p_{V}^{'}(v|\lambda)}{\lambda p_{V}(v|\lambda)}]
\end{equation}
Therefore, we have essentially proved the following identity
\begin{equation}
\label{eq: GIP identity}
  \frac{\sqrt{\delta}}{\alpha}\mathbb{E}[\frac{\alpha H}{\sqrt{\delta}\Lambda} \mathcal{M}_{\psi,1}^{'}(B+\frac{\alpha}{\sqrt{\delta}\Lambda}H; \frac{\alpha}{u\sqrt{\delta}\Lambda^2})]  =-\frac{\alpha}{\sqrt{\delta}} \mathbb{E}[\frac{1}{\lambda}\mathcal{M}_{\psi,1}^{'}(v; \frac{\alpha}{u\sqrt{\delta}\lambda^2}) \cdot \frac{p_{V}^{'}(v|\lambda)}{\lambda p_{V}(v|\lambda)}]
\end{equation}

Using Cauchy Schwartz inequality, we have that
\begin{equation}
\mathbb{E}[\frac{1}{\lambda}\mathcal{M}_{\psi,1}^{'}(v; \frac{\alpha}{u\sqrt{\delta}\lambda^2}) \cdot \frac{p_{V}^{'}(v|\lambda)}{\lambda p_{V}(v|\lambda)}]^2 \leq \mathbb{E}[(\frac{1}{\lambda}\mathcal{M}_{\psi,1}^{'}(v; \frac{\alpha}{u\sqrt{\delta}\lambda^2}))^2] \mathbb{E}[(\frac{p_{V}^{'}(v|\lambda)}{\lambda p_{V}(v|\lambda)})^2]
\end{equation}
If we let $\mathcal{I}_{\Lambda}(V_{\alpha}) := \mathbb{E}\left[ \left(\frac{\xi_{V_{\alpha}}({V_{\alpha}|\Lambda)}}{\Lambda}\right)^2\right]$. Using the optimality conditions, we can show that the following inequality holds 
\begin{equation}
\frac{u^2\delta(1-\delta)^2}{\alpha^2} \leq \left(u^2(1-\delta) - \frac{\delta\sigma^2u^2}{\alpha^2}\right)\mathcal{I}_{\Lambda}(V_{\alpha})   
\end{equation}
Note that the inequality is independent of $\Psi$ and is true for every $(\alpha_{\psi},u_{\psi})$ for which the system of equations is satisfied.
Simplifying the above inequality, we get
\begin{equation}
u^2\left( \alpha^2(1-\delta)\mathcal{I}_{\Lambda}(V_{\alpha}) - \delta\sigma^2\mathcal{I}_{\Lambda}(V_{\alpha}) - \delta(1-\delta)^2\right) \geq 0   
\end{equation}
One can verify that $u>0$ since it's a Lagrange multiplier of an active constraint. Eliminating $u$ gives the following inequality
\begin{equation}
\alpha^2(1-\delta)\mathcal{I}_{\Lambda}(V_{\alpha}) - \delta\sigma^2 \mathcal{I}_{\Lambda}(V_{\alpha}) - \delta(1-\delta)^2 \geq 0
\end{equation}
Writing the inequality in terms of $h(\alpha)$, we arrive at
\begin{equation}
    1\geq h(\alpha)
\end{equation}
As we have previously established, $h(\alpha)$ is monotonically decreasing and $\alpha_{*}$ is the unique solution of $h(\alpha_{*}) = 1$. Therefore $\alpha_{*}$ is the smallest $\alpha$ that satisfies the above inequality, and since the above inequality holds for every convex potential $\Psi$ whose optimality conditions are given by \eqref{eq: KKT apx}, we have that $\alpha_{\Psi} \geq \alpha_{*}$.

\end{proof}

\subsection{Proof of Corollary \ref{cor: closed lower bound}}
\begin{corollary}
Let Assumptions \ref{ass: HDA},\ref{ass: Gaussian feat and noise} and \ref{ass: true dist} hold and $\alpha_{*}$ be defined as the solution to \eqref{eq: lower bound}, if $\Lambda = 1$ almost surely then
\begin{equation}
 \label{eq: closed lower bound apx}
 \alpha_{*}^2 \geq \frac{\sigma^2}{1-\delta} + \frac{1-\delta}{\mathcal{I}(B)}
\end{equation}
whenever, the Fisher information $\mathcal{I}(B)$ is well defined. The inequality becomes an equality if and only if $B$ is a Gaussian.
\end{corollary}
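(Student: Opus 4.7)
The plan is to derive the stated inequality as a direct corollary of Theorem \ref{thm: lower bound} by applying Stam's inequality (Proposition \ref{prop: FI}(d)) to the random variable $V_\alpha = B + \frac{\alpha}{\sqrt{\delta}\Lambda}H$, which simplifies nicely in the isotropic case $\Lambda = 1$ a.s. In this case, the weighted Fisher information \eqref{eq: weighted fisher info} reduces to the usual Fisher information of a sum of independent variables, namely $\mathcal{I}_\Lambda(V_\alpha) = \mathcal{I}(V_\alpha)$ with $V_\alpha = B + \tfrac{\alpha}{\sqrt{\delta}}H$.

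First, I would decompose $V_\alpha$ as a sum of two independent random variables: $B$, with Fisher information $\mathcal{I}(B)$, and $\frac{\alpha}{\sqrt{\delta}}H$, which is Gaussian with variance $\alpha^2/\delta$ and therefore (by Proposition \ref{prop: FI}(a,b)) has Fisher information $\delta/\alpha^2$. Stam's inequality then gives
\begin{equation*}
\frac{1}{\mathcal{I}(V_\alpha)} \;\geq\; \frac{1}{\mathcal{I}(B)} + \frac{\alpha^2}{\delta}.
\end{equation*}
Multiplying through by $\delta(1-\delta)$ and plugging the resulting lower bound on $\delta(1-\delta)/\mathcal{I}(V_{\alpha_*})$ into the fixed-point equation \eqref{eq: lower bound} yields
\begin{equation*}
\alpha_*^2 \;\geq\; \frac{\delta\sigma^2}{1-\delta} + \frac{\delta(1-\delta)}{\mathcal{I}(B)} + (1-\delta)\alpha_*^2.
\end{equation*}
Subtracting $(1-\delta)\alpha_*^2$ and dividing by $\delta$ immediately produces the claimed bound \eqref{eq: closed lower bound apx}.

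For the equality characterization, I would invoke the equality condition of Stam's inequality stated in Proposition \ref{prop: FI}(d): the inequality is tight if and only if both summands are independent Gaussians. Since $\frac{\alpha_*}{\sqrt{\delta}}H$ is already Gaussian by construction, tightness is equivalent to $B$ being Gaussian. In that case one can directly verify that $\mathcal{I}(B) = 1/\mathbb{E}[B^2]$ makes the fixed-point equation solvable in closed form and reproduces the right-hand side of \eqref{eq: closed lower bound apx} exactly. I expect no serious obstacle here, as the algebra is routine once Stam's inequality is applied; the only subtle point worth mentioning in the write-up is that the Fisher information $\mathcal{I}(B)$ must be well-defined for this chain of inequalities to be meaningful, which is stated as a hypothesis of the corollary.
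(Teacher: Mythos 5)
Your proposal is correct and follows essentially the same route as the paper's own proof: in the isotropic case $\mathcal{I}_{\Lambda}(V_{\alpha_*})$ reduces to $\mathcal{I}(V_{\alpha_*})$, Stam's inequality (Proposition~\ref{prop: FI}(d)) bounds this in terms of $\mathcal{I}(B)$ and the Fisher information $\delta/\alpha_*^2$ of the Gaussian term, and substituting into the fixed-point equation~\eqref{eq: lower bound} and rearranging gives~\eqref{eq: closed lower bound apx}, with equality precisely when Stam's inequality is tight, i.e.\ when $B$ is Gaussian. The only cosmetic difference is that you write Stam's inequality in the additive reciprocal form rather than the quotient form used by the paper, which is an algebraically equivalent restatement.
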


\begin{proof}
From the previous theorem, we have that
\begin{equation}
  \alpha_{*}^2= \frac{\delta\sigma^2}{1-\delta}+\frac{\delta(1-\delta)}{\mathcal{I}_{\Lambda}(V_{\alpha_{*}})}   
\end{equation}
If $\Lambda = 1$ a.s and $\mathcal{I}(B)$ is well defined, then 
\begin{equation}
    \mathcal{I}_{\Lambda}(V_{\alpha_{*}}) = \mathcal{I}(B + \frac{\alpha_{*}}{\sqrt{\delta}}H) \leq \frac{\mathcal{I}(B)}{1+\frac{\alpha_{*}^2}{\delta}\mathcal{I}(B)} 
\end{equation}
where the inequality is obtained from Stam's inequality and is strict when $B$ is a Gaussian. Plugging back, we have that 
\begin{equation}
    \alpha_{*}^2 \geq \frac{\delta\sigma^2}{1-\delta}+\frac{(1-\delta)(\delta + \alpha_{*}^2\mathcal{I}(B))}{\mathcal{I}(B)} 
\end{equation}
Re-arranging the terms gives us the desired lower bound
\begin{equation}
    \alpha_{*}^2 \geq \frac{\sigma^2}{1-\delta} + \frac{1-\delta}{\mathcal{I}(B)}
\end{equation}
\end{proof}

\subsection{Proof of Theorem \ref{thm: optimal potential}}
\begin{theorem}(Optimal $\Psi$)
Let Assumptions \ref{ass: HDA}, \ref{ass: Gaussian feat and noise}, and \ref{ass: true dist} hold and $\alpha_{*}$ be defined as the solution to \eqref{eq: lower bound}. Consider the following function $\psi_{*} : \mathbb{R}^2 \xrightarrow[]{} \mathbb{R}$
\begin{equation}
 \label{eq: optimal potential apx}
 \psi_{*}(v,\lambda) :=  - \mathcal{M}_{\log(P_{V_{\alpha_{*}}}(v|\lambda))}\left(v;\frac{\alpha^2_{*}(1-\delta) - \delta\sigma^2}{\delta(1-\delta)\lambda^2}\right),
\end{equation}
if $P_{V_{\alpha_{*}}}(v|\lambda)$ is log-concave in $v$ and we define $\Psi_{*}(\bm \beta) = \sum_{i=1}^{n}\psi_{*}(\beta_i, \Sigma_{i,i})$, then
\begin{enumerate}
\item $\Psi_{*}(\bm \beta) \in \mathcal{C}_{\psi}$
\item $\alpha_*$ is a solution to the system of equations \eqref{eq: KKT} obtained using $\psi^*(v,\lambda)$ and is therefore the optimal convex implicit bias.
\end{enumerate}
\end{theorem}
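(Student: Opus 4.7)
The plan is to verify the two conclusions of the theorem separately: convexity of $\psi_*$ and satisfaction of the system \eqref{eq: KKT} at $(\alpha_*, u_*)$ for an appropriate choice of $u_*$.

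\textbf{Convexity of $\psi_*$.} The construction \eqref{eq: optimal potential} is precisely the inverse Moreau envelope of Proposition~\ref{prop: ME inv} applied with $g := -\log P_{V_{\alpha_*}}(\cdot \mid \lambda)$; it follows that $\psi_*(\cdot, \lambda)$ is characterized by $\mathcal{M}_{\psi_*}(v;\tau(\lambda)) = -\log P_{V_{\alpha_*}}(v\mid\lambda) + C(\lambda)$, where $\tau(\lambda) := (\alpha_*^2(1-\delta)-\delta\sigma^2)/(\delta(1-\delta)\lambda^2)$. By the Fenchel representation of Proposition~\ref{prop: ME FC}, a function $g$ is the Moreau envelope (with parameter $\tau$) of some convex function if and only if both $g$ and $\frac{v^2}{2\tau} - g(v)$ are convex, in which case the recovered $\psi$ is itself convex. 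The first convexity follows directly from the log-concavity hypothesis on $P_V$. The second, equivalent to $(-\log P_{V_{\alpha_*}})''(v\mid\lambda) \leq 1/\tau$, I would verify using the Tweedie/de Bruijn identity $(-\log P_V)''(v\mid\lambda) = \tfrac{1}{\sigma_N^2}\bigl(1-\mathrm{Var}[B \mid V_{\alpha_*}=v,\Lambda=\lambda]/\sigma_N^2\bigr) \leq 1/\sigma_N^2$ with $\sigma_N^2 := \alpha_*^2/(\delta\lambda^2)$, together with the algebraic identity $\tau = \sigma_N^2 - \sigma^2/((1-\delta)\lambda^2) \leq \sigma_N^2$. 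Positivity of $\tau$, which is needed for the envelope to be sensibly defined, follows from the fixed-point relation $\alpha_*^2(1-\delta) - \delta\sigma^2 = \delta(1-\delta)^2/\mathcal{I}_\Lambda(V_{\alpha_*}) > 0$.

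\textbf{KKT verification.} I would set $u_* := \alpha_*\mathcal{I}_\Lambda(V_{\alpha_*})/(\sqrt{\delta}(1-\delta))$. A direct computation gives $\alpha_*/(u_*\sqrt{\delta}\lambda^2) = (1-\delta)/(\lambda^2\mathcal{I}_\Lambda(V_{\alpha_*})) = \tau(\lambda)$, so the smoothing parameter appearing in \eqref{eq: KKT} lines up with the defining $\tau$. Differentiating the defining identity $\mathcal{M}_{\psi_*}(v;\tau(\lambda)) = -\log P_{V_{\alpha_*}}(v\mid\lambda) + C(\lambda)$ in $v$ yields $\mathcal{M}_{\psi_*,1}'(v;\tau(\lambda)) = -\xi_{V_{\alpha_*}}(v\mid\lambda)$. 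Inserting this into identity \eqref{eq: GIP identity} (derived in the proof of Theorem~\ref{thm: lower bound}) reduces the LHS of the first equation of \eqref{eq: KKT} to $\tfrac{\alpha_*}{\sqrt{\delta}}\mathcal{I}_\Lambda(V_{\alpha_*})$, which equals $u_*(1-\delta)$ by construction. For the second equation, LHS $= \mathcal{I}_\Lambda(V_{\alpha_*})$ by definition \eqref{eq: weighted fisher info}, while RHS $= u_*^2(1-\delta) - \delta\sigma^2 u_*^2/\alpha_*^2$ simplifies, via the fixed-point equation $\alpha_*^2(1-\delta)-\delta\sigma^2 = \delta(1-\delta)^2/\mathcal{I}_\Lambda(V_{\alpha_*})$, to the same value.

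\textbf{Main obstacle.} The KKT calculation is mechanical once $u_*$ is identified and is essentially the Gaussian integration-by-parts argument already used in the lower bound proof run in reverse. The delicate piece is the convexity of $\psi_*$: inverse Moreau envelopes do not in general produce convex functions, and the proof requires simultaneously (i) convexity of $-\log P_V$ from the log-concavity hypothesis and (ii) the pointwise upper bound $(-\log P_V)'' \leq 1/\tau$, which is where the Tweedie formula, the specific form of $\tau$, and the fixed-point defining $\alpha_*$ all enter in an essential way. Without both pieces, $\psi_*$ need not be convex.
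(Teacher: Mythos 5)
Your proof is correct and takes essentially the same route as the paper: verify convexity via Moreau-envelope/Fenchel duality, identify $u_*$ so that $\alpha_*/(u_*\sqrt{\delta}\lambda^2)$ equals the smoothing parameter $\tau(\lambda)$, and verify the two equations of \eqref{eq: KKT} via the Gaussian integration-by-parts identity \eqref{eq: GIP identity} together with the definition of $\mathcal{I}_\Lambda(V_{\alpha_*})$ (your $u_* = \alpha_*\mathcal{I}_\Lambda(V_{\alpha_*})/(\sqrt{\delta}(1-\delta))$ and the paper's $u_* = \alpha_*\sqrt{\delta}(1-\delta)/(\alpha_*^2(1-\delta)-\delta\sigma^2)$ coincide by the fixed-point relation \eqref{eq: lower bound}). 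The only real difference is in how the pointwise bound $(-\log P_{V_{\alpha_*}}(\cdot\mid\lambda))'' \leq 1/\tau(\lambda)$ is obtained: you invoke the Tweedie identity $(-\log P_V)'' = \sigma_N^{-2}(1 - \mathrm{Var}[B\mid V]/\sigma_N^2) \leq \sigma_N^{-2}$ combined with $\tau \leq \sigma_N^2 = \alpha_*^2/(\delta\lambda^2)$, whereas the paper writes out $\log P_{V_{\alpha_*}}(v\mid\lambda) = -\delta\lambda^2 v^2/(2\alpha_*^2) + \log\int \exp\left(\delta\lambda^2(2vb-b^2)/(2\alpha_*^2)\right)P_B(b)\,db + c$ and double-differentiates to see that $\tfrac{v^2}{2}+\tau\log P_{V_{\alpha_*}}$ is convex before invoking the derivative formula for the conjugate; these are the same inequality derived two ways. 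Your explicit observation that $\tau > 0$ follows from the fixed point $\alpha_*^2(1-\delta)-\delta\sigma^2 = \delta(1-\delta)^2/\mathcal{I}_\Lambda(V_{\alpha_*})>0$ is a small gap the paper leaves implicit, and is worth stating.
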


\begin{proof}
By Proposition \ref{prop: ME FC}, we have that
\begin{multline}
    - \mathcal{M}_{\log(P_{V_{\alpha_{*}}}(v|\lambda))}\left(v;\frac{\alpha^2_{*}(1-\delta) - \delta\sigma^2}{\delta(1-\delta)\lambda^2}\right) =\\ \frac{\delta(1-\delta)\lambda^2}{\alpha^2_{*}(1-\delta) - \delta\sigma^2}\left( \left(\frac{v^2}{2} + \frac{\alpha^2_{*}(1-\delta) - \delta\sigma^2}{\delta(1-\delta)\lambda^2} \log P_{V_{\alpha_{*}}}(v|\lambda)\right)^* - \frac{v^2}{2}\right)
\end{multline}   
Showing $\psi_{*}(v,\lambda)$ is convex is equivalent to showing $\left( \left(\frac{v^2}{2} + \frac{\alpha^2_{*}(1-\delta) - \delta\sigma^2}{\delta(1-\delta)\lambda^2} \log P_{V_{\alpha_{*}}}(v|\lambda)\right)^* - \frac{v^2}{2}\right)$ is convex. First, we will verify that $\frac{v^2}{2} + \frac{\alpha^2_{*}(1-\delta) - \delta\sigma^2}{\delta(1-\delta)\lambda^2} \log P_{V_{\alpha_{*}}}(v|\lambda)$ is convex. By definition
\begin{equation}
\log P_{V_{\alpha_{*}}}(v|\lambda) = - \frac{\delta \lambda^2 v^2}{2\alpha^2_{*}} + \log \int_{-\infty}^{\infty} \exp{\left(\delta \lambda^2(2vb -b^2)/2\alpha^2_{*}\right)} P_{B}(b)db + c
\end{equation}
for some constant $c$. One can verify the convexity of $\log \int_{-\infty}^{\infty} \exp{\left(\delta \lambda^2(2vb -b^2)/2\alpha^2_{*}\right)} P_{B}(b)db$ by double differentiation with respect to $v$. Therefore, it sufficient if $\frac{v^2}{2} - \frac{\alpha^2_{*}(1-\delta) - \delta\sigma^2}{\alpha^2_{*}(1-\delta)}\frac{v^2}{2}$ is convex, which is trivially true. Therefore, we have now verified the convexity of $\frac{v^2}{2} + \frac{\alpha^2_{*}(1-\delta) - \delta\sigma^2}{\delta(1-\delta)\lambda^2} \log P_{V_{\alpha_{*}}}(v|\lambda)$.

Next, we'll use the following property of the derivatives of convex conjugates from \cite{rockafellar-1970a} (Cor. 23.5.1), which says that if $f(x)$ is convex, then
\begin{equation}
    (f^*)'(x) = (f')^{-1}(x)
\end{equation}
Using the property of the derivative of an inverse, we further have that 
\begin{equation}
    (f^*)''(x) = \frac{1}{f''\left((f')^{-1}(x)\right)}
\end{equation}
Using the above property, taking double derivative of $\psi_{*}(v,\lambda)$ gives us
\begin{equation}
    \psi_{*}''(v,\lambda) = c_{*}\left(\frac{1}{1 + c_{*} (\log P_{V_{\alpha_{*}}}(v|\lambda))''(g(v))} - 1\right)
\end{equation}
where $g(v):= (v + c_{*}(\log P_{V_{\alpha_{*}}}(g(v)|\lambda))')^{-1}(v)$. Since $P_{V_{\alpha_{*}}}(v|\lambda)$ is log-concave by assumption, $(\log P_{V_{\alpha_{*}}}(v|\lambda))''(v) \leq 0$ for all $v$. This implies that $\psi_{*}''(v,\lambda) \geq 0$, finishing the proof on the sufficient condition.

Next, we verify optimality conditions \eqref{eq: KKT apx} to prove that the optimal convex potential is given by Theorem \ref{thm: optimal potential}. Let $\alpha_{*}$ be the solution to \eqref{eq: lower bound apx}. Now, consider the following candidate for optimal potential
\begin{equation}
\label{eq: candidate}
    \mathcal{M}^{'}_{\psi_{*},1}(v;\frac{\alpha^2_{*}(1-\delta) - \delta\sigma^2}{\delta(1-\delta)\lambda^2}) =  -\frac{P_{V_{\alpha_{*}}}^{'}(v|\lambda)}{P_{V_{\alpha_{*}}}(v|\lambda)}
\end{equation}
We now show that $\alpha_{*}$ and $u_{*}:= \frac{\alpha_{*}\sqrt{\delta}(1-\delta)}{\alpha^2_{*}(1-\delta) - \delta\sigma^2}$ satisfy, the optimality conditions for the above candidate. Plugging in \eqref{eq: KKT apx}(b), we get
\begin{align}
    \mathbb{E}\left[\left(\frac{1}{\Lambda} \mathcal{M}_{\psi_{*},1}^{'}(B+\frac{\alpha_{*}}{\sqrt{\delta}\Lambda}H; \frac{\alpha_{*}}{u_{*}\sqrt{\delta}\Lambda^2})\right)^2\right] &= \mathcal{I}_{\Lambda}(V_{\alpha_{*}})\\
    &= u_{*}^2\frac{(\alpha^2_{*}(1-\delta) - \delta\sigma^2)^2}{\alpha_{*}^2\delta(1-\delta)^2}\mathcal{I}_{\Lambda}(V_{\alpha_{*}})\\
    &=u_{*}^2(1-\delta) - \frac{\delta \sigma^2 u_{*}^2}{\alpha_{*}^2}
\end{align}
where the second equality is from the definition of $u_{*}$ and the third equality is from \eqref{eq: lower bound apx}. Next, we verify \eqref{eq: KKT apx}(a)
\begin{align}
    \mathbb{E}\left[\frac{H}{\Lambda} \mathcal{M}_{\psi,1}^{'}(B+\frac{\alpha_{*}}{\sqrt{\delta}\Lambda}H; \frac{\alpha_{*}}{u_{*}\sqrt{\delta}\Lambda^2})\right] &= -\frac{\alpha_{*}}{\sqrt{\delta}} \mathbb{E}[\frac{1}{\lambda}\mathcal{M}_{\psi,1}^{'}(v; \frac{\alpha_{*}}{u_{*}\sqrt{\delta}\lambda^2}) \cdot \frac{p_{V}^{'}(v|\lambda)}{\lambda p_{V}(v|\lambda)}] \\
    &= \frac{\alpha_{*}}{\sqrt{\delta}}\mathcal{I}_{\Lambda}(V_{\alpha_{*}})\\
    &=u_{*}(1-\delta)
\end{align}
where the first equality is due to the identity \eqref{eq: GIP identity} and the rest follows from definitions of $u_{*}$ and $\mathcal{I}_{\Lambda}(V_{\alpha_{*}})$. Now that we have shown that \eqref{eq: candidate} satisfies optimality conditions, taking anti derivative of \eqref{eq: candidate} gives $\mathcal{M}_{\psi_{*}}(v;\frac{\alpha^2_{*}(1-\delta) - \delta\sigma^2}{\delta(1-\delta)\lambda^2}) = -\log P_{V_{\alpha_{*}}}(v|\lambda)$ which can be inverted to give us the optimal convex potential whenever the $P_{V_{\alpha_{*}}}$ is log-concave by Proposition \ref{prop: ME inv}.

\end{proof}

\subsection{Proof Corollary \ref{cor: optimal for Gaussian}}

\begin{corollary}($\Psi_{*}$ for Gaussian $B$)
 Let Assumptions \ref{ass: HDA},\ref{ass: Gaussian feat and noise} and \ref{ass: true dist} hold and $B\sim \mathcal{N}(0,1)$, then the optimal implicit bias is given as 
 \begin{equation}
  \label{eq: optimal for Gaussian apx}
  \Psi_*(\bm \beta) = \bm \beta^{T}\bm\Sigma^{1/2}\left(\frac{\sigma^2}{1-\delta} \bm I_{n} + \bm\Sigma\right)^{-1}\bm\Sigma^{1/2} \bm \beta.
 \end{equation}
\end{corollary}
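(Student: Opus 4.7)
The plan is to invoke Theorem \ref{thm: optimal potential} directly, observing that for $B\sim\mathcal{N}(0,1)$ the conditional density $P_{V_{\alpha_*}}(v|\lambda)$ is itself Gaussian, hence automatically log-concave, so the hypothesis of the theorem is satisfied without any case analysis. Specifically, since $V_{\alpha_*} = B + \frac{\alpha_*}{\sqrt{\delta}\Lambda}H$ with $B$ and $H$ independent standard Gaussians, conditioning on $\Lambda=\lambda$ gives $V_{\alpha_*}\mid\lambda \sim \mathcal{N}\!\left(0,\; \kappa(\lambda)\right)$ where $\kappa(\lambda) := 1 + \frac{\alpha_*^2}{\delta\lambda^2}$. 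Thus $\log P_{V_{\alpha_*}}(v|\lambda) = -\frac{v^2}{2\kappa(\lambda)} + c(\lambda)$ for a $v$-independent constant $c(\lambda)$ that will be discarded, as additive constants do not affect the optimization \eqref{eq: sep implicit bias}.

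The next step is to compute the Moreau envelope explicitly. Writing $\tau(\lambda) := \frac{\alpha_*^2(1-\delta)-\delta\sigma^2}{\delta(1-\delta)\lambda^2}$, I would solve
\begin{equation*}
\mathcal{M}_{-y^2/2\kappa(\lambda)}(v;\tau(\lambda)) \;=\; \min_{y\in\mathbb{R}} \left\{-\frac{y^2}{2\kappa(\lambda)} + \frac{(v-y)^2}{2\tau(\lambda)}\right\}
\end{equation*}
by first-order optimality, which yields $y_* = \frac{\kappa(\lambda)\,v}{\kappa(\lambda)-\tau(\lambda)}$ whenever $\kappa(\lambda) > \tau(\lambda)$; substituting back gives the closed form $-\frac{v^2}{2(\kappa(\lambda)-\tau(\lambda))}$. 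By \eqref{eq: optimal potential} this produces $\psi_*(v,\lambda) = \frac{v^2}{2(\kappa(\lambda)-\tau(\lambda))}$ up to an additive constant.

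The pleasant simplification is that $\alpha_*$ cancels out in $\kappa(\lambda)-\tau(\lambda)$: rewriting both terms over the common denominator $\delta(1-\delta)\lambda^2$, the $\alpha_*^2(1-\delta)$ contributions annihilate, leaving
\begin{equation*}
\kappa(\lambda)-\tau(\lambda) \;=\; 1 + \frac{\sigma^2}{(1-\delta)\lambda^2} \;=\; \frac{\lambda^2 + \sigma^2/(1-\delta)}{\lambda^2}.
\end{equation*}
Substituting back, and absorbing the harmless factor of $\tfrac{1}{2}$ into the scaling of the potential, I obtain $\psi_*(v,\lambda) = \frac{\lambda^2 v^2}{\lambda^2 + \sigma^2/(1-\delta)}$. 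Summing $\psi_*(\beta_i,\Sigma_{i,i}^{1/2})$ over $i$ with $\Sigma_{i,i}=\lambda_i^2$ assembles the coordinate-wise expression into the matrix form \eqref{eq: optimal for Gaussian apx}, since $\bm\Sigma$ is diagonal under Assumption \ref{ass: true dist} and the diagonal entries of $\bm\Sigma^{1/2}(\bm\Sigma+\tfrac{\sigma^2}{1-\delta}\bm I_n)^{-1}\bm\Sigma^{1/2}$ are precisely $\Sigma_{i,i}/(\Sigma_{i,i}+\tfrac{\sigma^2}{1-\delta})$.

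The proof is essentially a direct calculation, so there is no genuine obstacle; the only thing to verify is that $\kappa(\lambda)-\tau(\lambda) > 0$ (so that the minimization in the Moreau envelope is well-defined despite the integrand being a difference of quadratics), and this is immediate from the closed form $\kappa(\lambda)-\tau(\lambda) = 1 + \frac{\sigma^2}{(1-\delta)\lambda^2} > 0$ under Assumption \ref{ass: true dist}. I would note in passing the cancellation of $\alpha_*$ as the structural reason the optimal potential admits a closed form without needing to solve \eqref{eq: lower bound} explicitly, which is rather special to the Gaussian prior.
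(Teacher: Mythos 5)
Your proof is correct and follows essentially the same route as the paper's: specialize Theorem~\ref{thm: optimal potential} to Gaussian $B$, observe that $V_{\alpha_*}\mid\lambda$ is Gaussian (hence log-concave) with variance $\kappa(\lambda)=1+\alpha_*^2/(\delta\lambda^2)$, compute the Moreau envelope of the negative log-density in closed form, and assemble the coordinate-wise quadratic into the matrix expression. Your explicit remark that $\alpha_*$ cancels in $\kappa(\lambda)-\tau(\lambda)$ is a nice touch that the paper leaves implicit, and your resulting denominator $\lambda^2+\sigma^2/(1-\delta)$ matches the theorem's statement (the paper's own writeup contains a stray factor of $\delta$ in that denominator at an intermediate step, which you correctly do not reproduce).
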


\begin{proof}
When $B\sim \mathcal{N}(0,1)$, then $P_{V_{\alpha_{*}}}(v|\lambda)$ is Gaussian density
\begin{equation}
    P_{V_{\alpha_{*}}}(v|\lambda) = c_{0}\exp{\left(-\frac{v^2}{2(1+\frac{\alpha_{*}^2}{\delta\lambda^2})}\right)}
\end{equation}
where $c_0$ is a constant independent of $v$. Note that $P_{V_{\alpha_{*}}}(v|\lambda)$ is log-concave, by Theorem \ref{thm: optimal potential}, we have that the optimal potential is given as 

\begin{align}
    \psi_{*}(v,\lambda) &=  - \mathcal{M}_{\log(P_{V_{\alpha_{*}}}(v|\lambda))}\left(v;\frac{\alpha^2_{*}(1-\delta) - \delta\sigma^2}{\delta(1-\delta)\lambda^2}\right)
\end{align}
Solving the Moreau envelope gives us
\begin{equation}
    \psi_{*}(v,\lambda) = \frac{\lambda^2v^2}{2(\lambda^2 + \frac{\delta\sigma^2}{1-\delta})} + c_1
\end{equation}
Here, $c_1$ is independent of $v$, so we can ignore it. In the vectorized form,  $\Psi_{*}(\bm \beta) = \sum_{i=1}^{n}\psi_{*}(\beta_i, \Sigma_{i,i})$ gives the desired result.
 \begin{equation}
  \Psi_*(\bm \beta) = \frac{1}{2}\bm \beta^{T}\bm\Sigma^{1/2}\left(\frac{\sigma^2}{1-\delta} \bm I_{n} + \bm\Sigma\right)^{-1}\bm\Sigma^{1/2} \bm \beta.
 \end{equation}
\end{proof}

\end{document}